\theoremstyle{plain}
\newtheorem{theorem}{Theorem}[section]
\newtheorem*{theorem*}{Theorem}
\theoremstyle{definition}
\newtheorem{definition}[theorem]{Definition} 
\newtheorem{claim}[theorem]{Claim}
\newtheorem{lemma}[theorem]{Lemma}
\newtheorem*{lemma*}{Lemma}
\newtheorem{corollary}[theorem]{Corollary}
\newtheorem{fact}[theorem]{Fact}
\newtheorem*{remark}{Remark}
\title{A Private Approximation of the 2nd-Moment Matrix of Any Subsamplable Input}
\author{%
  Bar Mahpud \\
  Faculty of Engineering\\
  Bar-Ilan University\\
  Israel \\
  \texttt{mahpudb@biu.ac.il} \\
  \And
  Or Sheffet \\
  Faculty of Engineering\\
  Bar-Ilan University\\
  Israel \\
  \texttt{or.sheffet@biu.ac.il} \\
}
\begin{document}

\maketitle

\newcommand{\R}{\mathbb{R}}
\newcommand{\N}{\mathbb{N}}
\newcommand{\E}{\mathop{\mathbb{E}}}
\newcommand{\innerproduct}[2]{\langle #1, #2 \rangle}
\newcommand{\osnote}[1]{\textcolor{red}{#1}}
\newcommand{\bmnote}[1]{\textcolor{blue}{#1}}

\allowdisplaybreaks

\begin{abstract}
\noindent We study the problem of differentially private second moment estimation and present a new algorithm that achieve strong privacy-utility trade-offs even for worst-case inputs under subsamplability assumptions on the data. 
We call an input $(m,\alpha,\beta)$-\emph{subsamplable} if a random subsample of size $m$ (or larger) preserves w.p $\geq 1-\beta$ the spectral structure of the original second moment matrix up to a multiplicative factor of $1\pm \alpha$.
Building upon subsamplability, we give a recursive algorithmic framework  similar to \citet{kamath2019privatelylearninghighdimensionaldistributions} that abides zero-Concentrated Differential Privacy (zCDP) while preserving w.h.p~the accuracy of the second moment estimation upto an arbitrary factor of $(1\pm\gamma)$. 
We then show how to apply our algorithm to approximate the second moment matrix of a distribution $\mathcal{D}$, even when a noticeable \emph{fraction} of the input are outliers.
\end{abstract}

\section{Introduction}
\label{sec:intro}

Estimating the second moment matrix (or equivalently, the covariance matrix) of a dataset is a fundamental task in machine learning, statistics, and data analysis. In a typical setting, given a dataset of $n$ points in $\mathbb{R}^d$, one aims to compute an empirical second moment (or covariance) matrix that is close, in spectral norm, to the true second moment matrix. However, as modern datasets increasingly contain sensitive information, maintaining strong privacy guarantees has become a key consideration.

\par A natural way to protect sensitive data is through \emph{differential privacy} (DP).
In this paper, we focus on the zero-Concentrated Differential Privacy ({zCDP}) framework~\citep{bun2016concentrateddifferentialprivacysimplifications}, which offers elegant composition properties and somewhat tighter privacy-utility trade-offs compared to traditional $(\epsilon,\delta)$-DP.
While there have been works regarding the estimation of the second moment matrix (and PCA), they mostly focused on Gaussian input or well-conditioned input (see Related Work below). 
In contrast, our work focuses on a general setting, where the input's range is significantly greater than $\lambda_{\min}$, the least eigenvalue of the 2nd-moment matrix.

\par Suppose indeed we are in a situation where the first and least eigenvalues of the input's 2nd moment matrix are very different. By and large, this could emanate from one of two options: either it is the result of a few outliers, in which case it is unlikely to approximate the 2nd moment matrix well with DP; or it is the case that the underlying distribution of the input does indeed have very different variances along different axes, and here DP approximation of the input is plausible. Our work is focuses therefore on the latter setting, which we define using the notion of \emph{subsamplability}.
Namely that
from a sufficiently large random subsample, one can recover a spectral approximation to the original second moment matrix with high probability. This property resonates with classical matrix-concentration results (namely, matrix Bernstein bounds), yet~-- as our analysis shows~-- our subsamplability assumption offers a less nuanced path to controlling the tail behavior of the data. In this work we formalize this notion of subsamplability~-- which immediately gives a non-private approximation the data's second moment matrix, and show how it can be integrated into a privacy-preserving algorithm with some overhead.

\paragraph{Subsamplability Assumption.} Throughout this paper, we assume that our $n$-size input dataset is subsamplable, which we formally define as follows.
\begin{definition}($(m, \alpha, \beta)$-subsamplability)
    Let $X \subseteq \mathbb{R}^d$ be a dataset of $n$ points. Fix $m \leq n, \alpha > 0, \beta \in (0, 1)$. Let $\hat{X}_1, \dots, \hat{X}_{m'}$ be a random subsample of $m' \geq m$ points i.i.d from $X$. 
    Denote $\Sigma = \frac{1}{n}\sum\limits_{i \in [n]}X_i X_i^T$ and $\hat \Sigma = \frac {1} {m'} \sum\limits_{i \in [m']} \hat X_i \hat X_i^T$, then the dataset $X$ is \emph{$(m, \alpha, \beta)$-subsamplable} if:
    \[ \forall m' \geq m: \qquad \Pr[ (1-\alpha)\Sigma \preceq \hat \Sigma \preceq (1+\alpha)\Sigma  ] \geq 1-\beta \]
    \label{def:subsamplability}
\end{definition}
\vspace{-6mm}
By assuming subsamplability, we ensure that the critical spectral properties of the data are retained, enabling efficient and accurate private estimation. This assumption provides a tractable way to manage the inherent complexity of the problem while maintaining robustness to variations in the data.
On the contrapositive~--- when the data isn't subsamplable, estimating the second moment matrix becomes significantly more challenging. 
Furthermore, in the case where the $n$ input points are drawn i.i.d.~from some distribution (the case we study in Section~\ref{sec:applications}) subsamplability follows directly from the convergence of a large enough sample to the true (distributional) second moment matrix; alternatively, sans subsamplability we cannot estimate the distribution's second moment matrix.

It is important to note that our subsamplability assumption is \emph{weaker} than standard concentration bounds, which state that \emph{for any} $\alpha,\beta$ there exists $m(\alpha,\beta)$ such that a random subsample of $m$ (or more) points preserve w.p.$\geq 1-\beta$ the spectral structure of $\Sigma$ upto a $(1\pm \alpha)$-factor. Here we only require that for \emph{some} $\alpha,\beta$ there exists such a $m(\alpha,\beta)$, a distinction that allows us to cope even with a situation of a well-behaved distribution with outliers, as we discuss in Section~\ref{sec:applications}. In our analysis we require that $\alpha = O(1)$ (we set it as $\alpha \leq 1/2$ purely for the ease of analysis); however, our analysis does require a bound on the $\beta$ parameter, namely having $\beta = O(\alpha / \log(R))$, with $R$ denoting the bound on the $L_2$ norm of all points. It is an interesting open problem to replace this requirement on $\beta$ with a $O(1)$-bound as well.
It is also important to note that our result is \emph{stronger} than the baseline we establish: subsamplability implies that using (roughly) $\nicefrac m \epsilon$ samples, one can succesfully apply the subsample-and-aggregate framework~\citep{NissimRS07}  and obtain a $1\pm O(\alpha)$-approximation of the spectrum of the second moment matrix. In contrast, our algorithm can achieve a $(1\pm \gamma)$-approximation of the second moment matrix of the ``nice'' portion of the input even for $\gamma \ll \alpha$ (provided we have enough input points). Details below.

\paragraph{Contributions.}
First, we establish a \emph{baseline} for this problem. Under our subsamplability assumption, we use an off-the-shelf algorithm of~\cite{ashtiani2022private} that follows the ``subsample-and-aggregate” paradigm~\citep{NissimRS07} to privately return a matrix $\tilde \Sigma$ satisfying 
$(1-2\alpha)\Sigma \preceq \tilde\Sigma \preceq (1+2\alpha)\Sigma$. 

We then turn our attention to our algorithm, which is motivated by the same recursive approach given in~\cite{kamath2019privatelylearninghighdimensionaldistributions}: In each iteration we deal with an input $X$ whose 2nd moment matrix satisfy $I \preceq \Sigma \preceq \kappa I$, add noise (proportional to $\nicefrac\kappa m$) to its 2nd-moment matrix and find the subspace of large eigenvalue (those that are greater than $\psi \kappa$ for $\psi\approx \nicefrac 1 m$), and then apply a linear transformation $\Pi$ reducing the projection onto the subspace of large eigenvalues by $\nicefrac 1 2$, thereby reducing  the second moment matrix of $\Pi X$ so that it is $\preceq \frac{3\kappa}7I$. So we shrink $R$, the range of the input, to $\sqrt{\nicefrac 3 7}R$ and continue by recursion. Yet unlike~\cite{kamath2019privatelylearninghighdimensionaldistributions} who work with the underlying assumption that the input is Gaussian, we only know that our input is subsamplable, and so in our setting there could be input points whose norm is greater than $\sqrt{\nicefrac 3 7}R$ after applying $\Pi$ and whose norm we must shrink to fit in the $\sqrt{\nicefrac 3 7}R$-ball. So the bulk of our analysis focuses on these points that undergo shrinking, and show that they all must belong to a particular set we refer to as $P_{\text{tail}}$ (see Definition~\ref{def:tail}). We argue that there aren't too many of them (just roughly a $\nicefrac \beta m$-fraction of the input) and that even with shrinking these points, the second moment matrix of the input remains $\succeq I$. This allows us to recurse all the way down to a setting where $\kappa \propto m$, where all we have to do is to simply add noise to the 2nd moment matrix to obtain a $(1\pm \gamma)$-approximation w.h.p.

We then apply our algorithm to an ensemble of points drawn from a general distribution (even a \emph{heavy-tailed} one). So next we consider any distribution $\mathcal{D}$ with a finite second moment $\Sigma_{\mathcal D}$ where the vector $y=\Sigma_{\mathcal{D}}^{-1/2}x$ for $x\sim\mathcal{D}$ exhibits particular bounds (See Claim~\ref{clm:concentation_any_dist} for further details), and give concrete sample complexity bounds for our algorithm to approximate $\Sigma_{\cal D}$ up to a factor of $1\pm \gamma$ w.p. $1-\xi$. We then consider a mixture of such a well-behaved ${\cal D}$ with an $\eta$-fraction of \emph{outliers}. We show that our algorithm allows us to cope with the largest fraction of outliers (roughly $\tilde O(1/d)$) provided that the second moment matrix of the outliers $\Sigma_{\rm out}$ satisfies $\Sigma_{\rm out} \preceq O(\nicefrac 1 \eta)\Sigma_{\cal D}$. In contrast, the subsample and aggregate baseline (and other baselines too) not only requires a smaller bound on $\eta$ but also has a significantly large sample complexity bound.
Details appear in~\Cref{sec:applications}. 

\paragraph{Organization.}
After surveying related work in the remainder of \Cref{sec:intro}, we introduce necessary definitions and background in~\Cref{sec:preliminaries}. In \Cref{subsec:base-SMDP} we survey the baseline of~\cite{ashtiani2022private}, and in \Cref{subsec:Initial_parameters} we discuss using existing algorithms to estimate the initial parameters of the input we require, namely its range $R$ and its least eigenvalue $\lambda_{\min}$. Multiplying $R^2$ by $1/\lambda_{\min}$ we obtain an input that indeed satisfies $I \preceq   \Sigma \preceq \kappa I$ (with $\kappa=R^2/\lambda_{\min}$). Then, in~\Cref{subsec:main-alg} we present our algorithm and state its utility theorem, which we prove in~\Cref{subsec:analysis}. Finally, \Cref{sec:applications} illustrates how to apply our framework to a general (potentially heavy tailed) distribution, including the case of a noticeable fraction of outliers.

\paragraph{Related Work.}  
Differential privacy has been extensively studied in the context of mean and covariance estimation, particularly in high-dimensional regimes. Early work by~\citet{DworkTT014} proposed private PCA for worst-case bounded inputs via direct perturbation of the second moment matrix, laying foundational tools for differentially private matrix estimation. Subsequently,~\citet{NissimRS07} introduced the subsample-and-aggregate framework, which has since become a standard paradigm for constructing private estimators under structural assumptions.

A significant body of research has focused on learning high-dimensional Gaussian distributions under differential privacy. \citet{kamath2019privatelylearninghighdimensionaldistributions} introduced a recursive private preconditioning technique for Gaussian and product distributions, achieving nearly optimal bounds while relying on the assumption of a well-behaved (Gaussian) input. Their approach underlies several subsequent advances in private estimation. Building on these ideas,~\citet{kamath2022privatecomputationallyefficientestimatorunbounded} proposed a polynomial-time algorithm for privately estimating the mean and covariance of unbounded Gaussians. Their algorithm, which incorporated a novel private preconditioning step, improved both accuracy and computational efficiency.

\citet{ashtiani2022private} proposed a general framework that reduces private estimation to its non-private analogue. This yielded efficient, approximate-DP estimators for unrestricted Gaussians with optimal (up to logarithmic factors) sample complexity. Their method also demonstrated the power of reduction-based techniques in bridging private and non-private statistics. \citet{adenali2020samplecomplexityprivatelylearning} gave near-optimal bounds for agnostically learning multivariate Gaussians under approximate DP, while~\citet{amin2019covariance} and~\citet{dong2022differentiallyprivatecovariancerevisited} revisited the task of private covariance estimation under $\epsilon$-DP and zCDP, respectively. These works introduced trace- and tail-sensitive algorithms for better handling of data heterogeneity.

Recent work has emphasized robustness and practical applicability. For example,~\citet{biswas2022coinpresspracticalprivatemean} introduced a robust and accurate mean/covariance estimator for sub-Gaussian data, and~\citet{kothari2021privaterobustestimationstabilizing} developed a robust, polynomial-time estimator resilient to adversarial outliers. Further,~\citet{Alabi2023} presented near-optimal, computationally efficient algorithms for privately estimating multivariate Gaussian parameters in both pure and approximate DP models.

A particularly notable contribution is by \cite{brown2023fastsampleefficientaffineinvariantprivate}, who studied the problem of differentially private covariance-aware mean estimation under sub-Gaussian assumptions. They introduced a polynomial-time algorithm that achieves strong Mahalanobis distance guarantees with nearly optimal sample complexity. 
Their techniques also extend to distribution learning tasks with provable guarantees on total variation distance.

Our algorithm outperforms prior methods that rely on per-point bounded leverage and residual conditions---most notably the private covariance estimation algorithm of Brown et al.~\cite{brown2023fastsampleefficientaffineinvariantprivate}---in settings where the dataset may contain a small fraction of outliers or where individual points may exhibit high leverage scores, but the \emph{global spectral structure} is preserved in random subsamples. 
Unlike their algorithm, which requires strong uniform constraints on every data point (i.e., no large leverage scores), our method only assumes a subsamplability condition that holds \emph{with high probability} over random subsamples. This allows us to tolerate the presence of many multiple outliers, provided they do not dominate the overall spectrum. Moreover, our algorithm is tailored for \emph{second moment estimation}, and achieves strong utility guarantees even when the second moment matrix has a large condition number~-- a regime where the estimator of~\cite{brown2023fastsampleefficientaffineinvariantprivate} may incur significant error with the presence of outlier correlated with the directions of small eigenvalues. A more elaborated discussion demonstrating this setting appears in Section~\ref{subsec:outliers}.



\section{Preliminaries}
\label{sec:preliminaries}

Throughout the paper, we assume that our instance of dataset is subsamplable, as given in Definition~\ref{def:subsamplability}.



\paragraph{Notations.} Let \( \mathcal{S}^{d-1} \) denote The \textit{unit sphere} in \( \mathbb{R}^{d} \), which is defined as the set of all points in \( d \)-dimensional Euclidean space that have unit norm, i.e.,
$
\mathcal{S}^{d-1} = \{ x \in \mathbb{R}^{d} \mid \| x \|_2 = 1 \}
$.
Here, the superscript \( d-1 \) indicates that the unit sphere is an object of intrinsic dimension \( d-1 \) embedded in \( \mathbb{R}^{d} \).
\\
Let $\rm GUE(\sigma^2)$ denote the distribution over $d \times d$ symmetric matrices $N$ where for all $i \leq j$, we have $N_{ij} \sim \mathcal{N}(0, \sigma^2)$ i.i.d.. From basic random matrix theory, we have the following guarantee.

\begin{fact}[see e.g. \cite{taotopics} Corollary 2.3.6]\label{fact:gaussian-bound}
    For $d$ sufficiently large, there exist absolute constants $C, c > 0$ such that:
    $ \Pr\limits_{N \sim \rm GUE(\sigma^2)}[\|N\|_2 > A\sigma\sqrt{d}] \leq C e^{-cAd}$    for all $A \geq C$.
\end{fact}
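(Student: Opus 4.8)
This is a classical bound on the spectral norm of a Gaussian (GUE) matrix; rather than merely invoke \cite{taotopics}, I sketch a short self-contained argument via an $\varepsilon$-net, which already yields a tail of the stated form (in fact a slightly stronger one). The plan has three steps. First, \emph{discretize}: since $N$ is symmetric, $\|N\|_2 = \sup_{x \in \mathcal{S}^{d-1}} |x^\top N x|$, and for a fixed $\tfrac14$-net $\mathcal{T}$ of $\mathcal{S}^{d-1}$ (which can be taken with $|\mathcal{T}| \le 9^d$) the standard covering argument gives $\|N\|_2 \le 2 \max_{x \in \mathcal{T}} |x^\top N x|$. So it suffices to control a single quadratic form and then union bound.

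Second, \emph{control one quadratic form}. For a fixed unit vector $x$, write $x^\top N x = \sum_i x_i^2 N_{ii} + 2\sum_{i<j} x_i x_j N_{ij}$; since the entries $\{N_{ij}\}_{i\le j}$ are independent $\mathcal{N}(0,\sigma^2)$, this is a centered Gaussian whose variance equals $\sigma^2\bigl(\sum_i x_i^4 + 4\sum_{i<j} x_i^2 x_j^2\bigr)$. Using $(\sum_i x_i^2)^2 = 1$, this variance is at most $2\sigma^2$, and hence $\Pr[|x^\top N x| > t] \le 2\exp(-t^2/(4\sigma^2))$ for all $t > 0$.

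Third, \emph{union bound and optimize}. With $t = \tfrac12 A\sigma\sqrt{d}$ we get
\[
\Pr[\|N\|_2 > A\sigma\sqrt{d}] \;\le\; \Pr\Bigl[\max_{x\in\mathcal T}|x^\top N x| > \tfrac12 A\sigma\sqrt{d}\Bigr] \;\le\; 2 \cdot 9^d \exp\!\left(-\frac{A^2 d}{16}\right) \;=\; 2\exp\!\left(d\Bigl(\ln 9 - \frac{A^2}{16}\Bigr)\right).
\]
Choosing the absolute constant $C$ large enough that $\ln 9 - A^2/16 \le -cA$ for all $A \ge C$ (e.g.\ $c = 1/32$ together with $C$ chosen so that $A^2/16 - A/32 \ge \ln 9$ whenever $A \ge C$) gives $\Pr[\|N\|_2 > A\sigma\sqrt{d}] \le 2e^{-cAd}$, which is the claimed inequality after absorbing the constant $2$ (e.g.\ into $C$, since $2 \le C$, or into $c$ using $d \ge 1$). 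Note this argument needs no lower bound on $d$.

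There is no genuinely hard step here. The only two points requiring care are: (i) the covering bookkeeping — the factor $2$ comes precisely from using a $\tfrac14$-net, and $|\mathcal{T}| \le (1+2/\varepsilon)^d = 9^d$; and (ii) the variance of $x^\top N x$, where one must remember that each off-diagonal GUE entry is shared by $N_{ij}$ and $N_{ji}$, so it enters the variance with weight $4x_i^2x_j^2$, not $2x_i^2x_j^2$. Two alternative routes would also work: the moment (trace) method — bounding $\E[\operatorname{tr}(N^{2k})]$ by a count of closed lattice walks (Catalan numbers) and applying Markov's inequality, which is essentially the approach of \cite{taotopics} — and Gaussian concentration of measure applied to the $1$-Lipschitz map $N \mapsto \|N\|_2$ around its mean $\E\|N\|_2 = O(\sigma\sqrt{d})$. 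The net argument is the shortest to make fully rigorous at the crude level of precision needed here, since we do not require the sharp edge constant $2\sigma\sqrt{d}$; pinning down that constant is the genuinely delicate part of the theory, but it is irrelevant for the stated bound.
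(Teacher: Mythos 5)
The paper does not prove this statement; it is presented as a \emph{Fact} and attributed to Corollary~2.3.6 of \citet{taotopics}, so there is no in-paper argument to compare against. Your $\varepsilon$-net proof is correct and self-contained, and it is a genuine (and more elementary) alternative to the moment/trace method that Tao uses to establish that corollary. All the steps check out: the covering inequality $\|N\|_2\le \frac{1}{1-2\varepsilon}\max_{y\in\mathcal T}|y^\top Ny|$ with $\varepsilon=\nicefrac14$ gives the factor $2$; the variance of $x^\top Nx$ is $\sigma^2\bigl(\sum_i x_i^4+4\sum_{i<j}x_i^2x_j^2\bigr)=\sigma^2\bigl(1+2\sum_{i<j}x_i^2x_j^2\bigr)\le 2\sigma^2$ (the factor $4$ on the off-diagonals, which you flag, is exactly the point people get wrong); and the union bound gives $2\cdot 9^d\exp(-A^2d/16)$, which is dominated by $Ce^{-cAd}$ once $C$ is large enough that $A^2/16-cA\ge\ln 9$ for all $A\ge C$. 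As you observe, this actually yields the stronger Gaussian-type tail $\exp(-\Omega(A^2d))$ and needs no largeness assumption on $d$, both of which are stronger than what the paper states; the extra slack does not matter here since the paper only ever invokes the Fact with $A$ a fixed constant. In short: correct, and a reasonable elementary substitute for the cited reference.
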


\begin{definition}[Differential Privacy~\citep{Dwork06}]
A randomized algorithm $\mathcal{A}$ satisfies $(\epsilon, \delta)$-differential privacy if, for all datasets $D$ and $D'$ differing in at most one element, and for all measurable subsets $S$ of the output space of $\mathcal{A}$, it holds that:
\[
\Pr[\mathcal{A}(D) \in S] \leq e^{\epsilon} \Pr[\mathcal{A}(D') \in S] + \delta.
\]
\end{definition}

\begin{definition}[Zero-Concentrated Differential Privacy (zCDP)~\citep{bun2016concentrateddifferentialprivacysimplifications}]
A randomized algorithm $\mathcal{A}$ satisfies $\rho$-zero-concentrated differential privacy ($\rho$-zCDP) if, for all datasets $D$ and $D'$ differing in at most one element, and for all $\alpha > 1$, the Rényi divergence of order $\alpha$ between the output distributions of $\mathcal{A}$ on $D$ and $D'$ is bounded by $\rho \alpha$, i.e.,
$
D_{\alpha}(\mathcal{A}(D) \| \mathcal{A}(D')) \leq \rho \alpha
$.
Here, $\rho \geq 0$ is the privacy parameter that controls the trade-off between privacy and utility, and $D_{\alpha}$ denotes the Rényi divergence of order $\alpha$.
\end{definition}

\begin{theorem}[\cite{bun2016concentrateddifferentialprivacysimplifications}]
If a randomized algorithm $\mathcal{A}$ satisfies $\rho$-zero-concentrated differential privacy ($\rho$-zCDP), then $\mathcal{A}$ also satisfies $(\epsilon, \delta)$-differential privacy for any $\delta > 0$, where:
$
\epsilon = \rho + \sqrt{2\rho \ln\left(\frac{1}{\delta}\right)}
$.
\end{theorem}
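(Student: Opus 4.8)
\noindent The natural route is through the \emph{privacy loss random variable}. Fix neighboring datasets $D,D'$, write $P$ and $Q$ for the output distributions $\mathcal{A}(D)$ and $\mathcal{A}(D')$, and (assuming for now $P\ll Q$, the singular part being absorbed into the additive $\delta$) let $\mathcal{L}(y)=\log\frac{dP}{dQ}(y)$ be the privacy loss. The first step is the standard observation that it suffices to control the upper tail of $\mathcal{L}$: for any measurable set $S$,
\[
\Pr[\mathcal{A}(D)\in S]=\int_{S\cap\{\mathcal{L}\le\epsilon\}}dP+\int_{S\cap\{\mathcal{L}>\epsilon\}}dP\le e^{\epsilon}\Pr[\mathcal{A}(D')\in S]+\Pr_{Y\sim P}[\mathcal{L}(Y)>\epsilon],
\]
so that proving $\Pr_{Y\sim P}[\mathcal{L}(Y)>\epsilon]\le\delta$ — together with the symmetric statement, which holds for the same reason since ``neighboring'' is a symmetric relation — establishes $(\epsilon,\delta)$-DP.

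Second, I would translate the $\rho$-zCDP hypothesis into a moment generating function bound on $\mathcal{L}$. Unwinding the definition of the R\'enyi divergence, for every $\alpha>1$,
\[
\E_{Y\sim P}\!\left[e^{(\alpha-1)\mathcal{L}(Y)}\right]=\E_{Y\sim P}\!\left[\left(\tfrac{dP}{dQ}(Y)\right)^{\alpha-1}\right]=e^{(\alpha-1)D_{\alpha}(P\|Q)}\le e^{(\alpha-1)\alpha\rho},
\]
where the last inequality is exactly $D_\alpha(P\|Q)\le\rho\alpha$. Thus, with $t=\alpha-1$ ranging over $(0,\infty)$, the privacy loss obeys $\E_P[e^{t\mathcal{L}}]\le e^{\rho t^2+\rho t}$; equivalently $\mathcal{L}-\rho$ satisfies a one-sided sub-Gaussian MGF bound.

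Third, I would apply a Chernoff bound and optimize the order. For any $t>0$,
\[
\Pr_{Y\sim P}[\mathcal{L}(Y)>\epsilon]\le e^{-t\epsilon}\,\E_P\!\left[e^{t\mathcal{L}}\right]\le \exp\!\left(\rho t^2-(\epsilon-\rho)t\right),
\]
and minimizing the exponent over $t>0$ (which forces the harmless restriction $\epsilon>\rho$, i.e.\ $\alpha>1$) gives a bound of the form $\exp\!\big(-\Theta((\epsilon-\rho)^2/\rho)\big)$. Setting this equal to $\delta$ and solving for $\epsilon$ produces the claimed relation $\epsilon=\rho+\sqrt{2\rho\ln(1/\delta)}$.

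I do not expect a genuine obstacle here: the statement is essentially an exercise in the equivalence between R\'enyi divergence and the MGF of the privacy loss. The only points that need care are (i) the measure-theoretic bookkeeping when $P\not\ll Q$, which is swallowed by $\delta$; (ii) confining the optimization to $\alpha>1$, which is why the bound is only meaningful for $\epsilon>\rho$; and (iii) squeezing the sharp constant out of the Chernoff step, where a slightly more careful estimate than the crude $\Pr[\mathcal{L}>\epsilon]\le e^{-t\epsilon}\E[e^{t\mathcal{L}}]$ is what is needed to land exactly on $\sqrt{2\rho\ln(1/\delta)}$ rather than a larger absolute constant.
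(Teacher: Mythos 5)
The paper states this as an imported result from Bun and Steinke and does not reprove it, so the relevant comparison is against their argument---and your plan is exactly that argument: introduce the privacy loss $\mathcal{L}$, reduce $(\epsilon,\delta)$-DP to a one-sided tail bound on $\mathcal{L}$ under $P$ (plus its symmetric twin), convert the R\'enyi hypothesis $D_\alpha(P\|Q)\le\rho\alpha$ into the MGF bound $\E_P[e^{t\mathcal{L}}]\le e^{\rho t^2+\rho t}$ via $t=\alpha-1$, and finish with Chernoff plus optimization over $t>0$. Your handling of $P\not\ll Q$ and of the restriction to $\alpha>1$ is fine.

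The genuine problem is your item (iii), and it is a problem of a different kind than you suggest. Finish your own computation: $\rho t^2-(\epsilon-\rho)t$ is minimized at $t^\star=(\epsilon-\rho)/(2\rho)$, which gives $\Pr_P[\mathcal{L}>\epsilon]\le\exp\bigl(-(\epsilon-\rho)^2/(4\rho)\bigr)$, and solving $\exp\bigl(-(\epsilon-\rho)^2/(4\rho)\bigr)=\delta$ yields $\epsilon=\rho+2\sqrt{\rho\ln(1/\delta)}$, i.e.\ $\sqrt{4\rho\ln(1/\delta)}$ in the second term, not $\sqrt{2\rho\ln(1/\delta)}$. There is no ``more careful Markov step'' that buys back the missing factor of $\sqrt2$: $\rho+2\sqrt{\rho\ln(1/\delta)}$ is precisely what Bun and Steinke prove (their Proposition~1.3), and the Gaussian mechanism (which saturates $\rho$-zCDP) shows the constant $2$ there is essentially the right one. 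The expression $\rho+\sqrt{2\rho\ln(1/\delta)}$ in the theorem as transcribed in this paper appears to be a misprint of the cited result. So the correct resolution of (iii) is to carry out the optimization explicitly, report the bound $\rho+2\sqrt{\rho\ln(1/\delta)}$ you actually obtain, observe that it matches the reference, and flag the discrepancy with the paper's statement---not to promise a sharpening of the Chernoff step that does not exist.
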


\begin{theorem}[Composition Theorem for \(\rho\)-zCDP]
    Let \(\mathcal{M}_1\) and \(\mathcal{M}_2\) be two independent mechanisms that satisfy \(\rho_1\)-zCDP and \(\rho_2\)-zCDP, respectively. Then their composition \(\mathcal{M}_1 \circ \mathcal{M}_2\) satisfies \((\rho_1 + \rho_2)\)-zCDP.
\end{theorem}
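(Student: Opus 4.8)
The plan is to reduce the statement to the additivity (tensorization) of R\'enyi divergence across independent coordinates, and then apply the per-mechanism guarantees termwise. Fix an arbitrary pair of datasets $D,D'$ differing in a single element and an arbitrary order $\alpha>1$. Because $\mathcal{M}_1$ and $\mathcal{M}_2$ are run independently, the output law of $\mathcal{M}_1\circ\mathcal{M}_2$ on $D$ is the product measure $\mathcal{M}_1(D)\otimes\mathcal{M}_2(D)$, and likewise on $D'$. Hence everything hinges on the identity
\[
D_\alpha\!\left(P_1\otimes P_2 \,\big\|\, Q_1\otimes Q_2\right) \;=\; D_\alpha(P_1\|Q_1) + D_\alpha(P_2\|Q_2),
\]
applied with $P_i=\mathcal{M}_i(D)$ and $Q_i=\mathcal{M}_i(D')$.

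To establish this identity I would start from the definition $D_\alpha(P\|Q)=\frac{1}{\alpha-1}\log\,\E_{x\sim Q}\!\left[\left(\tfrac{dP}{dQ}(x)\right)^{\alpha}\right]$, working with densities with respect to a common dominating measure. For product measures the Radon--Nikodym derivative factorizes, $\tfrac{d(P_1\otimes P_2)}{d(Q_1\otimes Q_2)}(x,y)=\tfrac{dP_1}{dQ_1}(x)\cdot\tfrac{dP_2}{dQ_2}(y)$, so by independence and Fubini the expectation over $Q_1\otimes Q_2$ splits as a product of the two marginal expectations. Taking logarithms turns this product into a sum, and dividing by $\alpha-1$ yields exactly the displayed additivity. (Should one want the adaptive variant, in which $\mathcal{M}_2$ may additionally read the output of $\mathcal{M}_1$, the same computation is run conditionally on the first output and the tower property is invoked; this is not needed for the independent statement here.)

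Finally I would assemble the pieces: for every $\alpha>1$,
\[
D_\alpha\!\left((\mathcal{M}_1\circ\mathcal{M}_2)(D)\,\big\|\,(\mathcal{M}_1\circ\mathcal{M}_2)(D')\right) = D_\alpha(\mathcal{M}_1(D)\|\mathcal{M}_1(D')) + D_\alpha(\mathcal{M}_2(D)\|\mathcal{M}_2(D')) \le \rho_1\alpha + \rho_2\alpha = (\rho_1+\rho_2)\alpha,
\]
using $\rho_1$-zCDP of $\mathcal{M}_1$ and $\rho_2$-zCDP of $\mathcal{M}_2$. As $D,D'$ and $\alpha$ were arbitrary, this is precisely the definition of $(\rho_1+\rho_2)$-zCDP. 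I do not expect a genuine obstacle here: the only substantive step is the tensorization identity, which is a one-line Fubini argument; the care needed is purely in the bookkeeping --- checking that a common dominating measure exists so the densities and their factorization are well-defined, and that the derived bound is uniform in $\alpha$.
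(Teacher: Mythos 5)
Your argument is correct and is essentially the standard proof of zCDP composition (Bun--Steinke, Lemma 1.7 / Proposition 1.9 in their paper). The paper itself does not prove this theorem --- it is imported as a known fact --- so there is no in-paper proof to compare against; your reduction to the tensorization identity $D_\alpha(P_1\otimes P_2\,\|\,Q_1\otimes Q_2)=D_\alpha(P_1\|Q_1)+D_\alpha(P_2\|Q_2)$ via factorization of Radon--Nikodym derivatives and Fubini is exactly the textbook route, and the remark about the adaptive variant (conditioning on the first output and using the tower property) is the right thing to say for completeness.

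One small bookkeeping point worth fixing if you write this out in full: the existence of a common dominating measure is automatic here (take $\mu_i = P_i + Q_i$ on each factor and $\mu_1\otimes\mu_2$ on the product), and you should also note the degenerate case where $Q_i$ does not dominate $P_i$ --- there $D_\alpha(P_i\|Q_i)=+\infty$, which is excluded by the $\rho_i$-zCDP hypothesis, so you may assume absolute continuity throughout. With that caveat stated, the proof is complete and uniform in $\alpha>1$ as required.
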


\section{Technical Analysis}
 \subsection{Baseline}
\label{subsec:base-SMDP}

In this section, we provide a baseline for the problem of 2nd-moment estimation using subsample and aggregate framework~\citep{NissimRS07}. For the lack of space we move the entire discussion of the baseline to Appendix~\ref{apx_sec:base-SMDP}, and only cite here the conclusion.

\begin{theorem}
Let $\xi$, $\epsilon$, $\delta$ be parameters, and let $X \subseteq \mathbb{R}^d$ be a $(m, \alpha, \beta)$-subsamplable set of $n$ points. Then, there exists an algorithm for which the following properties hold:
\begin{enumerate}
    \item The algorithm is $(2\epsilon, 4e^{\epsilon}\delta)$-Differential Private.
    \item The algorithm returns $\tilde{\Sigma}$ satisfying $\|\Sigma^{-1/2}\tilde\Sigma \Sigma^{-1/2}  -I \| \leq 2\alpha$, where $\Sigma = \frac{1}{n}XX^\top$.
\end{enumerate}
These guarantees hold under the following conditions:
\begin{enumerate}
    \item The dataset size satisfies:
    $n \geq 800m \cdot$~  \resizebox{0.6\textwidth}{!}{
    $\max\left\{
    \sqrt{\frac{2d(d + \nicefrac{1}{\eta^2})}{\epsilon}}, 
    \frac{8d\sqrt{\ln(\nicefrac{2}{\delta})}}{\epsilon}, 
    \frac{8\ln(\nicefrac{2}{\delta})}{\epsilon}, 
    \frac{12\sqrt{d\ln(\nicefrac{2}{\delta})}}{\epsilon\eta}, 
    \frac{\ln\left(1 + \frac{e^\epsilon - 1}{2\delta}\right)}{80\epsilon}
    \right\}$}
    where $\eta = \frac{\alpha}{48C(\sqrt{d} + \sqrt{\ln(\nicefrac{4}{\xi})})}$ for a sufficiently large constant $C>0$.
    \item The subsamplability parameters satisfy $m \geq {2\beta n}/{\xi}.$
\end{enumerate}
\end{theorem}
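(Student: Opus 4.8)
The plan is to obtain this statement as an instantiation of the generic ``private-from-non-private'' reduction of~\cite{ashtiani2022private} --- a quantitative form of the subsample-and-aggregate paradigm~\citep{NissimRS07} --- fed with the trivial non-private estimator and the affine-invariant (multiplicative spectral) metric $d(A,B)=\|A^{-1/2}BA^{-1/2}-I\|$ on positive definite matrices. The non-private base estimator is: draw a random subsample of size $m$ from $X$ and output its empirical second moment $\hat\Sigma$. With this choice, Definition~\ref{def:subsamplability} is \emph{precisely} the statement that the base estimator is $(\alpha,\beta)$-accurate in $d$, i.e.\ $\Pr[d(\Sigma,\hat\Sigma)\le\alpha]\ge 1-\beta$; this is the only property of the data we ever use, so the whole argument reduces to feeding the right numbers into~\cite{ashtiani2022private}.

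First I would fix the construction: draw $k=\lfloor n/m\rfloor$ independent subsamples of size $m$ i.i.d.\ from $X$ (so Definition~\ref{def:subsamplability} applies verbatim to each), run the base estimator on each to get $\hat\Sigma_1,\dots,\hat\Sigma_k$, and feed $\{\hat\Sigma_j\}_{j\le k}$ into the private aggregator of~\cite{ashtiani2022private}: an $(\epsilon,\delta)$-DP procedure that, when a $(1-o(1))$-fraction of its inputs lie in a common small ball (in $d$), privately returns a point of a slightly inflated copy of that ball. Here $k=\lfloor n/m\rfloor$ is the number of ``votes'' we hand the aggregator, and the hypothesis $n\ge 800m\cdot\max\{\dots\}$ is exactly the assertion that $k$ exceeds --- by a factor absorbing the union-bound and constant slack --- the sample-complexity requirement of this aggregator in ambient dimension $\Theta(d^2)$ at resolution $\eta$, confidence $\xi/2$ and privacy $(\epsilon,\delta)$; this is where the five terms in the $\max$ come from (the $\sqrt{d(d+1/\eta^2)/\epsilon}$ term is the Gaussian-mechanism cost, the $d\sqrt{\ln(1/\delta)}/\epsilon$ and $\sqrt{d\ln(1/\delta)}/(\epsilon\eta)$ terms the approximate-DP overhead, and so on, with $1/\eta^2=\Theta(d/\alpha^2)$).

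Next, the \emph{good event}: a union bound over the $k$ subsamples fails with probability $\le k\beta\le(n/m)\beta\le\xi/2$, the last inequality being the hypothesis $m\ge 2\beta n/\xi$. Hence with probability $\ge 1-\xi/2$ \emph{every} $\hat\Sigma_j$ satisfies $(1-\alpha)\Sigma\preceq\hat\Sigma_j\preceq(1+\alpha)\Sigma$ simultaneously, i.e.\ all whitened estimates $\Sigma^{-1/2}\hat\Sigma_j\Sigma^{-1/2}$ lie in the operator ball $\{M:\|M-I\|\le\alpha\}$. Conditioned on this (and at the cost of a further $\le\xi/2$ failure probability from the aggregator, for $\xi$ in total), the procedure of~\cite{ashtiani2022private} performs a private robust aggregation of the $\hat\Sigma_j$ --- morally a noisy robust mean in $d$ --- and returns $\tilde\Sigma$ whose whitened form lies within operator distance $\alpha$ of $I$ (since the estimates do and this region is convex) plus the operator norm of the Gaussian (GUE-type) noise added for privacy. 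By Fact~\ref{fact:gaussian-bound} that norm is $O\!\big(\eta(\sqrt d+\sqrt{\ln(4/\xi)})\big)$, and the stated $\eta=\tfrac{\alpha}{48C(\sqrt d+\sqrt{\ln(4/\xi)})}$ is calibrated so this is $\le\alpha$; therefore $\|\Sigma^{-1/2}\tilde\Sigma\Sigma^{-1/2}-I\|\le\alpha+\alpha=2\alpha$. The privacy claim is inherited from the $(\epsilon,\delta)$-DP of the aggregator, the extra factor $2$ in $\epsilon$ and $4e^\epsilon$ in $\delta$ coming from the accounting of the overlapping-subsamples step (one point of $X$ influences the distribution of several subsamples) together with the routine approximate-DP bookkeeping in~\cite{ashtiani2022private}.

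The \emph{main obstacle} is not any single computation but the faithful instantiation of the~\cite{ashtiani2022private} black box: (i) matching their aggregator's accuracy guarantee, naturally phrased in a Euclidean/Frobenius sense on the $\Theta(d^2)$ coordinates of a symmetric matrix, to the operator-norm, affine-invariant conclusion $\|\Sigma^{-1/2}\tilde\Sigma\Sigma^{-1/2}-I\|\le2\alpha$ (this conversion is the source of the $\sqrt d$ in $\eta$); (ii) tracking the privacy cost of drawing overlapping random subsamples, which is what turns $(\epsilon,\delta)$ into $(2\epsilon,4e^\epsilon\delta)$; and (iii) pinning the five-way $\max$ in the sample-complexity bound onto the aggregator's exact requirements, including its dependence on $\eta$ (hence on $\alpha,d,\xi$). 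Each is routine given their paper, but landing the constants and DP parameters exactly as stated is where the care lies.
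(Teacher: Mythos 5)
Your high-level plan is the right one and matches the paper: instantiate the private-from-non-private framework of \cite{ashtiani2022private} (subsample-and-aggregate) with the empirical second moment as the base estimator and the affine-invariant metric $\mathrm{dist}(A,B)=\max\{\|A^{-1/2}BA^{-1/2}-I\|,\|B^{-1/2}AB^{-1/2}-I\|\}$, observe that Definition~\ref{def:subsamplability} is exactly the $(\alpha,\beta)$-accuracy of that base estimator, take a union bound over the $\lfloor n/m\rfloor$ runs using $m\ge 2\beta n/\xi$, and calibrate the multiplicative Gaussian noise via Fact~\ref{fact:gaussian-bound} so that its contribution to the metric is $O(\alpha)$. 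Lemma~\ref{lem:baseline-utility} is precisely this argument.

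There are, however, two concrete gaps in how you carry it out. First, you propose drawing $k=\lfloor n/m\rfloor$ \emph{independent} subsamples i.i.d.\ from $X$, so a single record can appear in many of them, and you then promise to ``track the privacy cost of drawing overlapping random subsamples.'' The paper's Algorithm~\ref{alg:base-psm} does something crucially different: it \emph{partitions} $X$ into $T=\lfloor n/m\rfloor$ disjoint blocks of size $m$. Disjointness is not a bookkeeping nuance --- it is what makes the neighboring-dataset sensitivity of the vector $(\Sigma_1,\dots,\Sigma_T)$ be ``one coordinate changes,'' so that the stability test on the $q_t$'s has sensitivity $2/T$ and the downstream aggregation has the sensitivity used in Lemma~3.6 of \cite{ashtiani2022private}. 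With overlapping i.i.d.\ subsamples the sensitivity blows up (one record can move $\Omega(k)$ of the $\Sigma_j$'s), the Truncated Laplace and Gaussian scales no longer suffice, and the stated constants do not follow; you would have to re-derive everything with amplification-by-subsampling, which is not what the cited framework does.

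Second, and relatedly, you misattribute the doubling in the privacy parameters. The $(2\epsilon,4e^\epsilon\delta)$ in Lemma~\ref{lem:baseline-privacy} is plain sequential composition of two $(\epsilon,\cdot)$-style DP steps inside the aggregator itself: (i) the Truncated Laplace mechanism used to test whether a large fraction of blocks agree (the $\tilde Q$ step), and (ii) the Gaussian multiplicative noise mechanism on the weighted average $\hat\Sigma$. It has nothing to do with overlapping subsamples, which, as above, do not occur. A smaller point: the $\sqrt d$ in $\eta=\frac{\alpha}{48C(\sqrt d+\sqrt{\ln(4/\xi)})}$ comes directly from the spectral-norm concentration of the $d\times d$ Gaussian matrix $N$ (Fact~\ref{fact:gaussian-bound}), not from converting a Frobenius-norm guarantee into an operator-norm one; the paper's accuracy analysis is conducted in the operator/affine-invariant metric throughout. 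Fix the subsampling to a disjoint partition and reattribute the $2\epsilon$ to composition, and your argument coincides with the paper's.
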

In particular, Item 1 suggests a sample complexity bound of $n=\Omega( \frac{d\cdot m(\alpha,\beta)}{\epsilon \alpha} )$.

\subsection{Finding Initial Parameters}
\label{subsec:Initial_parameters}

Our recursive algorithm requires as input two parameters that characterize the ``aspect ratio'' of the input, namely ~-- $R_{\max}$, 
the maximum distance of any point from the origin, and $\lambda_{\min}$, the minimum eigenvalue of the input. These two parameters give us the initial bounds, as they imply that $\lambda_{\min} I \preceq \Sigma \preceq R_{\max}^2 I$.
Due to space constraints, we move the entire discussion, regarding how to apply off-the-shelf algorithms, or modify such algorithms, to obtain these initial parameters to Appendix~\ref{apx_sec:Initial_parameters}.

\subsection{Main Algorithm and Theorem}
\label{subsec:main-alg}

Next, we detail our algorithm that approximates the second moment of the input. Its starting point is the assumption that the input has a known bound on the $L_2$-norm of each point $R$, and that the second moment matrix of the input, $\Sigma$, satisfies $I \preceq \Sigma \preceq R^2 I$.

\begin{algorithm}[hb]
    \caption{DP Second Moment Estimation}
    \label{alg:dpsme}
     \textbf{Input:} a $(m, \alpha, \beta)$-subsamplable set of $n$ points $X \subseteq \mathbb{R}^d$, parameters: error parameter $\xi \in (0,1)$, privacy parameter $\rho$, covering radius $R$.
    \begin{algorithmic}[1]
     \STATE  Set: $\eta \gets \nicefrac{1}{2}, \quad T \gets \log_{\nicefrac{7}{3}}\left(\frac{(\frac{1}{1-\alpha})R^2}{640m}\right)$, $\quad \psi \gets \frac{1}{10m}$,
      $c \gets \frac{1}{80m}, \quad C \gets 640m, \quad \kappa \gets R^2$
     \STATE $\tilde\Sigma \gets \hyperref[alg:rec-psme]{\rm RecDPSME}$ $(\sqrt{(\frac{1}{1-\alpha})}X, \eta, \psi, C, c, (\frac{1}{1-\alpha})\kappa, \sqrt{(\frac{1}{1-\alpha})}R, T, \xi, \rho)$
     \STATE \textbf{return} $(1-\alpha)\tilde\Sigma$
    \end{algorithmic}
\end{algorithm}

\begin{algorithm}[ht]
    \caption{Recursive DP Second Moment Estimation (RecDPSME)}
    \label{alg:rec-psme}
    \textbf{Input:} a set of $n$ points $X \subseteq \mathbb{R}^d$, parameters:  linear shrinking $\eta < 1$, eigenvalue threshold $\psi < 1$, stopping value C, noise $c$, eigenvalue upper bound $\kappa$, radius $R$, iterations bound $T$, error parameter $\xi$, privacy loss $\rho$.\linebreak
    \begin{algorithmic}[1]
     \STATE Set $\sigma \gets \frac{4R^2 \sqrt{T}}{n \sqrt{2\rho}}$
     \STATE Sample $N\sim \rm GUE(\sigma^2)$
     \STATE $\Tilde{\Sigma} \gets \frac{1}{n}XX^T + N$
     \IF{$\kappa \leq C$}
     \STATE \textbf{return} $\Tilde{\Sigma}$.
     \ENDIF 
     \STATE $V \gets \text{Span}(\{v_i: \text{eigenvector of } \Tilde{\Sigma} \text{ with  eigenvalue } \geq \psi\kappa \})$
     \STATE  $\Pi \gets \eta\Pi_V + \Pi_{V^\perp}$  \hfill \COMMENT{$\Pi_U$ denotes the projection matrix onto $U$.}
     \STATE  $Y \gets \sqrt{\frac{8}{7}}\Pi X$.
     \STATE $X_{next} \gets S(Y)$ \hfill where $S(Y) = \left\{y \cdot \min\left\{1, \sqrt{\frac{\frac{3}{7}R}{\|y\|^2}}\right\} :~ y \in Y \right\}$
    \STATE  $\Sigma_{\text{rec}} \gets \hyperref[alg:rec-psme]{\rm RecDPSME}\left(X_{next}, \eta, \psi, C, c, \frac{3}{7}\kappa, \sqrt{\frac{3}{7}}R, T, \xi, \rho\right)$
     \STATE  \textbf{return} $\frac{7}{8}\Pi^{-1} \Sigma_{\text{rec}} \Pi^{-1}$
    \end{algorithmic}
\end{algorithm}

In our analysis, the following definition plays a key role.
\begin{definition}
    \label{def:tail}
    Let $X$ be a $(m, \alpha, \beta)$-subsamplable set. We denote $P_{\rm tail}$ as the set of points whose projection onto some direction $u$ in $\R^d$ is $m$ times greater than expected, namely\linebreak
\[P_{\text{tail}} = \left\{x \in X:~ \exists u \in \R^d:~ \langle x,u\rangle^2 > m(1+\alpha)\cdot \frac{1}{n}\sum\limits_{x \in X} \innerproduct{x}{u}^2\right\}\]
\end{definition}

\begin{theorem}\label{thm:main}
     Fix parameters $\xi \in (0,1)$, $\rho > 0, \gamma > 0$ and $\kappa \geq 1$. Let $X\subset \R^d$ be a  $(m, \alpha, \beta)$-subsamplable set of $n$ points bounded in $L_2$ norm by $R^2$,  with $\alpha \leq \nicefrac{1}{2}$ and $\beta \leq \frac{\alpha}{4(1+\alpha)\log(\frac{R^2}{(1+\alpha)m})}$ s.t. $I \preceq \Sigma \preceq R^2 I$ where $\Sigma = \frac{1}{n}X X^T$. Then, denoting $T = \log_{\nicefrac{7}{3}}\left(\frac{(\frac{1}{1-\alpha})R^2}{640m}\right)=O(\log(\nicefrac R m))$, we have that Algorithm~\ref{alg:dpsme} satisfies $\rho$-zCDP,\footnote{Note that the privacy of Algorithm~\ref{alg:dpsme} holds for any input $X$ with bounded $L_2$-norm, regardless of $X$ being subsamplable or not.}, and if
    \[ n \geq \Omega\left(m\sqrt{\frac{d}{\rho}}\left(\sqrt{T}\log(\nicefrac{T}{\xi}) + \frac{\log(\nicefrac{1}{\xi})}{\gamma} \right)\right) \]
    Then
        (1)
        $P_{\text{tail}}$ holds at most a $(\frac{\beta + \beta^2}{m})$-fraction of $|X|$, 
        and (2)
        w.p. $\geq 1 - 2\xi$ it outputs $\tilde\Sigma$ such that:
        \[ (1-\gamma)\Sigma_{\text{eff}} \preceq \tilde\Sigma \preceq (1+\gamma)\Sigma \]
        where $\Sigma_{\text{eff}} = \frac 1 n \sum\limits_{x\in X \setminus P_{\text{tail}}}x x^T$. 
    
\end{theorem}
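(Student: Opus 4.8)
The plan is to analyze the recursion in \texttt{RecDPSME} and prove the three claims essentially separately, with claims (1) and (2) being preparatory and claim (3) being the main event.

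For \textbf{privacy} (claim 1): the only data-dependent release in each recursive call is $\tilde\Sigma = \frac 1 n X X^T + N$ with $N \sim \mathrm{GUE}(\sigma^2)$, $\sigma = 4R^2\sqrt{T}/(n\sqrt{2\rho})$. Changing one point of $X$ changes $\frac 1 n XX^T$ by a rank-$\leq 2$ matrix of Frobenius norm $O(R^2/n)$ (using the $L_2$ bound $R^2$ on each point; here $R$ is the radius passed to the current level). Hence one Gaussian-mechanism step is $(\rho/T)$-zCDP, and since the recursion has depth $\leq T$ and each level's noise scale already contains the $\sqrt T$ factor, composition over the at most $T$ levels gives $\rho$-zCDP overall. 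The subtlety to check is that at deeper levels the radius passed in is $\sqrt{3/7}\,R$, $\sqrt{3/7}^2 R$, etc., so the sensitivity shrinks geometrically while $\sigma$ is recomputed from the (shrunk) $R$ at each level — so in fact each level individually is $(\rho/T)$-zCDP and the linear composition closes. I must also note, as the footnote says, that this argument uses only the $L_2$ bound and not subsamplability, and that the outer wrapper \texttt{DP Second Moment Estimation} only post-processes.

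For the \textbf{size of $P_{\mathrm{tail}}$} (claim 2): fix a point $x$ and condition on $x$ being included in a uniform subsample $\hat X_1,\dots,\hat X_m$ of size exactly $m$ (so $x = \hat X_1$ say). For any direction $u$, $\langle x,u\rangle^2 \leq \sum_{i\in[m]} \langle \hat X_i, u\rangle^2 = m\,u^T\hat\Sigma\, u$. By $(m,\alpha,\beta)$-subsamplability, with probability $\geq 1-\beta$ over the subsample, $\hat\Sigma \preceq (1+\alpha)\Sigma$, and on that event $\langle x,u\rangle^2 \leq m(1+\alpha)\,u^T\Sigma u = m(1+\alpha)\cdot\frac 1 n\sum_{x'}\langle x',u\rangle^2$ for \emph{all} $u$ simultaneously — i.e.\ $x\notin P_{\mathrm{tail}}$. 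So $\Pr[x\in P_{\mathrm{tail}} \mid x \text{ sampled}] \leq \beta$. Turning this conditional statement into a bound on $|P_{\mathrm{tail}}|/|X|$ is the one genuinely fiddly combinatorial step: a uniform element of a size-$m$ i.i.d.\ subsample is not quite a uniform element of $X$, but standard inclusion/counting (or a union bound over the $\leq m$ slots, using that each slot is uniform over $X$) yields $|P_{\mathrm{tail}}| \leq \beta|X| + (\text{lower-order})$; the stated $\frac{\beta+\beta^2}{m}$-fraction comes from tightening this. I would present the clean $\beta$-bound first and then the refinement.

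For \textbf{utility} (claim 3): this is the main obstacle and follows the recursive skeleton of Kamath et al., with the new work concentrated on the shrinking map $S(\cdot)$. I would set up a recursion invariant on the input $X$ to a call at ``level $t$'': (a) $I \preceq \Sigma_X \preceq \kappa_t I$ with $\kappa_t = (3/7)^t\kappa_0$; (b) the effective matrix $\Sigma_{X,\mathrm{eff}}$ (dropping the current level's tail points) still satisfies $\Sigma_{X,\mathrm{eff}}\succeq I$; (c) the points not yet clipped agree with the globally-transformed originals. One run of the body does: add GUE noise (spectral error $\|N\|_2 = O(\sigma\sqrt d)$ w.h.p.\ by Fact~\ref{fact:gaussian-bound}, and a union bound over $\leq T$ levels costs the $\log(T/\xi)$ factor); identify $V$ = span of eigenvectors of $\tilde\Sigma$ above $\psi\kappa$ — since $\psi\kappa = \kappa/(10m) \gg \|N\|_2$ for $n$ as large as assumed, $V$ is also (up to the noise) the large-eigenvalue subspace of the true $\Sigma_X$, in particular $\Sigma_X$ restricted to $V^\perp$ is $\preceq \psi\kappa + \|N\|_2 \leq \frac{3}{7}\kappa$-ish; apply $\Pi = \eta\Pi_V + \Pi_{V^\perp}$ and rescale by $\sqrt{8/7}$, which pushes the second moment on $V$ down by $\eta^2 \cdot 8/7 = 2/7$ and on $V^\perp$ up by $8/7$, landing $\preceq \frac{3}{7}\kappa\cdot(\text{factor})$, i.e.\ the new $\kappa$ is $\frac 3 7$ of the old. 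The delicate part: after $Y=\sqrt{8/7}\Pi X$ there may be points with $\|y\|^2 > \frac 3 7 R$ — precisely because the input is only subsamplable, not Gaussian — and $S$ clips them. I would argue that any such clipped point, pulled back through the transformations applied so far, lies in the global $P_{\mathrm{tail}}$: a point survives with norm $>\frac 3 7 R$ only if it had an unusually large projection onto $V$ (the sole direction-set that wasn't shrunk enough), which forces $\langle x,u\rangle^2 > m(1+\alpha)u^T\Sigma u$ for some $u$. Hence clipping only removes $P_{\mathrm{tail}}$-points, preserving invariant (b) and (c): $\Sigma_{X_{\mathrm{next}},\mathrm{eff}} \succeq$ the transformed $I$. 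After $T$ levels $\kappa_T \leq C = 640m$, the base case just returns $\frac 1 n XX^T + N$ with $\|N\|_2 = O(\sigma\sqrt d)$; choosing the last-level noise against $\gamma$ (this is the source of the $\frac{\log(1/\xi)}{\gamma}$ term in the sample complexity) gives $(1-\gamma)\Sigma_{\mathrm{eff}} \preceq \tilde\Sigma \preceq (1+\gamma)\Sigma$ at the base. Finally I would unwind: the returned value at each level is $\frac 7 8 \Pi^{-1}\Sigma_{\mathrm{rec}}\Pi^{-1}$, and since $\Pi^{-1}$ applied to the transformed effective matrix reconstructs (an approximation to) the parent's effective matrix, the $(1\pm\gamma)$ guarantee propagates up multiplicatively; the extra $\sqrt T$ error terms from the $T$ intermediate noise injections are absorbed because at level $t$ the noise is $O((3/7)^t R^2\sqrt T/n)$ relative to $\kappa_t = (3/7)^t R^2$, i.e.\ a relative error $O(\sqrt{Td/\rho}/n)$ per level, summing to $O(\sqrt{T}\cdot\sqrt{Td/\rho}/n)$, which the hypothesis on $n$ drives below $\gamma$. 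The hardest and most error-prone step is the clipping analysis — showing $S$ touches only $P_{\mathrm{tail}}$ and that after clipping the second moment is still $\succeq I$ on the nose — and I would spend the bulk of the proof there; the rest is bookkeeping of geometric series and a union bound.
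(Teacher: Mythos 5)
Your privacy argument and the high-level skeleton of the utility recursion both match the paper's own proof: add GUE noise with per-level scale $\sigma = 4R^2\sqrt T/(n\sqrt{2\rho})$, compose $T$ levels to get $\rho$-zCDP, maintain the invariant $\Sigma \succeq I$ through the transform-and-clip loop via ``clipping only touches $P_{\text{tail}}$,'' then close with a small-noise base case when $\kappa \leq C$. That part is sound.

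There is, however, a genuine quantitative gap in your treatment of claim (2). You condition on a fixed $x$ being in the subsample and argue this yields $\Pr[x \in P_{\text{tail}} \mid x \text{ sampled}] \leq \beta$, which you expect to give $|P_{\text{tail}}| \leq \beta|X|$ up to ``lower-order'' corrections. But the theorem asserts a $\frac{\beta+\beta^2}{m}$-fraction --- a factor of $m$ smaller --- and this factor is not a refinement, it is essential: Lemma~\ref{lem:eigenvalue-without-tail} partitions the tail into dyadic buckets $B_i$ and invokes Claim~\ref{clm:bad-points-fraction} with $m' = 2^i m$, so it needs $\Pr[x \in P_{\text{tail}}(m')] \leq O(\beta/m')$ for the bucket contributions $\Pr[x\in B_i]\cdot 2^{i+1}(1+\alpha)m\lambda$ to telescope into something bounded by $\alpha\lambda$; with only an $O(\beta)$ bound the series diverges. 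Your conditioning step also doesn't quite type-check: the $\geq 1-\beta$ guarantee in the subsamplability definition is over an unconditioned i.i.d.\ draw, and once you pin one slot to $x$ the joint law of the subsample changes. The fix (and what the paper does) is the contrapositive in the other order: observe that if \emph{any} tail point lands in the size-$m$ subsample $\hat X$, then for the witness direction $u$ one has $u^T\hat\Sigma u \geq \frac 1 m\langle x_0, u\rangle^2 > (1+\alpha)u^T\Sigma u$, so subsamplability fails deterministically; hence $\Pr[\hat X \cap P_{\text{tail}} = \emptyset] \geq 1-\beta$, i.e.\ $(1-p)^{m} \geq 1-\beta$ with $p = |P_{\text{tail}}|/|X|$, and $e^{-pm} \geq (1-p)^m \geq e^{-\beta-\beta^2}$ gives $p \leq (\beta+\beta^2)/m$. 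The exponent $m$ is exactly what produces the missing $1/m$.

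One smaller slip in the clipping analysis: you write that a clipped point ``had an unusually large projection onto $V$ (the sole direction-set that wasn't shrunk enough).'' It is $V^\perp$, not $V$. Under $\sqrt{8/7}\,\Pi$ with $\eta = 1/2$, the $V$-component is scaled by $\sqrt{8/7}\cdot\frac 1 2 = \sqrt{2/7} < 1$ and hence contracts, while the $V^\perp$-component is scaled by $\sqrt{8/7} > 1$. A point can therefore exceed $\sqrt{3/7}\,R$ after the transform only if $x^T\Pi_{V^\perp}x$ is large; one then passes from $V^\perp$ (eigenvectors of the noisy $\tilde\Sigma$) to $U^\perp$ (eigenvectors of the true $\Sigma$ below $(\psi+c)\kappa$) via Weyl's inequality, and shows the witnessing direction $\Lambda^T u$ puts $x$ in $P_{\text{tail}}$ after pulling back through the accumulated linear maps. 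Since you flagged this as the hardest step, the $V$/$V^\perp$ inversion is worth correcting before you flesh it out.
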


\subsection{Algorithm's Analysis}
\label{subsec:analysis}

Next we prove Theorem~\ref{thm:main}. 
Momentarily we shall argue that Algorithm~\ref{alg:rec-psme} repeats for at most $T = \log_{\nicefrac{7}{3}}\left(\frac{(\frac{1}{1-\alpha})R^2}{640m}\right)$ iterations. Based on Fact~\ref{fact:gaussian-bound} and on the bound on the number of iterations, it is simple to argue that the following event holds w.p. $\geq 1-\xi$:
\[ \mathcal{E} := \text{in each iteration of Algorithm~\ref{alg:rec-psme} we have } \|N\|\leq cR^2 = c\kappa \]
which follows from the fact that in each iteration the upper bound on the largest eigenvalue of $\Sigma$ is at most $\kappa = R^2$. We continue our analysis conditioning on $\mathcal{E}$ holding.

The analysis begins with the following lemma, that shows that under $\mathcal{E}$ we have that in each iteration the eigenvalues of $\Sigma$ decrease. Its proof is very similar to the proof in~\cite{kamath2019privatelylearninghighdimensionaldistributions} and so it is deferred to Appendix~\ref{apx_sec:missing_proofs}.

\begin{lemma}\label{lem:eigenvalues}
Given $X = \{X_1,...,X_n\} \subset \mathbb{R}^d$, $C > 0, c > 0, 0 < \eta < 1, 0 < \psi < 1$, and $\kappa \geq 1$ s.t. $I \preceq \Sigma \preceq \kappa I$ where $\Sigma = \frac{1}{n}X X^T$. Let $V \gets \text{Span}(\{v_i: \lambda_i \geq \psi \kappa\})$ 
of the largest eigenvalues of the noisy $\tilde \Sigma$ 
and let $\Pi = \eta \Pi_V + \Pi_{V^\perp}$. Given that:
$n \geq \Omega\left(m\sqrt{\frac{dT}{\rho}} \ln(\nicefrac{T}{\xi})\right)$
Then:
\[ (1 - \frac{1}{\eta^2\psi - c} \cdot \frac{1}{\kappa})I \preceq \Pi\Sigma\Pi \preceq (\eta^2 + \psi + 2c)\kappa I \]
In particular, if $\kappa > C$ for some $C$ then:
$I \preceq \frac{1}{(1 - \frac{1}{\eta^2\psi - c})}\Pi\Sigma\Pi \preceq \frac{\eta^2 + \psi + 2c}{(1 - \frac{1}{C(\eta^2\psi - c)})}\kappa I$.
\end{lemma}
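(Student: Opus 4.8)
The plan is to follow the structure of the corresponding argument in~\cite{kamath2019privatelylearninghighdimensionaldistributions}, adapting it to track the effect of $\Pi$ on the \emph{true} second moment matrix $\Sigma$ rather than a Gaussian one. First I would establish that the noise magnitude is controlled: by the hypothesized lower bound on $n$ and the setting $\sigma = \Theta(R^2\sqrt{T}/(n\sqrt{\rho}))$ from Algorithm~\ref{alg:rec-psme}, Fact~\ref{fact:gaussian-bound} gives $\|N\|_2 \leq c\kappa$ with high probability (this is exactly the event $\mathcal{E}$ referenced before the lemma). Hence $\tilde\Sigma = \Sigma + N$ satisfies $\|\tilde\Sigma - \Sigma\|_2 \leq c\kappa$, so by Weyl's inequality every eigenvalue of $\tilde\Sigma$ is within $c\kappa$ of the corresponding eigenvalue of $\Sigma$.

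Next I would analyze the subspace split. Let $V$ be the span of eigenvectors of $\tilde\Sigma$ with eigenvalue $\geq \psi\kappa$, and $V^\perp$ its orthocomplement. For the \textbf{upper bound}: on $V^\perp$, every direction $v$ has $v^T\tilde\Sigma v \leq \psi\kappa$, so $v^T \Sigma v \leq \psi\kappa + c\kappa$; combined with the a priori bound $\Sigma \preceq \kappa I$ on all of $\R^d$, and noting $\Pi$ scales $V$ by $\eta$ and fixes $V^\perp$, a direct quadratic-form computation (splitting an arbitrary unit vector into its $V$ and $V^\perp$ components and using $\eta<1$) yields $\Pi\Sigma\Pi \preceq (\eta^2 + \psi + 2c)\kappa I$ — the $\eta^2\kappa$ term from the $V$-part, the $(\psi+2c)\kappa$ term from the $V^\perp$-part after accounting for cross terms via the a priori bound. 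For the \textbf{lower bound}: we start from $\Sigma \succeq I$. Since $\Pi^{-1} = \eta^{-1}\Pi_V + \Pi_{V^\perp}$, we have $\Pi\Sigma\Pi \succeq \Pi\Pi^T = \eta^2\Pi_V + \Pi_{V^\perp}$, which only gives $\succeq \eta^2 I$ — not good enough. The fix is to argue that on $V$ the matrix $\Sigma$ is already large: any $v \in V$ has $v^T\tilde\Sigma v \geq \psi\kappa$, hence $v^T\Sigma v \geq (\psi - c)\kappa$, so $\Pi_V \Sigma \Pi_V \succeq \eta^2(\psi-c)\kappa \,\Pi_V \succeq \Pi_V$ once $\eta^2(\psi-c)\kappa \geq 1$. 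Combining the $V$ and $V^\perp$ pieces (carefully handling cross terms, e.g.\ via the fact that $\Sigma$ restricted to $V$ is large and using a Schur-complement / $2\times 2$ block estimate) yields $\Pi\Sigma\Pi \succeq (1 - \tfrac{1}{(\eta^2\psi - c)\kappa})I$, which is the claimed bound.

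The main obstacle I anticipate is the lower bound, specifically the handling of the off-diagonal block of $\Sigma$ between $V$ and $V^\perp$: $V$ is defined via the \emph{noisy} matrix $\tilde\Sigma$, so $\Sigma$ is not block-diagonal with respect to the $V,V^\perp$ decomposition, and one must show the cross terms cannot destroy positive-definiteness. The clean way is a $2\times 2$ block matrix argument: writing $\Sigma$ in block form with blocks $\Sigma_{VV} \succeq (\psi-c)\kappa\, I$, $\Sigma_{V^\perp V^\perp} \succeq I$, and off-diagonal $B$ with $\|B\| \leq \kappa$, the conjugation by $\mathrm{diag}(\eta I, I)$ gives blocks $\eta^2\Sigma_{VV}$, $\Sigma_{V^\perp V^\perp}$, $\eta B$; then a Schur complement computation shows the result is $\succeq (1 - O(1/((\eta^2\psi-c)\kappa)))I$. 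The final ``in particular'' clause is then immediate by dividing through by $1 - \tfrac{1}{(\eta^2\psi-c)\kappa}$ and using $\kappa > C$ to replace the $\kappa$-dependent denominator by the $C$-dependent one. Since all of this is essentially the Gaussian-case computation of~\cite{kamath2019privatelylearninghighdimensionaldistributions} with $\Sigma$ in place of the Gaussian covariance and the a priori bounds $I \preceq \Sigma \preceq \kappa I$ doing the work that Gaussian concentration did there, I would present the proof briefly and defer the full block-matrix bookkeeping to the appendix as the authors indicate.
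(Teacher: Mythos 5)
Your upper bound is structured around writing $\Sigma$ in block form with respect to $V, V^\perp$ and ``accounting for cross terms via the a priori bound.'' This is a genuine gap. The key algebraic fact the paper exploits — and which your sketch never invokes — is that $V$ is a span of \emph{eigenvectors of $\tilde\Sigma$}, so $\tilde\Sigma$ is exactly block-diagonal in the $V,V^\perp$ decomposition: $\Pi_V\tilde\Sigma\Pi_{V^\perp}=0$. The paper therefore writes $\Pi\Sigma\Pi = \Pi\tilde\Sigma\Pi - \Pi N\Pi$ and the first term has \emph{no cross block at all}; the only off-diagonal contribution to $\Pi\Sigma\Pi$ is $-\Pi_V N\Pi_{V^\perp}$, of norm at most $\|N\|\le c\kappa$. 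If instead you try to bound the cross block $\Pi_V\Sigma\Pi_{V^\perp}$ directly from the a priori bound $\Sigma\preceq\kappa I$ (or even via the PSD off-diagonal estimate $\|\Sigma_{V,V^\perp}\|\le\sqrt{\|\Sigma_{VV}\|\cdot\|\Sigma_{V^\perp V^\perp}\|}\le\sqrt{\psi+c}\,\kappa$), the cross term contributes an additive $\Theta(\eta\sqrt{\psi+c}\,\kappa)$, which for the algorithm's parameters ($\psi,c=\Theta(1/m)$) scales like $\kappa/\sqrt m$ and \emph{dominates} the $(\psi+2c)\kappa=\Theta(\kappa/m)$ slack in the claimed bound. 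So your argument as written does not reach $(\eta^2+\psi+2c)\kappa$.

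The same missing observation undermines your lower-bound plan. You propose a Schur complement / block-matrix argument with the off-diagonal block $B$ controlled only by $\|B\|\le\kappa$; even if you tighten this to $\|B\|\le c\kappa$ (which, again, \emph{requires} the fact $B=-\Pi_V N\Pi_{V^\perp}$), the Schur-complement condition $(1-t)\bigl(\eta^2(\psi-c)\kappa-t\bigr)\ge\eta^2\|B\|^2$ has right-hand side $\Theta(c^2\kappa^2)$, growing quadratically in $\kappa$, while the left side grows only linearly — so the condition fails for large $\kappa$ and the Schur route cannot produce the claimed $1-\tfrac{1}{(\eta^2\psi-c)\kappa}$ lower bound. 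The paper sidesteps the Schur complement entirely by a two-case split on an arbitrary unit vector $u$: if $\|\Pi_V u\|^2$ is small, then $u^T\Pi\Sigma\Pi u \ge \|\Pi u\|^2 \ge \|\Pi_{V^\perp}u\|^2$ directly from $\Sigma\succeq I$; if $\|\Pi_V u\|^2$ is large, the $V$-component alone already delivers the required bound via $\tilde\Sigma\succeq\psi\kappa$ on $V$. That pointwise case analysis lets the lower bound come from a different source in the two regimes and avoids ever having to beat a Schur-complement term that is quadratic in $\kappa$. You should restructure the proof around (i) the exact block-diagonality of $\tilde\Sigma$ and (ii) the two-case split, rather than a direct block/Schur estimate on $\Sigma$.
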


\newcommand{\remove}[1]{}
\remove{
\begin{remark}
    Theorem \ref{thm:eigenvalues} above conditioned by:
    \begin{itemize}
        \item $\frac{1}{C(\eta^2\psi - c)} < 1$
        \item $\eta^2\psi - c > 0$
        \item $0 < \eta < 1 ,\quad 0 < \psi < 1$
        \item  $\frac{\eta^2 + \psi + 2c}{1 - \frac{1}{C(\eta^2\psi - c)}} < 1$
    \end{itemize}
\end{remark}
}
\begin{corollary}\label{cor:eigenvalues}
    Given $X = \{X_1,...,X_n\} \subset \mathbb{R}^d$ and $\kappa \geq 1$ s.t. $I \preceq \Sigma \preceq \kappa I$ where $\Sigma = \frac{1}{n}X X^T$. Let $V, \Pi$ be as in Algorithm~\ref{alg:rec-psme}, and set $\eta = \nicefrac{1}{2},~ \psi = \nicefrac{1}{10m},~ c = \nicefrac{1}{80m}$ and $C = 640m$ in Lemma~\ref{lem:eigenvalues}.
    \remove{Given that:
    \[ n \geq O\left(m\sqrt{\frac{dT}{\rho}} \ln(\nicefrac{T}{\xi})\right) \]
    where $T = \log_{\nicefrac{7}{3}}\left(\frac{(\frac{1}{1-\alpha})\kappa}{640m}\right)$ then} Then w.p. $\geq 1 - \xi$:
    \[\left(1 - 80m\cdot\frac{1}{\kappa}\right)I \preceq \Pi\Sigma\Pi \preceq \left(\frac{1}{4} + \frac{1}{8m}\right)\kappa I\]
    In particular, if $\kappa > C$ then: $I \preceq \frac{8}{7}\Pi\Sigma\Pi \preceq \frac{3}{7}\kappa I$.
\end{corollary}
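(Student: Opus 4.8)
The plan is to obtain the corollary as a direct specialization of Lemma~\ref{lem:eigenvalues}, so the whole argument reduces to substituting the stated constants and checking a handful of elementary inequalities. First I would record the key quantity $\eta^2\psi - c$ that appears in the lemma: with $\eta = \nicefrac12$, $\psi = \nicefrac1{10m}$ and $c = \nicefrac1{80m}$ we have $\eta^2\psi = \nicefrac1{40m}$, hence $\eta^2\psi - c = \nicefrac1{80m} > 0$ and $\frac{1}{\eta^2\psi - c} = 80m$. Plugging this into the lower bound of Lemma~\ref{lem:eigenvalues} gives $\left(1 - 80m/\kappa\right)I \preceq \Pi\Sigma\Pi$. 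For the upper bound I would compute the coefficient $\eta^2 + \psi + 2c = \tfrac14 + \tfrac1{10m} + \tfrac1{40m} = \tfrac14 + \tfrac1{8m}$, which yields $\Pi\Sigma\Pi \preceq \left(\tfrac14 + \tfrac1{8m}\right)\kappa I$. Together these two bounds are exactly the first displayed inequality; the probability $\geq 1-\xi$ and the sample-size hypothesis are inherited verbatim from Lemma~\ref{lem:eigenvalues}.

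For the ``in particular'' clause I would first verify that the lemma's own ``in particular'' hypothesis holds for these constants with $C = 640m$, namely $\frac{1}{C(\eta^2\psi - c)} = \frac{80m}{640m} = \tfrac18 < 1$. Thus $1 - \frac{1}{C(\eta^2\psi-c)} = \tfrac78$, and the lemma gives, once $\kappa > C$, that $I \preceq \tfrac87\,\Pi\Sigma\Pi \preceq \tfrac87\!\left(\tfrac14 + \tfrac1{8m}\right)\kappa I$. It then remains to simplify the upper coefficient: $\tfrac87\!\left(\tfrac14 + \tfrac1{8m}\right) = \tfrac27 + \tfrac1{7m}$, and this is at most $\tfrac37$ precisely when $m \geq 1$, which holds throughout. (Equivalently one can argue straight from the first display: $\kappa > 640m$ forces $80m/\kappa < \tfrac18$, so $\Pi\Sigma\Pi \succeq \tfrac78 I$, i.e.\ $I \preceq \tfrac87\Pi\Sigma\Pi$, while $\tfrac87\!\left(\tfrac14 + \tfrac1{8m}\right)\kappa \leq \tfrac37\kappa$ for $m \geq 1$ gives the matching upper bound.)

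There is no substantive obstacle here: all the real content sits in Lemma~\ref{lem:eigenvalues}, and the corollary is pure bookkeeping of constants. The only points needing (trivial) care are the positivity $\eta^2\psi - c > 0$ and the contraction condition $\frac{1}{C(\eta^2\psi - c)} < 1$ that license invoking the lemma's ``in particular'' form, together with the observation that the final bound $\tfrac87\,\Pi\Sigma\Pi \preceq \tfrac37\kappa I$ uses $m \geq 1$; since for large $m$ the slack is essentially $\tfrac27$ against the target $\tfrac37$, the chosen constants are comfortably safe.
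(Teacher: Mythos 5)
Your proof is correct and matches the (implicit) intent of the paper: the corollary is stated without proof precisely because it is a direct substitution into Lemma~\ref{lem:eigenvalues}. Your arithmetic checks out --- $\eta^2\psi - c = \tfrac{1}{80m}$, $\eta^2+\psi+2c = \tfrac14 + \tfrac{1}{8m}$, $\tfrac{1}{C(\eta^2\psi - c)} = \tfrac18$, and $\tfrac87\bigl(\tfrac14 + \tfrac{1}{8m}\bigr) \le \tfrac37$ for $m \ge 1$ --- and you also correctly read the lemma's ``in particular'' clause as having $C(\eta^2\psi - c)$ in the denominator (the statement of Lemma~\ref{lem:eigenvalues} in the body has a typo, writing $\eta^2\psi - c$ without the factor $C$, which would be negative for these constants; the appendix version is also inconsistent, but the intended form is clear from the first display and from how the corollary uses it).
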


Based on Corollary~\ref{cor:eigenvalues} we can bound the number of iterations of the algorithm.

\begin{corollary}
\label{cor:num_iterations}
    Algorithm~\ref{alg:rec-psme} has $T = \log_{\nicefrac{7}{3}}\left(\frac{(\frac{1}{1-\alpha})R^2}{640m}\right)$ iterations.
\end{corollary}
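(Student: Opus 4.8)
The plan is to unroll the recursion in Algorithm~\ref{alg:rec-psme} and track the single scalar parameter $\kappa$, which is the only quantity that controls termination. Recall that Algorithm~\ref{alg:dpsme} invokes RecDPSME with the value $\kappa_0 := \tfrac{1}{1-\alpha}R^2$ as its $\kappa$-argument, that inside RecDPSME the recursive call is always made with the parameter $\tfrac{3}{7}\kappa$ in place of $\kappa$, and that the base case returns \emph{without} recursing exactly when $\kappa \le C = 640m$. Since this update of $\kappa$ and the comparison $\kappa \le C$ are deterministic and data-independent, after $t$ nested calls the parameter equals
\[ \kappa^{(t)} \;=\; \Big(\tfrac{3}{7}\Big)^t \kappa_0 \;=\; \Big(\tfrac{3}{7}\Big)^t \frac{R^2}{1-\alpha}. \]

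Next I would solve for the depth at which the base case fires. The recursion terminates at the first $t$ with $\kappa^{(t)} \le C$, i.e.\ $\big(\tfrac{3}{7}\big)^t \tfrac{R^2}{1-\alpha} \le 640m$, equivalently $\big(\tfrac{7}{3}\big)^t \ge \tfrac{R^2/(1-\alpha)}{640m}$, equivalently $t \ge \log_{7/3}\!\big(\tfrac{(1/(1-\alpha))R^2}{640m}\big) = T$. Conversely, for every $t < T$ we have $\big(\tfrac{7}{3}\big)^t < \big(\tfrac{7}{3}\big)^T = \tfrac{R^2/(1-\alpha)}{640m}$, so $\kappa^{(t)} > 640m = C$ and the recursion does continue. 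Hence the number of non-terminal iterations is exactly $\lceil T\rceil$, which is what the statement asserts (up to the rounding that the paper suppresses).

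It remains to confirm that the recursion is actually well-defined all the way down to this depth, i.e.\ that the hypothesis needed to run level $t{+}1$ holds at level $t$. The only structural requirement passed to the recursive call is the spectral invariant $I \preceq \Sigma \preceq \kappa I$ for the $\kappa$ being passed. Conditioned on the event $\mathcal{E}$, Corollary~\ref{cor:eigenvalues} gives, whenever $\kappa > C$, that $I \preceq \tfrac{8}{7}\Pi\Sigma\Pi \preceq \tfrac{3}{7}\kappa I$; since $Y = \sqrt{8/7}\,\Pi X$ has second moment $\tfrac{8}{7}\Pi\Sigma\Pi$, and the map $S$ replaces each $y$ by $cy$ with $c\in(0,1]$ so that $S$ can only decrease the second moment in the Loewner order, the second moment of $X_{next}$ is $\preceq \tfrac{3}{7}\kappa I$, matching the new parameter $\tfrac37\kappa$. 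The one genuinely delicate point — that the shrinking step $S$ does not push the second moment below $I$, so that the \emph{lower} half of the invariant also survives each level — is precisely the role of the $P_{\text{tail}}$ analysis (Definition~\ref{def:tail}) developed later, which I would simply cite here; for the iteration count alone only the upper bound $\kappa^{(t)}$ matters, and that is immediate from the above.
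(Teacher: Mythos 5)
Your proposal is correct and follows essentially the same approach as the paper: track the deterministic geometric decay of $\kappa$ (which starts at $\frac{R^2}{1-\alpha}$ and is multiplied by $\frac{3}{7}$ at each level) and solve $\left(\frac{3}{7}\right)^{t}\frac{R^2}{1-\alpha}\leq 640m$ for $t$. The extra paragraph you add — verifying that the spectral invariant $I\preceq\Sigma\preceq\kappa I$ propagates to each level — is sound but not needed for the iteration count itself, since the update of $\kappa$ and the stopping test $\kappa\leq C$ are data-independent; the paper defers that invariant-maintenance argument to Lemma~\ref{lem:shrinking-only-red} and Corollary~\ref{cor:always_least_eigenvalue_of_1}, exactly as you note.
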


\begin{proof}
    The algorithm halts when $(\frac{3}{7})^T\left(\frac{1}{1-\alpha}\right)\kappa \leq C=640m$ so:
    $T \geq \log_{\nicefrac{3}{7}}\left(\frac{640m}{(\frac{1}{1-\alpha})\kappa}\right) = \log_{\nicefrac{7}{3}}\left(\frac{(\frac{1}{1-\alpha})R^2}{640m}\right)$.
\end{proof}

\remove{
\begin{lemma}\label{lem:var-decrease}
    Let $D = (1 - \theta)P + \theta Q$ be a mixture of two distinct discrete distributions $P,Q$ in $[0, a \kappa]$ and $(a \kappa, \kappa]$ respectively, where $a < 1$ is a positive constant. Define:
    \[ S(D) = \left\{x \cdot \mathds{1}\{x \in P\} + a\kappa \cdot \mathds{1}\{x \in Q\} \mid x \in D \right\} \]
    Then $\rm Var(S(D)) \leq \rm Var(D)$.
\end{lemma}

\begin{proof}
    By variance of mixture of distributions:
    \begin{equation*}
    \begin{split}
    \rm Var(D) & = (1-\theta)(\sigma^2_P + \mu^2_P) + \theta(\sigma^2_Q + \mu^2_Q) - \mu^2 \\
    & = (1-\theta)(\sigma^2_P + \mu^2_P) + \theta(\sigma^2_Q + \mu^2_Q) - ((1-\theta)\mu_P + \theta\mu_Q)^2 \\
    & = (1-\theta)\sigma^2_P + \theta\sigma^2_Q + ((1-\theta) - (1-\theta)^2)\mu^2_P + (\theta - \theta^2)\mu^2_Q - 2\theta(1-\theta)\mu^2_P\mu^2_Q \\
    & = (1-\theta)\sigma^2_P + \theta\sigma^2_Q + \theta(1-\theta)(\mu^2_P + \mu^2_Q - 2\mu_P\mu_Q) \\
    & = (1-\theta)\sigma^2_P + \theta\sigma^2_Q + \theta(1-\theta)(\mu_P - \mu_Q)^2
    \end{split}
    \end{equation*}
    Now, following the same pattern we get:
    \begin{equation*}
    \begin{split}
    \rm Var(S(D)) & = (1-\theta)\sigma^2_P + \theta\sigma^2_Q + \theta(1-\theta)(\mu_P - a\kappa)^2 \\
    & \leq (1-\theta)\sigma^2_P + \theta\sigma^2_Q + \theta(1-\theta)(\mu_P - \mu_Q)^2 = \rm Var(D)
    \end{split}
    \end{equation*}
    Since $\mu_P \leq a\kappa \leq \mu_Q$  which proves our lemma.
\end{proof}
}

With $\Pi$ reducing the largest eigenvalues of $\Sigma$ from $\kappa$ to $\nicefrac {3 \kappa}7$, we now proceed and bound the radius of all datapoints by from $R$ to $\sqrt{\nicefrac 3 7}R$. This is where our analysis diverges from the analysis in~\cite{kamath2019privatelylearninghighdimensionaldistributions}. Whereas Kamath et al rely on the underlying Gaussian distribution to argue they have no outliers, we have to deal with outliers. For our purpose, a datapoint $x$ is an outlier if the shrinking function $S$ (Step 10 in Algorithm~\ref{alg:rec-psme}) reduces the norm of $\Pi x$ since $\|\Pi x\| > \sqrt{\nicefrac 3 7}R$.
In the following claims we argue that all outliers lie in $P_{\text{tail}}$ (Definition~\ref{def:tail}), and moreover, that by shrinking the outliers we do not alter the second moment matrix all too much. We begin by arguing that there aren't too many outliers.

\begin{claim}\label{clm:bad-points-fraction}
    Analogously to Definition~\ref{def:tail}, fix any $m'\geq m$ and define $P_{\text{tail}}(m') = \{x \in X:~ \exists u \in \R^d:~ (x^Tu)^2 > m'(1+\alpha)\frac 1 n \sum_i (x_i^T u)^2\}$.
    Then it holds that 
    \[ \Pr_{x \in_R X}[x \in P_{\text{tail}}(m')] \leq \frac{\beta + \beta^2}{m'}. \]
\end{claim}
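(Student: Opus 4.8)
### Proof Proposal

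\textbf{Overview of the approach.} The plan is to fix $m' \ge m$ and bound the probability that a uniformly random point $x \in X$ lies in $P_{\text{tail}}(m')$ by relating this event to the subsamplability guarantee. The key idea is that if $x \in P_{\text{tail}}(m')$, then $x$ "breaks" the spectral approximation in the following sense: a random subsample of size $m'$ that happens to contain $x$ will, in the direction $u$ witnessing $x \in P_{\text{tail}}(m')$, have its quadratic form $\hat\Sigma$ overshoot $(1+\alpha)\Sigma$ along $u$ — just because that one point $x$ already contributes more than $m'(1+\alpha)\frac{1}{n}\sum_i \langle x_i,u\rangle^2$ to the sum $\sum_{j\in[m']}\langle \hat X_j, u\rangle^2$, which after dividing by $m'$ gives a $u$-directional value exceeding $(1+\alpha) u^T\Sigma u$. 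So conditioned on $x$ being in the subsample, the subsamplability event fails. Then I invoke the definition of $(m,\alpha,\beta)$-subsamplability, which says this failure happens with probability at most $\beta$ over the choice of the whole subsample.

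\textbf{Key steps in order.} First, I would set up the probabilistic framework: let $\hat X_1,\dots,\hat X_{m'}$ be i.i.d.\ uniform from $X$, and denote by $B$ the "bad" event that $\hat\Sigma \not\preceq (1+\alpha)\Sigma$ (a subset of the complement of the subsamplability event, so $\Pr[B] \le \beta$). Second, I would show the implication: if $\hat X_1 = x$ for some $x \in P_{\text{tail}}(m')$ with witnessing direction $u$, then $u^T\hat\Sigma u = \frac{1}{m'}\sum_{j} \langle \hat X_j, u\rangle^2 \ge \frac{1}{m'}\langle x,u\rangle^2 > (1+\alpha)\frac{1}{n}\sum_i \langle x_i,u\rangle^2 = (1+\alpha)u^T\Sigma u$, hence event $B$ occurs. (More generally the same works if any of the $m'$ coordinates equals such an $x$; I'll just use the first coordinate for cleanliness and union bound if needed, or directly condition.) Third, I would write $\Pr[B] \ge \Pr[B \mid \hat X_1 \in P_{\text{tail}}(m')]\cdot \Pr[\hat X_1 \in P_{\text{tail}}(m')] = \Pr[\hat X_1 \in P_{\text{tail}}(m')]$ by the implication (since the conditional probability is $1$), giving $\Pr_{x\in_R X}[x \in P_{\text{tail}}(m')] = \Pr[\hat X_1 \in P_{\text{tail}}(m')] \le \beta$.

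\textbf{Sharpening to $\frac{\beta+\beta^2}{m'}$.} The crude argument above only gives $\beta$; to get the claimed $\frac{\beta+\beta^2}{m'}$ I need to exploit that the subsample has $m'$ independent draws, and bound the probability that \emph{at least one} of the $m'$ draws lands in $P_{\text{tail}}(m')$. Let $p = \Pr_{x\in_R X}[x \in P_{\text{tail}}(m')]$. If \emph{any} of the $m'$ draws is in $P_{\text{tail}}(m')$, then (by the same one-point-dominates argument applied to whichever coordinate is bad) event $B$ occurs, so $\Pr[\exists j: \hat X_j \in P_{\text{tail}}(m')] \le \beta$. By inclusion–exclusion (Bonferroni), $\Pr[\exists j: \hat X_j \in P_{\text{tail}}(m')] \ge m'p - \binom{m'}{2}p^2 \ge m'p(1 - \tfrac{m'}{2}p)$; combined with $\Pr[\exists j \ldots] \le \beta$ and a little algebra (or: first conclude $p\le \beta/m' + O(p^2)$, bootstrap using $p \le \beta$ so $p^2$ terms are lower order, or solve the quadratic $m'p - \binom{m'}{2}p^2 \le \beta$ directly), one extracts $p \le \frac{\beta + \beta^2}{m'}$. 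I would carry out the quadratic-inequality solve carefully, checking that the branch of the solution we land on is the small one (justified since we independently know $p\le\beta\le 1$), and verify the stated bound $\frac{\beta+\beta^2}{m'}$ indeed satisfies the inequality $m'p - \binom{m'}{2}p^2 \le \beta$.

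\textbf{Anticipated main obstacle.} The routine part — the one-point-dominates-the-sum implication — is straightforward. The delicate point is the final extraction of the factor $1/m'$ with the precise constant: I need the Bonferroni lower bound $\Pr[\text{union}] \ge m'p - \binom{m'}{2}p^2$ and then to argue the quadratic inequality forces $p$ into the claimed range, rather than into a spurious large root. This requires being slightly careful that $P_{\text{tail}}(m')$ is nonempty only in regimes where the arithmetic is consistent, and handling the edge case $m'p$ small vs.\ large. I expect this algebraic step to be the only place needing genuine care; everything upstream is a direct unpacking of Definitions~\ref{def:subsamplability} and~\ref{def:tail}.
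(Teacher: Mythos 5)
Your core observation --- that a single point of $P_{\text{tail}}(m')$ landing in the size-$m'$ subsample forces $u^T\hat\Sigma u > (1+\alpha)\,u^T\Sigma u$ in the witnessing direction $u$, thereby violating the subsamplability event --- is exactly the paper's key step, so the structural heart of the argument is right.

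Where you diverge, and where there is a genuine gap, is the final numeric extraction via Bonferroni. You propose to ``verify the stated bound $\frac{\beta+\beta^2}{m'}$ indeed satisfies $m'p - \binom{m'}{2}p^2 \leq \beta$,'' but that check is in the wrong direction. Write $q(p) = m'p - \binom{m'}{2}p^2$ and let $p_-$ be the smaller root of $q(p)=\beta$. Showing $q\bigl(\frac{\beta+\beta^2}{m'}\bigr) \leq \beta$ only places $\frac{\beta+\beta^2}{m'}$ \emph{inside} the feasible interval $[0,p_-]$, i.e.\ it shows $\frac{\beta+\beta^2}{m'} \leq p_-$, which does not bound $p$. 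What you actually need is $q\bigl(\frac{\beta+\beta^2}{m'}\bigr) \geq \beta$, which (by monotonicity of $q$ on $[0,\tfrac{1}{m'-1}]$) forces $p_- \leq \frac{\beta+\beta^2}{m'}$; that inequality reduces to $\frac{m'-1}{2m'}(1+\beta)^2 \leq 1$ and so needs a restriction on $\beta$. The bootstrap variant you sketch ($p^2 \leq \beta p$) is also too lossy: it yields $p \leq \beta/\bigl(m' - \binom{m'}{2}\beta\bigr)$, which exceeds $\frac{\beta+\beta^2}{m'}$ already for $m'\geq 3$. Worse, the Bonferroni constraint itself becomes vacuous once $\beta > \frac{m'}{2(m'-1)}$, since then $q(p) < \beta$ for all $p$.

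The paper closes more cleanly and without these complications. Since the draws are i.i.d.\ from $X$, the probability that the $m'$-subsample avoids $P_{\text{tail}}(m')$ entirely is exactly $(1-p)^{m'}$, and by your implication this probability is at least $1-\beta$. Then chain $e^{-pm'} \geq (1-p)^{m'} \geq 1-\beta \geq e^{-\beta-\beta^2}$ and take logarithms to get $pm' \leq \beta+\beta^2$. Bonferroni discards the exact product form in favor of a second-order truncation of $1-(1-p)^{m'}$, which is precisely what makes your algebra delicate; keep the product form and close as the paper does.
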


\begin{proof}[Proof sketch]
The proof applies the $(m,\alpha,\beta)$-subsamplability property: if a point violates the bound, it would contradict subsamplability with non-negligible probability. A simple union bound and tail approximation then yield the claimed bound. Full details are deferred to Appendix~\ref{appendix:proof_bad-points-fraction}.
\end{proof}


\begin{lemma}\label{lem:eigenvalue-without-tail}
    Let $X$ be a $(m,\alpha,\beta)$-subsamplable with  $\beta \leq \frac{\alpha}{4(1+\alpha)\log(\frac{R^2}{(1+\alpha)m})}$. Let $P = X \setminus P_{tail}$. Then:
    \[ \forall u \in \R^d:~ \frac{1}{n}\sum\limits_{x \in P} \innerproduct{x}{u}^2 \geq (1-\alpha)\frac{1}{n}\sum\limits_{x \in X} \innerproduct{x}{u}^2\]
\end{lemma}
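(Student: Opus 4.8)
The plan is to prove the equivalent statement that the tail points carry at most an $\alpha$-fraction of the variance along every direction, i.e.\ that for every $u\in\R^d$ one has $\tfrac1n\sum_{x\in P_{\text{tail}}}\langle x,u\rangle^2 \le \alpha\cdot u^T\Sigma u$ (using $u^T\Sigma u=\tfrac1n\sum_{x\in X}\langle x,u\rangle^2$, and noting $u=0$ is trivial); subtracting this from $\tfrac1n\sum_{x\in X}\langle x,u\rangle^2$ gives the claim. The point of friction is that a point $x$ may belong to $P_{\text{tail}}$ because it is oversized along \emph{some} direction $u'$ that has nothing to do with the direction $u$ we are currently bounding, so a pointwise bound on $\langle x,u\rangle^2$ for $x\in P_{\text{tail}}$ is not available.

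To get around this I will work with the direction-free proxy
\[ \ell_x \;:=\; x^T\Sigma^{-1}x \;=\; \sup_{v\ne 0}\frac{\langle v,x\rangle^2}{v^T\Sigma v}, \]
which is well defined and satisfies $\ell_x\le \|x\|_2^2\le R^2$ for every $x$, since we are in the standing regime $I\preceq\Sigma$, $\|x\|_2\le R$ of Theorem~\ref{thm:main}. This quantity has the two properties I need: (i) $\langle x,u\rangle^2\le \ell_x\cdot u^T\Sigma u$ for \emph{every} $u$, so that summing over $P_{\text{tail}}$ makes the direction disappear, $\tfrac1n\sum_{x\in P_{\text{tail}}}\langle x,u\rangle^2\le \tfrac{u^T\Sigma u}{n}\sum_{x\in P_{\text{tail}}}\ell_x$; and (ii) unwinding Definition~\ref{def:tail} together with its $m'$-parametrized version from Claim~\ref{clm:bad-points-fraction}, $x\in P_{\text{tail}}(m')\iff \ell_x>(1+\alpha)m'$, so the super-level sets of $\ell_x$ are exactly the sets whose cardinality Claim~\ref{clm:bad-points-fraction} bounds by $\tfrac{(\beta+\beta^2)n}{m'}$. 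It then remains only to bound $\sum_{x\in P_{\text{tail}}}\ell_x$.

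For that I will use the layer-cake identity $\sum_{x\in P_{\text{tail}}}\ell_x=\int_0^{R^2}\bigl|\{x\in P_{\text{tail}}:\ell_x>t\}\bigr|\,dt$ (the upper limit is $R^2$ because $\ell_x\le R^2$), splitting the integral at $t=(1+\alpha)m$. For $t\le(1+\alpha)m$ the level set is all of $P_{\text{tail}}=P_{\text{tail}}(m)$, of size $\le\tfrac{(\beta+\beta^2)n}{m}$; for $(1+\alpha)m<t\le R^2$ it equals $\{x\in X:\ell_x>t\}=P_{\text{tail}}\bigl(\tfrac t{1+\alpha}\bigr)$, of size $\le\tfrac{(1+\alpha)(\beta+\beta^2)n}{t}$, where Claim~\ref{clm:bad-points-fraction} applies because $\tfrac t{1+\alpha}>m$. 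Evaluating the two (elementary) integrals gives $\sum_{x\in P_{\text{tail}}}\ell_x\le (1+\alpha)(\beta+\beta^2)\,n\,\bigl(1+\ln\tfrac{R^2}{(1+\alpha)m}\bigr)$, hence $\tfrac1n\sum_{x\in P_{\text{tail}}}\langle x,u\rangle^2\le (1+\alpha)(\beta+\beta^2)\bigl(1+\ln\tfrac{R^2}{(1+\alpha)m}\bigr)u^T\Sigma u$. I will then collapse constants using $\beta+\beta^2\le 2\beta$, the hypothesis $\beta\le\tfrac{\alpha}{4(1+\alpha)\log(R^2/((1+\alpha)m))}$, and the elementary bound $1+\ln z\le 2\log z$ (valid whenever $z:=\tfrac{R^2}{(1+\alpha)m}$ exceeds a small absolute constant, which holds in our regime since the recursion of Algorithm~\ref{alg:rec-psme} proceeds only while $\kappa=R^2>640m$, so $z>400$), to land exactly on $\alpha\cdot u^T\Sigma u$.

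The genuine difficulty, and the only real idea, is the second step: realizing that a tail point's contribution along an \emph{arbitrary} direction must be accounted for through the leverage-type quantity $\ell_x=x^T\Sigma^{-1}x$, which simultaneously dominates $\langle x,u\rangle^2/(u^T\Sigma u)$ for every $u$ and has precisely the super-level sets controlled by Claim~\ref{clm:bad-points-fraction}. Everything after that is a routine layer-cake estimate whose only quantitative input is the crude bound $\ell_x\le R^2$, which is what produces the logarithmic factor $\log(R^2/m)$ that the hypothesis on $\beta$ is tuned to cancel.
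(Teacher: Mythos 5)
Your proof is correct and takes a genuinely different route from the paper's. The paper fixes the direction $u$, sets $\lambda = \tfrac1n\sum_{x\in X}\langle x,u\rangle^2$, and buckets the tail points by the magnitude of $\langle x,u\rangle^2$ into dyadic intervals $B_i = [2^i(1+\alpha)m\lambda,\,2^{i+1}(1+\alpha)m\lambda)$, controlling the measure of each $B_i$ via Claim~\ref{clm:bad-points-fraction} with $m' = 2^i m$ and summing the contributions. You instead introduce the direction-free leverage quantity $\ell_x = x^T\Sigma^{-1}x = \sup_{v\neq 0}\langle v,x\rangle^2/(v^T\Sigma v)$, observe that $P_{\text{tail}}(m')$ is exactly its super-level set $\{x : \ell_x > (1+\alpha)m'\}$, and bound $\sum_{x\in P_{\text{tail}}}\ell_x$ by a layer-cake integral, again invoking Claim~\ref{clm:bad-points-fraction}. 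Both routes rely on the same ingredient and produce the same logarithmic factor $\log\!\bigl(R^2/((1+\alpha)m)\bigr)$, but yours is conceptually cleaner in one respect: because $\ell_x$ dominates the normalized contribution of $x$ along \emph{every} direction simultaneously, you never have to account separately for tail points whose membership in $P_{\text{tail}}$ is witnessed by a direction unrelated to $u$ and which therefore have $\langle x,u\rangle^2 < (1+\alpha)m\lambda$; such points lie outside the paper's dyadic buckets, and the paper's displayed inequality bounding $p\lambda^{\text{tail}}$ silently absorbs their contribution (of order $2\beta(1+\alpha)\lambda$) into its constants. One small technical wrinkle on your side: Claim~\ref{clm:bad-points-fraction} is stated and proved via an $m'$-point subsample, so it literally applies only to integer $m'\geq m$, whereas your level sets correspond to the continuous parameter $m' = t/(1+\alpha)$; using $P_{\text{tail}}(t/(1+\alpha))\subseteq P_{\text{tail}}(\lfloor t/(1+\alpha)\rfloor)$ and then Claim~\ref{clm:bad-points-fraction} costs at most a constant factor, which, like the paper's own loose bookkeeping, is covered by the slack in the hypothesis on $\beta$ (and by the fact that the lemma is only invoked while $R^2 > 640m$). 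Finally, both proofs implicitly use the standing assumptions $\Sigma\succeq I$ and $\|x\|_2\leq R$ from Theorem~\ref{thm:main}: you, to get $\ell_x\leq R^2$; the paper, to get $\lambda\geq 1$ so that $k\leq\log\!\bigl(R^2/((1+\alpha)m)\bigr)$.
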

\vspace{-5mm}

\begin{proof}[Proof sketch]
We partition the tail points according to the magnitude of their contribution and apply Claim~\ref{clm:bad-points-fraction} to bound the measure of each bucket. Summing across buckets shows that the overall loss from removing the tail points is small. Full proof is deferred to Appendix~\ref{appendix:proof_eigenvalue-without-tail}.
\end{proof}

\begin{lemma}\label{lem:shrinking-only-red}
At each iteration $t$ of Algorithm~\ref{alg:rec-psme}, only points belonging to $P_{\text{tail}}$ are subjected to shrinking, given that $\alpha < \nicefrac{1}{2}$, $\psi = \frac{1}{10m}$ and $c = \frac{1}{80m}$.
\end{lemma}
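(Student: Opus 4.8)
The statement says: at iteration $t$, the shrinking map $S$ only affects points of $P_{\text{tail}}$. Let me think about what needs to happen. At the start of iteration $t$ we have a set $X$ with $I \preceq \Sigma \preceq \kappa I$. We form $Y = \sqrt{8/7}\,\Pi X$ and then $X_{\text{next}} = S(Y)$, which shrinks any $y \in Y$ with $\|y\|^2 > \frac{3}{7}R$. So shrinking happens to $x \in X$ exactly when $\|\sqrt{8/7}\,\Pi x\|^2 > \frac{3}{7}R$, i.e. $\|\Pi x\|^2 > \frac{3}{8}R$. I need to show that such an $x$ must lie in $P_{\text{tail}}$ — but here's a subtlety: $P_{\text{tail}}$ is defined relative to the *current* set $X$ at iteration $t$, and the lemma is really an inductive claim. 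So the main content is: being "non-tail" at the start of an iteration is enough to guarantee $\|\Pi x\|^2 \le \frac{3}{8}R$.

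So suppose $x \notin P_{\text{tail}}$ at iteration $t$. Then for every unit vector $u$, $\langle x,u\rangle^2 \le m(1+\alpha)\frac1n\sum_{x'\in X}\langle x',u\rangle^2 = m(1+\alpha)\,u^T\Sigma u \le m(1+\alpha)\kappa$ (using $\Sigma \preceq \kappa I$). The plan is to decompose $\Pi x = \eta\Pi_V x + \Pi_{V^\perp}x$ with $\eta = 1/2$, so $\|\Pi x\|^2 = \tfrac14\|\Pi_V x\|^2 + \|\Pi_{V^\perp}x\|^2 \le \tfrac14\|\Pi_V x\|^2 + \|x\|^2$. Since $\|x\|^2 \le R$ (the current radius bound) this already gives something, but we need strictly below $\frac{3}{8}R$, so the gain must come from the projection onto $V$. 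The key quantitative handle: $V$ is spanned by eigenvectors of the *noisy* $\tilde\Sigma$ with eigenvalue $\ge \psi\kappa$; conditioning on $\mathcal{E}$ (so $\|N\| \le c\kappa$), any direction in $V$ has $u^T\Sigma u \ge (\psi - c)\kappa = (\tfrac1{10m} - \tfrac1{80m})\kappa = \tfrac{7}{80m}\kappa$. The plan is to express $\|x\|^2 = \|\Pi_V x\|^2 + \|\Pi_{V^\perp}x\|^2$ and bound $\|\Pi_{V^\perp}x\|^2$ directly: writing $w = \Pi_{V^\perp}x/\|\Pi_{V^\perp}x\|$, non-tailness gives $\|\Pi_{V^\perp}x\|^2 = \langle x,w\rangle^2 \le m(1+\alpha)\kappa$. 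Hmm — that's $O(m\kappa)$, which could exceed $R$. Let me reconsider: the right move is probably to bound $\|\Pi_{V^\perp} x\|^2$ by noting $\Sigma$ restricted to $V^\perp$ has all eigenvalues $< \psi\kappa + c\kappa$, combined with... no. Actually I think the cleanest route is: $\|\Pi x\|^2 = \|x\|^2 - (1-\eta^2)\|\Pi_V x\|^2 = \|x\|^2 - \tfrac34\|\Pi_V x\|^2$, and we want to show $\|x\|^2 - \tfrac34\|\Pi_V x\|^2 \le \tfrac38 R \le \tfrac38\|x\|^2$ is false-direction... rather we want an *upper* bound on $\|\Pi x\|^2$, and since $\|x\|^2\le R$, it suffices to show $\|\Pi_V x\|^2 \ge \tfrac{5}{6}\cdot\ldots$ — I'd need a *lower* bound on how much mass of a non-tail point sits in $V$, which is backwards.

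Let me flip it: the actual claim must be that a point getting shrunk has $\|\Pi x\|^2$ large, and since $\Pi$ only shrinks along $V$, this forces $\|\Pi_{V^\perp}x\|^2$ large, i.e. $x$ has large projection onto the low-variance subspace $V^\perp$ — but on $V^\perp$, $u^T\Sigma u$ is small (at most $(\psi+c)\kappa = O(\kappa/m)$), so a large $\|\Pi_{V^\perp}x\|^2$ relative to $\frac1n\sum\langle x',\cdot\rangle^2$ on that subspace exhibits the $P_{\text{tail}}$ witness. Concretely: if $\|\Pi x\|^2 > \tfrac38 R \ge \tfrac38\|x\|^2$... no wait $\|x\|^2 \le R$ so $\tfrac38 R \ge \tfrac38\|x\|^2$ doesn't help bound $\|\Pi x\|$ from below by a multiple of $\|x\|^2$. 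I think the correct argument: $\|\Pi x\|^2 = \tfrac14\|\Pi_Vx\|^2 + \|\Pi_{V^\perp}x\|^2$; since this exceeds $\tfrac38 R$ and $\tfrac14\|\Pi_V x\|^2 \le \tfrac14\|x\|^2 \le \tfrac14 R$, we get $\|\Pi_{V^\perp}x\|^2 \ge \tfrac18 R \ge \tfrac18\|x\|^2 \cdot 1$... and also trivially $\ge \tfrac18 \cdot 1$ since $\|x\|^2$ could be as small as... hmm, actually we need $R$ to be replaced by $\kappa \ge 1$: using $\Sigma \succeq I$, in direction $w = \Pi_{V^\perp}x/\|\cdot\|$, $\frac1n\sum\langle x',w\rangle^2 = w^T\Sigma w \ge 1$ but also $\le (\psi+c)\kappa$ since $w\in V^\perp$. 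Then $\langle x,w\rangle^2 = \|\Pi_{V^\perp}x\|^2 \ge \tfrac18 R = \tfrac18\kappa$ (since $R = \kappa$ as $\kappa = R^2$... careful with $R$ vs $R^2$; the bound is $\|x\|^2 \le R$ where $R$ plays the role of $\kappa$). So $\langle x,w\rangle^2 \ge \tfrac18\kappa \ge \tfrac18\cdot\frac{1}{\psi+c}\cdot w^T\Sigma w = \tfrac18\cdot\frac{80m}{9}\,w^T\Sigma w$ — wait $\psi + c = \tfrac1{10m}+\tfrac1{80m} = \tfrac9{80m}$, so $\tfrac1{\psi+c} = \tfrac{80m}{9}$, giving $\langle x,w\rangle^2 \ge \tfrac{10m}{9}w^T\Sigma w > m(1+\alpha)w^T\Sigma w$ when $1+\alpha < \tfrac{10}{9}$, i.e. $\alpha < \tfrac19$ — hmm that's stronger than $\alpha < \tfrac12$. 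So my constants are slightly off somewhere, but the *structure* is right; I'd tune the inequality $\tfrac14\|\Pi_V x\|^2 + \|\Pi_{V^\perp}x\|^2 > \tfrac38 R$ more carefully, possibly also using $\|\Pi_V x\|^2 \le m(1+\alpha)\cdot\text{(something smaller)}$ rather than $\le R$, to extract a better lower bound on $\|\Pi_{V^\perp}x\|^2$.

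So the overall plan: (i) unpack what "shrinking at iteration $t$" means — $\|\Pi x\|^2 > \tfrac38 R$; (ii) decompose orthogonally w.r.t.\ $V$ and use $\eta = 1/2$ to isolate that $\|\Pi_{V^\perp}x\|^2$ must be a constant fraction of $R = \kappa$; (iii) condition on $\mathcal{E}$ to conclude every direction $w \in V^\perp$ satisfies $w^T\Sigma w \le (\psi + c)\kappa = \tfrac{9}{80m}\kappa$, hence the witness direction $w = \Pi_{V^\perp}x/\|\Pi_{V^\perp}x\|$ has $\langle x,w\rangle^2 = \|\Pi_{V^\perp}x\|^2 \ge (\text{const})\cdot\kappa \ge (\text{const})\cdot\frac{80m}{9}\, w^T\Sigma w > m(1+\alpha) w^T\Sigma w$, exhibiting $x \in P_{\text{tail}}$ (contrapositive). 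The main obstacle is getting the constant chase to actually close under the hypothesis $\alpha < 1/2$ — this likely requires being less wasteful in step (ii), e.g.\ bounding $\|\Pi_V x\|^2$ for a non-tail point by $m(1+\alpha)\,\|\Pi_V\Sigma\Pi_V\|\cdot(\dim)$ or by noting that on $V$ one still has $u^T\Sigma u \le \kappa$ but the relevant empirical quantity in the $P_{\text{tail}}$ definition could be smaller — and also being careful that $P_{\text{tail}}$ is evaluated with respect to the current (already $t$-times transformed and shrunk) dataset, so strictly this is an induction where the inductive hypothesis (no shrinking in earlier rounds, so the earlier rounds' $S$ was the identity on non-tail points) must be folded in to know the current $\Sigma$ still satisfies $I \preceq \Sigma \preceq \kappa I$ with the current $\kappa$. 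I would state the induction explicitly and lean on Corollary~\ref{cor:eigenvalues} for the spectral bounds that make step (iii) go through.
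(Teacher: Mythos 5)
Your overall structure — argue by contrapositive, observe that a shrunk point must have a large projection onto the low‑eigenvalue subspace $V^\perp$, and exhibit that projection direction as a $P_{\text{tail}}$ witness — matches the spirit of the paper's proof. But there is a genuine gap that goes beyond the constant‑chasing you flag.

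The gap concerns \emph{which} dataset $P_{\text{tail}}$ is defined over. Definition~\ref{def:tail} defines $P_{\text{tail}}$ relative to the \emph{original} input $X$, whereas your witness inequality $\langle x,w\rangle^2 > m(1+\alpha)\,w^T\Sigma w$ is set up with $\Sigma = \Sigma(X^t)$ — the second moment of the \emph{current}, already transformed dataset. These are not the same object, and merely re‑establishing $I \preceq \Sigma(X^t) \preceq \kappa I$ (the only part of the induction you flag as needing care) is not enough to bridge them. The paper's proof does the bridging explicitly: write $x = \Lambda^t x_i$ for the accumulated linear map $\Lambda^t = \Pi^t\cdots\Pi^1$ and the original $x_i \in X$, pull the witness direction back to $u' = (\Lambda^t)^T w$, and then compare $\frac{1}{n}\sum_{z\in X^t}\langle z, w\rangle^2$ with $\frac{1}{n}\sum_{y\in X}\langle y, u'\rangle^2$. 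This comparison uses two ingredients your plan omits: the induction hypothesis tells you that every non‑tail point of $X$ was mapped \emph{linearly} (no shrinking) so $\frac{1}{n}\sum_{z\in X^t}\langle z, w\rangle^2 \geq \frac{1}{n}\sum_{y\in X\setminus P_{\text{tail}}}\langle y, u'\rangle^2$, and then Lemma~\ref{lem:eigenvalue-without-tail} lower‑bounds the latter by $(1-\alpha)\frac{1}{n}\sum_{y\in X}\langle y, u'\rangle^2$. The $(1-\alpha)^{-1} \leq 2$ loss from that step is precisely why the intermediate inequality must carry a factor of $2m(1+\alpha)$ rather than the bare $m(1+\alpha)$: the extra factor of $2$ is spent absorbing the pullback loss, not improving the constants. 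Without this mechanism your chain terminates at "$x$ is a tail point of $X^t$," which is a different (and weaker, unquantified) statement than the lemma claims.

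A secondary point: your route of taking $w \in V^\perp$ directly (noting $w^T\Sigma w \leq w^T\tilde\Sigma w + \|N\| \leq (\psi+c)\kappa$) is a bit cleaner than the paper's detour through a second eigenspace $U^\perp$ of the noiseless $\Sigma$ and an invocation of Weyl's theorem, and it reaches the same quantitative conclusion; so that part is a genuine simplification. And you are right to be uneasy about the constant chase — but note that once the factor of $2$ is reserved for the pullback, the remaining comparison is between roughly $\tfrac{1}{8}\kappa$ (the lower bound you extract on $\|\Pi_{V^\perp}x\|^2$) and $2m(1+\alpha)(\psi+c)\kappa$, and with $\psi+c = \tfrac{9}{80m}$ one needs $\tfrac{1}{8} > \tfrac{9(1+\alpha)}{40} + c$, which at $\alpha=\tfrac{1}{2}$ does not hold; so the specific bound you should be trying to tighten in step (ii) is real, but it must be tightened \emph{while still reserving slack for the pullback}, which your plan had not budgeted for.
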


\begin{proof}[Proof sketch]
The proof uses induction over iterations. Shrinking happens only if a point's mass in a low-eigenvalue subspace is too large. By carefully tracking how shrinking operates and applying Weyl’s theorem and Lemma~\ref{lem:eigenvalue-without-tail}, we show that only initially bad points (i.e., those in $P_{\text{tail}}$) can cause such violations. Full proof appears in Appendix~\ref{appendix:proof_shrinking-only-red}.
\end{proof}

\begin{corollary}
    \label{cor:always_least_eigenvalue_of_1}
    In all iterations of the algorithm it holds that $\Sigma \succeq I$, namely, that the least eigenvalue of the second moment matrix of the input is $\geq 1$.
\end{corollary}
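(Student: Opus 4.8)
The plan is to argue by induction on the recursion depth of Algorithm~\ref{alg:rec-psme}, conditioning throughout on the event $\mathcal E$ (so that the injected noise is bounded by $c\kappa$ at every level), and maintaining the invariant that the input to the $t$-th recursive call has second moment matrix $\Sigma^{(t)}\succeq I$. For the base case, recall that Algorithm~\ref{alg:dpsme} invokes RecDPSME on the rescaled dataset $\sqrt{1/(1-\alpha)}\,X$; since the hypothesis of Theorem~\ref{thm:main} gives $\Sigma\succeq I$, the rescaled input has second moment $\tfrac1{1-\alpha}\Sigma\succeq \tfrac1{1-\alpha}I\succeq I$, and in fact Lemma~\ref{lem:eigenvalue-without-tail} gives the stronger statement that the non-tail part alone satisfies $\Sigma^{(0)}_{\mathrm{eff}}:=\tfrac1{n(1-\alpha)}\sum_{x\in X\setminus P_{\mathrm{tail}}}xx^\top\succeq \tfrac1{1-\alpha}\cdot(1-\alpha)\Sigma=\Sigma\succeq I$. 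It is this ``effective'' lower bound $\Sigma^{(t)}_{\mathrm{eff}}\succeq I$ that I would actually propagate, since $\Sigma^{(t)}\succeq\Sigma^{(t)}_{\mathrm{eff}}$ always.

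For the inductive step, the next input is $X_{\mathrm{next}}=S\!\left(\sqrt{8/7}\,\Pi X^{(t)}\right)$, and by Lemma~\ref{lem:shrinking-only-red} the clipping map $S$ only rescales points of $P_{\mathrm{tail}}$; every point not in $P_{\mathrm{tail}}$ is transformed purely by the invertible linear map $\sqrt{8/7}\,\Pi$. Hence the non-tail points form a fixed subset that, across all levels, is acted on only by the accumulated invertible linear transformation $A^{(t)}=\prod_{s<t}\sqrt{8/7}\,\Pi^{(s)}$, so $\Sigma^{(t)}_{\mathrm{eff}}=A^{(t)}\Sigma^{(0)}_{\mathrm{eff}}(A^{(t)})^\top$ and, dropping the PSD contribution of the clipped tail points, $\Sigma^{(t)}\succeq\Sigma^{(t)}_{\mathrm{eff}}$ and $\Sigma^{(t+1)}_{\mathrm{eff}}=\tfrac87\,\Pi\,\Sigma^{(t)}_{\mathrm{eff}}\,\Pi$. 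It therefore suffices to show $\Sigma^{(t)}_{\mathrm{eff}}\succeq I$. Lemma~\ref{lem:eigenvalue-without-tail}, lifted through the invertible $A^{(t)}$, also yields $\Sigma^{(t)}_{\mathrm{eff}}\succeq(1-\alpha)\Sigma^{(t)}$, so $\Sigma^{(t)}_{\mathrm{eff}}$ is spectrally within a $(1-\alpha)$ factor of $\Sigma^{(t)}$ and the high-eigenvalue subspace $V$ extracted by the algorithm from the noisy $\tilde\Sigma^{(t)}$ is an admissible splitting for $\Sigma^{(t)}_{\mathrm{eff}}$ as well; feeding this into the eigenvalue argument behind Lemma~\ref{lem:eigenvalues}/Corollary~\ref{cor:eigenvalues}, together with $\kappa>C=640m$ whenever the recursion continues and the parameter choices $\eta=\tfrac12,\ \psi=\tfrac1{10m},\ c=\tfrac1{80m}$, gives $\Sigma^{(t+1)}_{\mathrm{eff}}=\tfrac87\,\Pi\,\Sigma^{(t)}_{\mathrm{eff}}\,\Pi\succeq I$, closing the induction. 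The terminating call ($\kappa\le C$) performs no transformation, so the invariant is inherited trivially there.

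The crux — and the step I expect to require the most care — is the handling of the clipping step $S$ together with the loss from passing from $\Sigma^{(t)}$ to $\Sigma^{(t)}_{\mathrm{eff}}$. Naively, each level clips some tail mass and seems to cost a $(1-\alpha)$ multiplicative factor, which would compound over the $T=\Theta(\log(R/m))$ levels and destroy the $\succeq I$ guarantee. What makes it go through is that this loss is incurred only \emph{once}: the non-tail subset is fixed, Lemma~\ref{lem:eigenvalue-without-tail} is applied a single time to the original dataset, and the initial $\tfrac1{1-\alpha}$ rescaling in Algorithm~\ref{alg:dpsme} is chosen precisely to absorb it, so $\Sigma^{(0)}_{\mathrm{eff}}\succeq I$ holds \emph{exactly} and thereafter only the $\tfrac87$-expansion of the $\Pi$-step (Corollary~\ref{cor:eigenvalues}) is needed to keep the least eigenvalue of the effective matrix — hence of $\Sigma^{(t)}$ — at least $1$. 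One should also verify that clipping never mistakenly shrinks a non-tail point, which is exactly the content of Lemma~\ref{lem:shrinking-only-red} and the reason that lemma is a prerequisite here.
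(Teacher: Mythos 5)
Your proposal is correct and follows essentially the same inductive argument as the paper: track the effective second moment over the fixed non-tail subset, absorb the one-time $(1-\alpha)$ loss from Lemma~\ref{lem:eigenvalue-without-tail} via the initial $\tfrac{1}{1-\alpha}$ rescaling in Algorithm~\ref{alg:dpsme}, invoke Lemma~\ref{lem:shrinking-only-red} so that non-tail points are acted on purely by the invertible $\sqrt{8/7}\,\Pi$ maps, and propagate $\Sigma^{(t)}_{\mathrm{eff}}\succeq I$ through Lemma~\ref{lem:eigenvalues}/Corollary~\ref{cor:eigenvalues}. The extra discussion you add about why the loss does not compound across the $T$ levels, and the flagged subtlety that the subspace $V$ is extracted from the noisy full matrix yet must serve the effective matrix, are genuine clarifications of points the paper passes over tersely, but they do not change the route.
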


\begin{proof}[Proof sketch]
We argue by induction that removing or shrinking tail points preserves a spectral lower bound. Using Lemma~\ref{lem:eigenvalue-without-tail} and the shrinkage structure from Lemma~\ref{lem:shrinking-only-red}, the transformation at each step maintains the least eigenvalue above $1$. Full proof is provided in Appendix~\ref{appendix:proof_always_least_eigenvalue_of_1}.
\end{proof}

\begin{proof}[Proof of Theorem~\ref{thm:main}]
First we argue that Algorithm~\ref{alg:dpsme} is $\rho$-zCDP.
    Given two neighboring data sets $X$, $X'$ of size $n$ which differ in that one contains $X_i$ and the other contains $X_i'$, the covariance matrix of these two data sets can change in Spectral norm by at most:
    \begin{equation*}
        \|\frac{1}{n}(X_i X_i^T - X_i' X_i'^T)\|_2  \leq \frac{1}{n}\|(X_i - X_i')(X_i - X_i')^T\|_2 
     \leq \frac{1}{n}\|(X_i - X_i')\|_2\|(X_i - X_i')^T\|_2  \leq \frac{(2R)^2}{n} 
    \end{equation*}
    Since Algorithm~\ref{alg:dpsme} invokes $T$ calls to Algorithm~\ref{alg:rec-psme} each preserving $\nicefrac{\rho}{T}$-zCDP, thus the privacy guarantee of Algorithm~\ref{alg:dpsme} follows from sequential composition of zCDP.

We now turn to proving the algorithm's utility.
    From Claim~\ref{clm:bad-points-fraction} we conclude that $|P_{\text{tail}}|$ is indeed at most $(\frac{\beta + \beta^2}{m})$-fraction of $|X|$. 
    \noindent We prove by recursion that:
    $
    (1-\gamma)\Sigma_{\text{eff}} \preceq \tilde{\Sigma} \preceq (1+\gamma)\Sigma
    $.
    
    \textbf{Stopping Rule:} Let $X^T$ be the input at the final iteration $T$ and let $P = X \setminus P_{\text{tail}}$. Denote $\Sigma(\cdot)$ as the second moment matrix operator. We know that throughout the algorithm, the points from $P$ were not shrunk. Moreover, Corollary~\ref{cor:always_least_eigenvalue_of_1} assures that the least eigen value of $\Sigma(X)$ is $\geq 1$. Additionally, our bound on $n$ yields that when $\kappa \leq C$ then the noise matrix $N$ we add satisfy that $\|N\|_2 \leq \gamma$ w.p. $\geq 1-\xi$.  It thus follows that
    $
    (1-\gamma)\Sigma(X^T) \preceq \Sigma + N \preceq (1+\gamma)\Sigma(X^T)
    $ as required.
    
    \textbf{Recursive Step:} Let $X^t$ be the input at iteration $t \leq T$. Then, by Lemma~\ref{lem:eigenvalues}, we have:
    $
    I \preceq \Sigma\left(\frac{8}{7} \Pi X^t\right) \preceq \frac{3}{7}\kappa I
    $.
    Lemma~\ref{lem:shrinking-only-red} ensures that $S(\Pi X^t)$ shrinks only points from $P_{\text{tail}}$ and  so Corollary~\ref{cor:always_least_eigenvalue_of_1} assures that the eigenvalue is $\geq 1$ throughout the recursive iterations. Hence, by the inductive hypothesis, our recursive call returns $\Sigma_{\text{rec}}$ such that:
    \[
    (1-\gamma)\Sigma_{\text{eff}}\left(\sqrt{\frac{8}{7}} \Pi X^t\right) \preceq {\Sigma}_{\text{rec}} \preceq (1+\gamma)\Sigma\left(\sqrt{\frac{8}{7}} \Pi X^t\right),
    \]
    which implies:
    \[
    (1-\gamma)\Sigma_{\text{eff}}(X^t) \preceq \frac{7}{8}\Pi^{-1}{\Sigma}_{\text{rec}}\Pi^{-1} \preceq (1+\gamma)\Sigma(X^t).
    \]
    Proving the required for any intermediate iteration of Algorithm~\ref{alg:rec-psme}.
\end{proof}

\section{Applications: Coping with Outliers}
\label{sec:applications}
\newcommand{\D}{\mathcal{D}}

\subsection{Input Drawn from `Nice' Distributions}
First, we show  our algorithm returns an approximation of the 2nd-moment matrix when the input is drawn from a distribution $\D$. Throughout this section, we apply the Matrix-Bernestein Inequality.

\begin{fact}
    \label{fact:Matrx-Bernestein-Inequality}
    Let $Z$ be the sum of $m$ i.i.d.~matrices $Z = \sum_i Z_i$, whose mean is $0$ and have norm bounded by $\|Z_i\|\leq R$ almost surely. Then, denoting $\sigma^2 = \| \E[Z Z^T] \| $, it holds that
    \[  \Pr[\|Z\| > t] \leq 2d\exp\left( \frac{-t^2/2}{\sigma^2 + Rt/3 } \right)  \]
    
\end{fact}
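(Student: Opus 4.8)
The statement is the matrix Bernstein inequality, and I would prove it by the matrix Laplace transform method (Ahlswede--Winter, Oliveira, Tropp). \textbf{Step 1 (reduction to one side).} I would first note that we may assume the $Z_i$ are symmetric: the general rectangular case reduces to this via the Hermitian dilation, and the symmetric case is all that is needed in this paper. It then suffices to bound $\Pr[\lambda_{\max}(Z)\ge t]$ and, symmetrically, $\Pr[\lambda_{\max}(-Z)\ge t]$; since $\|Z\|=\max\{\lambda_{\max}(Z),\lambda_{\max}(-Z)\}$, a union bound over the two events — each contributing a factor $d$ from the trace estimate of Step 3 — produces the $2d$ prefactor.

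\textbf{Step 2 (Laplace transform and Lieb subadditivity).} For any $\theta>0$, Markov's inequality applied to $e^{\theta\lambda_{\max}(Z)}$, together with $e^{\theta\lambda_{\max}(Z)}=\lambda_{\max}(e^{\theta Z})\le \mathrm{tr}\,e^{\theta Z}$ (valid since $e^{\theta Z}\succeq 0$), gives
\[ \Pr[\lambda_{\max}(Z)\ge t]\ \le\ e^{-\theta t}\,\mathbb{E}\!\left[\mathrm{tr}\,\exp(\theta Z)\right]. \]
The crux is then to control $\mathbb{E}[\mathrm{tr}\exp(\theta\sum_i Z_i)]$. Using independence of the $Z_i$ together with Lieb's concavity theorem — equivalently, subadditivity of the matrix cumulant generating function — one obtains
\[ \mathbb{E}\!\left[\mathrm{tr}\,\exp\!\Big(\theta\sum_i Z_i\Big)\right]\ \le\ \mathrm{tr}\,\exp\!\Big(\sum_i \log \mathbb{E}\, e^{\theta Z_i}\Big). \]

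\textbf{Step 3 (per-term bound and trace estimate).} Since $\mathbb{E}[Z_i]=0$ and $\|Z_i\|\le R$, I would apply the scalar inequality $e^y\le 1+y+\tfrac{y^2/2}{1-|y|/3}$ (for $|y|<3$) on the spectrum of $\theta Z_i$, using monotonicity of $s\mapsto\tfrac{1}{1-s/3}$ and $\|\theta Z_i\|\le\theta R$, to get in the semidefinite order
\[ \mathbb{E}[e^{\theta Z_i}]\ \preceq\ I + g(\theta)\,\mathbb{E}[Z_i^2], \qquad g(\theta)=\frac{\theta^2/2}{1-R\theta/3}\quad (0<\theta<3/R), \]
whence $\log\mathbb{E}[e^{\theta Z_i}]\preceq g(\theta)\,\mathbb{E}[Z_i^2]$ by operator monotonicity of $\log$ and $\log(I+A)\preceq A$. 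Summing, using $\sum_i\mathbb{E}[Z_i^2]=\mathbb{E}[ZZ^\top]$ (cross terms vanish by independence and mean zero), and then invoking monotonicity of $\mathrm{tr}\exp$ in the semidefinite order together with $\mathrm{tr}\exp(A)\le d\,e^{\lambda_{\max}(A)}$ gives
\[ \mathbb{E}\!\left[\mathrm{tr}\,\exp(\theta Z)\right]\ \le\ d\,\exp\!\big(g(\theta)\,\sigma^2\big). \]

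\textbf{Step 4 (optimize).} Combining Steps 2 and 3 yields $\Pr[\lambda_{\max}(Z)\ge t]\le d\exp(g(\theta)\sigma^2-\theta t)$; the choice $\theta=t/(\sigma^2+Rt/3)\in(0,3/R)$ makes the exponent equal to $-\tfrac{t^2/2}{\sigma^2+Rt/3}$, and adding the symmetric bound for $-Z$ finishes the proof. The one genuinely non-trivial ingredient — and hence the main obstacle — is Step 2: the subadditivity of matrix cumulants, which is equivalent to Lieb's concavity theorem (a substitute, the iterated Golden--Thompson inequality, gives a qualitatively similar but constant-lossy bound). Everything else is scalar calculus transported through the spectral theorem plus a one-line optimization.
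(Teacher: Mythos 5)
The paper does not prove Fact~\ref{fact:Matrx-Bernestein-Inequality}: it is stated as a known black-box result, the matrix Bernstein inequality of Oliveira and Tropp (see e.g.\ Tropp's 2012 survey), so there is no in-paper proof to compare against. Your proof is the canonical Laplace-transform argument for that inequality and it is correct throughout: the reduction to one-sided deviations of a symmetric matrix (with the Hermitian dilation noted for the rectangular case and the factor $2d$ coming from the two tails), the Chernoff bound $\Pr[\lambda_{\max}(Z)\ge t]\le e^{-\theta t}\,\E[\mathrm{tr}\,e^{\theta Z}]$, the subadditivity of the matrix cumulant generating function via Lieb's concavity theorem, the Bernstein-type scalar bound $e^y\le 1+y+\tfrac{y^2/2}{1-|y|/3}$ lifted to the operator order and combined with $\log(I+A)\preceq A$ to get $\log\E[e^{\theta Z_i}]\preceq g(\theta)\E[Z_i^2]$, the identity $\sum_i\E[Z_i^2]=\E[Z^2]$ from independence and mean zero, the trace estimate $\mathrm{tr}\,e^A\le d\,e^{\lambda_{\max}(A)}$, and the optimal choice $\theta=t/(\sigma^2+Rt/3)$, which one checks lies in $(0,3/R)$ and yields exactly $-\tfrac{t^2/2}{\sigma^2+Rt/3}$ in the exponent. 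Your closing remark is also on target: Lieb's theorem is the one non-elementary ingredient (the iterated Golden--Thompson inequality would give the same shape with worse constants), while the rest transfers scalar calculus through the spectral theorem. Two minor observations that do not affect correctness: the fact as stated assumes i.i.d.\ summands, but your argument, like the standard one, only uses independence and mean zero; and since the paper only ever invokes this fact for symmetric $d\times d$ summands (Claim~\ref{clm:concentation_any_dist}), the dilation step is dispensable for the paper's purposes, as you note.
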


We can apply Fact~\ref{fact:Matrx-Bernestein-Inequality} above to measure how well the sample
    covariance estimator approximates the true covariance matrix of a general distribution using the following claim (proof deferred to Appendix~\ref{appendix:proof_concentration_any_dist}.)
\DeclareRobustCommand{\ClaimGeneralDist}{
Let $\mathcal{D}$ be a distribution on $\R^d$ with a finite second moment $\Sigma$. Consider a random vector $y$ chosen by drawing $x\sim D$ and then multiplying $y=\Sigma^{-1/2}x$. Suppose that $\|y\|\leq M_1$ a.s.  that we also have a bound $\|\E[(y^T y)yy^T]\|\leq M_2$. Fix $\alpha,\beta>0$. If we draw 
$m=\max\left\{ \frac{2M_2}{\alpha^2}, \frac{2(1+M_1^2)}{3\alpha} \right\}\cdot \ln(\nicefrac{4d}{\beta})$
examples from $\mathcal{D}$ and compute the empirical second moment matrix $\hat \Sigma$, then w.p. $\geq 1-\beta$ it holds that
    \[
    \| \Sigma^{-\nicefrac{1}{2}} \hat\Sigma \Sigma^{-\nicefrac{1}{2}} - I \|_2 \leq \alpha
    \]
}
\begin{claim}
    \label{clm:concentation_any_dist}
    \ClaimGeneralDist
\end{claim}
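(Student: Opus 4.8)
The plan is to apply the Matrix–Bernstein inequality (Fact~\ref{fact:Matrx-Bernestein-Inequality}) to the whitened samples $y_i = \Sigma^{-1/2}x_i$, where $x_1,\dots,x_m \sim \mathcal{D}$ i.i.d. Observe first that $\E[y y^T] = \Sigma^{-1/2}\E[xx^T]\Sigma^{-1/2} = I$, so the whitened population second moment is exactly the identity. Consequently $\Sigma^{-1/2}\hat\Sigma\Sigma^{-1/2} = \frac{1}{m}\sum_i y_i y_i^T$, and the quantity we want to control, $\|\Sigma^{-1/2}\hat\Sigma\Sigma^{-1/2} - I\|_2$, is exactly $\|\frac{1}{m}\sum_i (y_i y_i^T - I)\|_2$. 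So I would define the centered summands $Z_i = \frac{1}{m}(y_i y_i^T - I)$, which have mean zero, and note $Z = \sum_i Z_i = \frac{1}{m}\sum_i y_i y_i^T - I$ is precisely the error matrix.

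Next I would verify the two hypotheses of Fact~\ref{fact:Matrx-Bernestein-Inequality}. For the almost-sure norm bound: since $\|y\|\le M_1$ a.s., we have $\|y_i y_i^T\|_2 = \|y_i\|^2 \le M_1^2$, and with the identity term $\|Z_i\|_2 \le \frac{1}{m}(M_1^2 + 1) = \frac{1+M_1^2}{m}$; call this $R'$. For the variance proxy: $\sigma^2 = \|\E[Z Z^T]\|$... more carefully, $m^2\,\E[Z_i Z_i^T] = \E[(yy^T - I)(yy^T - I)^T] = \E[(y^Ty)yy^T] - 2\,\E[yy^T] + I = \E[(y^Ty)yy^T] - I$, and by the matrix triangle inequality its norm is at most $M_2 + 1$, but since $\E[(y^Ty)yy^T] \succeq \E[yy^T] = I$ (because $y^Ty\ge$ its contribution in the rank-one direction... actually more simply $(y^Ty)yy^T \succeq yy^T$ pointwise when $\|y\|\ge 1$, which need not hold) — I would instead just use $\|\E[(y^Ty)yy^T] - I\| \le M_2$ if $M_2\ge 1$, or bound it by $\max\{M_2,1\}\le M_2$ assuming the natural normalization $M_2 \ge \|I\| = 1$ (which holds since $\E[(y^Ty)yy^T]\succeq \E[yy^T]=I$ gives $M_2\ge 1$ automatically). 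Summing over $i$, $\sigma^2 = \|\sum_i \E[Z_iZ_i^T]\| = \frac{1}{m}\|\E[(y^Ty)yy^T] - I\| \le \frac{M_2}{m}$.

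Then I would plug $t=\alpha$, $R' = \frac{1+M_1^2}{m}$, $\sigma^2 = \frac{M_2}{m}$ into the tail bound:
\[
\Pr[\|Z\|_2 > \alpha] \le 2d\exp\!\left(\frac{-\alpha^2/2}{\frac{M_2}{m} + \frac{(1+M_1^2)\alpha}{3m}}\right) = 2d\exp\!\left(\frac{-m\alpha^2/2}{M_2 + (1+M_1^2)\alpha/3}\right).
\]
Requiring this to be $\le \beta$ is equivalent to $\frac{m\alpha^2/2}{M_2 + (1+M_1^2)\alpha/3} \ge \ln(2d/\beta)$, i.e. $m \ge \frac{2(M_2 + (1+M_1^2)\alpha/3)}{\alpha^2}\ln(2d/\beta)$. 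Splitting the numerator and bounding each term by the max (absorbing the factor-$2$ slack and replacing $\ln(2d/\beta)$ by $\ln(4d/\beta)$) gives $m \ge \max\{\frac{2M_2}{\alpha^2}, \frac{2(1+M_1^2)}{3\alpha}\}\ln(4d/\beta)$, which is exactly the stated sample bound (the split uses $a+b \le 2\max\{a,b\}$ and the two pieces are $\frac{2M_2}{\alpha^2}$ and $\frac{2(1+M_1^2)}{3\alpha}$ after dividing through). This closes the argument.

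The only real subtlety — the main "obstacle," though it is minor — is the bookkeeping that turns the clean Bernstein threshold $m \gtrsim (M_2 + M_1^2\alpha)\alpha^{-2}\ln(d/\beta)$ into the exact $\max\{\cdot,\cdot\}$ form in the statement, and justifying that the variance proxy really is $O(M_2/m)$ rather than $O((M_2{+}1)/m)$; both are handled by the observation $\E[(y^Ty)yy^T] \succeq I$ so that $M_2 \ge 1$ and the "$+1$" terms are absorbed, together with the constant-factor slack already built into replacing $2d$ by $4d$ and $a+b$ by $2\max\{a,b\}$. Everything else is a direct substitution into Fact~\ref{fact:Matrx-Bernestein-Inequality}.
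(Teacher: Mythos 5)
Your proof follows the paper's own argument step for step: whiten by $\Sigma^{-1/2}$, center the summands as $Z_i=\tfrac1m(y_iy_i^T-I)$, compute the almost-sure bound $(1+M_1^2)/m$ and the variance proxy $M_2/m$, and plug into matrix Bernstein. The only divergence is a slight detour you take to justify $\E[(y^Ty)yy^T]\succeq I$; the paper simply observes that $\E[Z_iZ_i^T]=\tfrac1{m^2}(\E[(y^Ty)yy^T]-I)$ is PSD (being an expectation of PSD matrices) and is dominated by $\tfrac1{m^2}\E[(y^Ty)yy^T]$ after dropping the $-I$, which already gives the $M_2/m$ bound without needing that ordering — but your conclusion is the same and the proof is correct.
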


Recall that we $(1\pm\gamma)$-approximate the 2nd-moment matrix \emph{of the input} w.p.$\geq 1 -\xi$. Thus, we need the input itself to be a $(1\pm\gamma)$-approximation of the 2nd-moment matrix of the distribution. (We can then apply Fact~\ref{fact:matrices} to argue we get a $1\pm O(\gamma)$ approximation of the distribution's second moment matrix.) This means our algorithm requires
\begin{equation} m(\gamma,\xi) +  O\left(\frac{m(\alpha,\beta)}{\gamma}\sqrt{\frac d \rho \cdot \log(\frac {R}{\lambda_{\min}})}\log(\nicefrac{\log(\frac {R}{\lambda_{\min}})} \xi) \right)\label{eq:sample_complexity}\end{equation}
for $\alpha = \nicefrac{1}{2}$ and $\beta = O(\frac 1 {\log(R/\lambda_{\min})})$ in order to return a $(1\pm O(\gamma))$-approximation of the 2nd moment of the distribution w.p. $\geq 1- O(\xi)$. In Appendix~\ref{sec_apx:more_application}
we give concrete examples of distributions for which this bound is applicable, including (bounded) heavy-tail distributions.

\subsection{Distributional Input with Outliers}
\label{subsec:outliers}
Next, we consider an application to our setting, in which we take some well-behaved distribution $\D$ and add to it outliers. Consider $\D$ to be a distribution that for any $\gamma,\xi>0$ is $m(\gamma,\xi)$-subsamplable for $m = O(\frac{d\ln(\nicefrac d \xi)}{\gamma^2}$). We consider here inputs that are composed of $(1-\eta)$-fraction of good points and $\eta$-fraction of outliers. We thus denote the second moment matrix of the input as
\[  \Sigma = (1-\eta)\Sigma_{\D} + \eta \Sigma_{\rm out} \] We assume throughout that the least eigenvalue of $\Sigma_\D$ is $\lambda_{\min}$. Our goal is to return, w.h.p. ($\geq 1-\xi$) an approximation of $\Sigma_{\D}$ using a DP algorithm.

\paragraph{Inapplicability of~\cite{brown2023fastsampleefficientaffineinvariantprivate}.}
The work of~\cite{brown2023fastsampleefficientaffineinvariantprivate} shows that if the input has $\lambda$-bounded leverage scores, namely, if $\forall x, x^T (\frac 1 n X X^T)^{-1}x \leq \lambda$, then they recover the second moment of the input with $O(\lambda \frac {\sqrt d}\epsilon)$ overhead to the sampling complexity. However, in this case one can set outliers so that their leverage scores is $\nicefrac{R^2}{\lambda_{\min}}$ (provided the input has $L_2$-norm bound of $R$). We argue that the algorithm of~\cite{brown2023fastsampleefficientaffineinvariantprivate} is unsuited for such a case. Indeed, the algorithm of~\cite{brown2023fastsampleefficientaffineinvariantprivate} has an intrinsic ``counter'' of outliers (referred to as score), which when reached $O(1/\epsilon)$ causes the algorithm to return `Failure'.\footnote{Moreover, in their algorithm, this `score' intrinsically cannot be greater than $k=O(\nicefrac 1 \epsilon)$ as they use a particular bound of the form $e^{k/\epsilon}$.}   So either it holds that $\eta$ is so small that the overall number of outliers is a constant (namely, $\eta n = O(\nicefrac 1 \epsilon)$), or we set the bound on the leverage scores to be $R^2/\lambda_{\min}$ and suffer the cost in sample complexity.

\paragraph{A Private Learner.} Suppose $\eta$ is very small. In this case we can simply take some off-the-shelf $(\epsilon,\delta)$-DP algorithm with sample complexity $m(\gamma,\xi,\epsilon,\delta)$ that approximates the second moment matrix, and run in over a subsample of $m$ points out of that input. In order for this to work we require that $\eta$ would be smaller than $O(\frac {\xi}{m(\gamma,\xi,\epsilon,\delta)})$, so that a subset of size $m$ would be clean of any outliers.  

\paragraph{Subsample and Aggregate.} The framework of Subsample and Aggregate~\citep{NissimRS07} is in a way a `perfect fit' for the problem: we subsample $t$ datasets of size $m(\gamma,\xi)$ each, and then wisely aggregate the (majority of the) $t$ results into one. However, in order for this to succeed, it is required that most of the $t$ subsamples are clean of outliers. In other words, we require that the probability of a dataset to be clean ought to be $> 1/2$, namely - $(1-\eta)^{m(\gamma,\xi)}>1/2$ or alternatively that $\eta = O(\frac 1 {m(\gamma,\xi)})$, which in our case means $\eta = O(\frac{\gamma^2}{d\log(d/\xi)})$.
We analyze this paradigm as part of the subsample-and-aggregate baseline we establish~(Appendix~\ref{apx_sec:base-SMDP}), and the subsample-and-aggregate baseline requires
\[
\tilde O\left(\frac{d\cdot m(\gamma, \xi)} {\epsilon\gamma}
   \right) = \tilde O\left(\frac {d^2\log(d/\beta)}{\epsilon\gamma^3}\right)
\]
in order to return a $(1\pm O(\gamma))$-approximation of the 2nd moment of the distribution w.p. $\geq 1- O(\xi)$.

\paragraph{Our Work.} Our work poses an alternative to the above mentioned techniques. Rather than having $n <\frac 1 {m(\gamma,\xi)}$, we have a slightly more delicate requirement. We require that there exists $\alpha = 1/2$ and $\beta \leq \frac{1}{12\log(\frac R {\lambda_{\min}})}$ such that $\eta = O(\frac{\beta}{m(\alpha,\frac{\beta} 2)})$. (In particular, for the given $\D$ it implies that we require that 
$\eta = O(\frac 1 {d \log(d)\log(R/\lambda_{\min})})$, which is considerably higher value than in the case of subsample and aggregate discussed above.) This way, we can argue that w.p. $\geq 1-\frac\beta 2$ it holds that a subsample of size $m(\alpha,\frac{\beta}2)$ contains only points from $\D$ and that w.p. $\geq 1-\frac \beta 2$ that sample is `good' in the sense that its empirical second moment satisfy $\hat\Sigma \approx \Sigma_{\D}$.

However, we also require that the subsample of size $m(\alpha,\frac\beta 2)$ would satisfy that its empirical second moment matrix $\hat\Sigma$ satisfies that $  (1-\frac 1 2)\Sigma \preceq  \hat \Sigma \preceq (1+\frac 1 2)\Sigma$
since we set $\alpha=\frac 1 2$. As $\hat\Sigma \approx \Sigma_{\D}$ it follows that it suffices to require that
\[  (1-\frac 1 8)[(1-\eta)\Sigma_{\D}+\eta\Sigma_{\rm out}]\preceq \Sigma_{\D} \preceq (1+\frac 1 8)[(1-\eta)\Sigma_{\D}+\eta\Sigma_{\rm out}] \]
Some arithmetic shows that the upper bound is easily satisfied when $\frac {\eta}{1-\eta} \leq \frac 1 8$ (which clearly holds for our value of $\eta$), yet the lower bound requires that we have
\begin{align*}
    \Sigma_{\rm out} \preceq (\frac 1 {7\eta} + 1)\Sigma_{\D}  = O(\nicefrac 1 \eta)\Sigma_{\D}
\end{align*}

Under these two conditions, our work returns w.p. $1-O(\xi)$ a matrix $\tilde\Sigma$ that satisfies that $\tilde \Sigma \succeq (1-O(\gamma))\Sigma_\D$, with sample complexity of\\
$O\left(m(\gamma,\xi) + \frac{m(\alpha,\frac{\beta}2)}{\gamma}\sqrt{\frac {d\cdot \log(\nicefrac {R}{\lambda_{\min}})} \rho }\log(\frac{\log(\nicefrac {R}{\lambda_{\min}})} \xi) \right)
= O\left( \frac{d\log(d/\xi)}{\gamma^2} + \frac{d^{3/2}\log(d)\log^{3/2}(\nicefrac R {\lambda_{\min}})\log(\frac{\log(\nicefrac {R}{\lambda_{\min}})} \xi) } {\gamma\sqrt{\rho}} \right)$

\acksection
O.S. is supported by the BIU Center for
Research in Applied Cryptography and Cyber Security in
conjunction with the Israel National Cyber Bureau in the
Prime Minister’s Office, and by ISF grant no. 2559/20. Both
authors thank the anonymous reviewers for their suggestions and advice on improving this paper.

\remove{

\begin{theorem}[Covariance Estimation via Matrix Chernoff Bound]
\label{thm:cov_estimation_chernoff}
Let \( x_1, \dots, x_n \in \mathbb{R}^d \) be i.i.d.~samples from a zero-mean distribution with covariance matrix \( \Sigma \succ 0 \). Suppose that each sample satisfies the bounded Mahalanobis norm condition:
\[
x_i^\top \Sigma^{-1} x_i \leq L \quad \text{almost surely.}
\]
Then, for any \( \alpha \in (0,1) \), with probability at least \( 1 - \beta \), the empirical covariance matrix
\[
\hat{\Sigma} := \frac{1}{n} \sum_{i=1}^n x_i x_i^\top
\]
satisfies the operator norm bound:
\[
\left\| \Sigma^{-\frac{1}{2}} \hat{\Sigma} \Sigma^{-\frac{1}{2}} - I \right\|_2 \leq \alpha,
\]
provided that
\[
n \geq \frac{2L \ln(2d/\beta)}{\alpha^2}.
\]
\end{theorem}

\begin{proof}
Define normalized vectors:
\[
y_i := \Sigma^{-1/2} x_i \in \mathbb{R}^d.
\]
Then we have:
\[
\mathbb{E}[y_i y_i^\top] = \Sigma^{-1/2} \mathbb{E}[x_i x_i^\top] \Sigma^{-1/2} = \Sigma^{-1/2} \Sigma \Sigma^{-1/2} = I_d,
\]
and the empirical covariance of these normalized vectors is:
\[
\hat{M} := \frac{1}{n} \sum_{i=1}^n y_i y_i^\top = \Sigma^{-1/2} \hat{\Sigma} \Sigma^{-1/2}.
\]
Note that each term \( y_i y_i^\top \) is positive semi-definite with operator norm:
\[
\|y_i y_i^\top\|_2 = \|y_i\|_2^2 = x_i^\top \Sigma^{-1} x_i \leq L.
\]
Thus, the random matrices \( \{ y_i y_i^\top \} \) are i.i.d., symmetric, PSD, with expectation \( \mathbb{E}[y_i y_i^\top] = I_d \), and bounded spectral norm \( \leq L \).

We apply the matrix Chernoff bound for the sum \( \hat{M} = \frac{1}{n} \sum_{i=1}^n y_i y_i^\top \). From the matrix Chernoff inequality (e.g., Tropp 2012), for \( 0 < \alpha < 1 \):
\[
\Pr\left( \left\| \hat{M} - I_d \right\|_2 \geq \alpha \right) \leq 2d \cdot \exp\left( - \frac{n \alpha^2}{2L \ln 2} \right).
\]
To ensure this probability is at most \( \beta \), it suffices that:
\[
2d \cdot \exp\left( - \frac{n \alpha^2}{2L \ln 2} \right) \leq \beta
\quad \Longleftrightarrow \quad
n \geq \frac{2L \ln(2d/\beta)}{\alpha^2}.
\]

This completes the proof.
\end{proof}

\begin{theorem}[Robust Covariance Estimation via Median-of-Means]
\label{thm:robust_cov_mahalanobis}
Let \( x_1, \dots, x_n \in \mathbb{R}^d \) be i.i.d.~samples from a zero-mean distribution with covariance matrix \( \Sigma \succ 0 \). Suppose the Mahalanobis norm has finite fourth moment:
\[
\mathbb{E}\left[(x^\top \Sigma^{-1} x)^2\right] \leq M.
\]
Then there exists an estimator \( \hat{\Sigma}_{\mathrm{rob}} \) computable in polynomial time such that for any \( \delta \in (0,1) \), with probability at least \( 1 - \delta \), it holds that:
\[
\left\| \Sigma^{-1/2} \hat{\Sigma}_{\mathrm{rob}} \Sigma^{-1/2} - I \right\|_2 \leq C \sqrt{ \frac{M \log(d/\delta)}{n} },
\]
where \( C > 0 \) is a universal constant.
\end{theorem}

\begin{proof}[Proof Sketch]
Let \( y_i = \Sigma^{-1/2} x_i \), so that \( \mathbb{E}[y_i y_i^\top] = I_d \) and \( \|y_i\|^2 = x_i^\top \Sigma^{-1} x_i \). The assumption implies:
\[
\mathbb{E}\left[\|y_i\|^4\right] = \mathbb{E}[(x_i^\top \Sigma^{-1} x_i)^2] \leq M.
\]

Now apply the \emph{Median-of-Means} (MoM) technique:
\begin{enumerate}
    \item Divide \( \{y_1, \dots, y_n\} \) into \( B = \Theta(\log(1/\delta)) \) equal-sized blocks \( I_1, \dots, I_B \).
    \item For each block \( I_b \), compute:
    \[
    \hat{M}_b := \frac{1}{|I_b|} \sum_{i \in I_b} y_i y_i^\top.
    \]
    \item Return the geometric median (or coordinate-wise trimmed mean) of the matrices \( \hat{M}_1, \dots, \hat{M}_B \) in operator norm as \( \hat{M}_{\mathrm{rob}} \), and define:
    \[
    \hat{\Sigma}_{\mathrm{rob}} := \Sigma^{1/2} \hat{M}_{\mathrm{rob}} \Sigma^{1/2}.
    \]
\end{enumerate}

Standard concentration for the spectral norm of the block-averaged outer products \( y_i y_i^\top \), combined with robust aggregation, yields:
\[
\left\| \hat{M}_{\mathrm{rob}} - I \right\|_2 \leq C \sqrt{ \frac{M \log(d/\delta)}{n} }.
\]
Thus,
\[
\left\| \Sigma^{-1/2} \hat{\Sigma}_{\mathrm{rob}} \Sigma^{-1/2} - I \right\|_2 = \left\| \hat{M}_{\mathrm{rob}} - I \right\|_2 \leq C \sqrt{ \frac{M \log(d/\delta)}{n} }.
\]
\end{proof}

\begin{theorem}[Covariance Estimation via Matrix Rosenthal Inequality]
\label{thm:covariance_rosenthal}
Let \( x_1, \dots, x_n \in \mathbb{R}^d \) be i.i.d.~samples from a zero-mean distribution with covariance matrix \( \Sigma \succ 0 \), and define:
\[
\hat{\Sigma} := \frac{1}{n} \sum_{i=1}^n x_i x_i^\top.
\]
Assume that the fourth moment of the Mahalanobis norm is bounded:
\[
\mathbb{E}\left[(x^\top \Sigma^{-1} x)^2\right] \leq M.
\]
Then there exists a constant \( C > 0 \) such that:
\[
\mathbb{E}\left[ \left\| \Sigma^{-1/2} \hat{\Sigma} \Sigma^{-1/2} - I \right\|_2 \right]
\leq C \sqrt{ \frac{M \log d}{n} }.
\]
\end{theorem}

\begin{proof}[Proof Sketch]
Let \( y_i = \Sigma^{-1/2} x_i \), so that \( \mathbb{E}[y_i y_i^\top] = I \). Then:
\[
\Sigma^{-1/2} \hat{\Sigma} \Sigma^{-1/2} = \frac{1}{n} \sum_{i=1}^n y_i y_i^\top.
\]

Define:
\[
Z_i := y_i y_i^\top - \mathbb{E}[y_i y_i^\top], \quad \text{so} \quad \mathbb{E}[Z_i] = 0.
\]

We wish to bound:
\[
\mathbb{E}\left\| \frac{1}{n} \sum_{i=1}^n Z_i \right\|_2.
\]

Using the **non-commutative Rosenthal inequality** for independent, mean-zero, self-adjoint matrices (see \cite{tropp2012user, bandeira2016concentration}), we get:
\[
\mathbb{E} \left\| \sum_{i=1}^n Z_i \right\|_2
\leq C \cdot \max \left\{ \left\| \sum_{i=1}^n \mathbb{E}[Z_i^2] \right\|_2^{1/2}, \;
\mathbb{E} \left[ \max_{i \leq n} \| Z_i \|_2 \right] \cdot \log d \right\}.
\]

From the assumption:
\[
\|Z_i\|_2 = \|y_i y_i^\top - I\|_2 \leq \max\{ \|y_i\|_2^2, 1 \} + 1,
\]
and \( \mathbb{E}[\|y_i\|^4] = \mathbb{E}[(x^\top \Sigma^{-1} x)^2] \leq M \), so one can show:
\[
\left\| \sum_{i=1}^n \mathbb{E}[Z_i^2] \right\|_2 \leq n M.
\]

Hence:
\[
\mathbb{E}\left\| \frac{1}{n} \sum_{i=1}^n Z_i \right\|_2 \leq C \sqrt{ \frac{M \log d}{n} }.
\]

This yields:
\[
\mathbb{E}\left[ \left\| \Sigma^{-1/2} \hat{\Sigma} \Sigma^{-1/2} - I \right\|_2 \right] \leq C \sqrt{ \frac{M \log d}{n} }.
\]
\end{proof}
}

\remove{
\section{Discussion}
\label{sec:discussion}

We introduced a novel framework for differentially private second moment estimation by focusing on \emph{subsamplable} datasets, where a sufficiently large random subsample preserves the spectral structure with high probability. Our recursive algorithm identifies and shrinks “tail points” that might corrupt estimation, while leaving most of the data intact. By combining this shrinking mechanism with zCDP, we obtain robust privacy and utility guarantees that improve upon or complement existing techniques that often assume globally bounded data or sub-Gaussian tails. We demonstrated the versatility of our approach on canonical distributions, each of which satisfies subsamplability at near-optimal sample complexities. Looking ahead, potential extensions include private PCA, learning more general moment-based models, and exploring robust subsamplability notions in complex or heavily skewed data settings.
}
\bibliographystyle{icml2025}
\bibliography{bibliography}

\newpage

\appendix

\section{Baseline}
\label{apx_sec:base-SMDP}

In this section, we provide a baseline for the problem of 2nd-moment estimation using subsample and aggregate framework (\cite{NissimRS07}).

In this baseline, we work with the following notion of a convex semimetric space. The key property
to keep in mind is that for semimetric spaces, we only have an approximate triangle inequality, as
long as the points are \emph{significantly close} to one another.

\begin{definition}
    Let $\mathcal{Y}$ be a convex set and let $\text{dist}: \mathcal{Y} \times \mathcal{Y} \to \mathbb{R}_{\geq 0}$. We say $(\mathcal{Y}, \text{dist})$ is a convex semimetric space if there exist absolute constants $t \geq 1$, $\phi \geq 0$, and $r > 0$ such that for every $k \in \mathbb{N}$ and every $Y, Y_1, Y_2, \dots, Y_k \in \mathcal{Y}$, the following conditions hold:
    \begin{enumerate}
        \item $\text{dist}(Y, Y) = 0$ and $\text{dist}(Y_1, Y_2) \geq 0$.
        \item \textbf{Symmetry.} $\text{dist}(Y_1, Y_2) = \text{dist}(Y_2, Y_1)$.
        \item \textbf{$t$-approximate $r$-restricted triangle inequality.} If both $\text{dist}(Y_1, Y_2), \text{dist}(Y_2, Y_3) \leq r$, then
        \[
        \text{dist}(Y_1, Y_3) \leq t \cdot (\text{dist}(Y_1, Y_2) + \text{dist}(Y_2, Y_3)).
        \]
        \item \textbf{Convexity.} For all $\alpha \in \Delta_k$, 
        \[
        \text{dist}\left(\sum_{i} \alpha_i Y_i, Y\right) \leq \sum_{i} \alpha_i \text{dist}(Y_i, Y).
        \]
        \item \textbf{$\phi$-Locality.} For all $\alpha, \alpha' \in \Delta_k$,
        \[
        \text{dist}\left(\sum_{i} \alpha_i Y_i, \sum_{i} \alpha'_i Y_i\right) \leq \sum_{i} |\alpha_i - \alpha'_i| \left(\phi + \max_{i,j} \text{dist}(Y_i, Y_j)\right).
        \]
    \end{enumerate}
    
    \noindent where $\Delta_k$ denotes the $k$-dimensional probability simplex. When $r$ is unspecified, we take it to mean $r = \infty$ and refer to it as a $t$-approximate triangle inequality.
\end{definition}

The following technical lemma (whose proof appears in Appendix C in \cite{ashtiani2022private}) is helpful for learning second moment matrices.

\begin{lemma}\label{lem:convex-semimetric}
    Let $\mathcal{S}_d$ be the set of all $d \times d$ positive definite matrices. For $A, B \in \mathcal{S}_d$, let
    \[
    \text{dist}(A, B) = \max\{\|A^{-1/2} B A^{-1/2} - I\|, \|B^{-1/2} A B^{-1/2} - I\|\}.
    \]
    Then $(\mathcal{S}_d, \text{dist})$ is a convex semimetric which satisfies a $(\nicefrac{3}{2})$-approximate $1$-restricted triangle inequality and $1$-locality.
\end{lemma}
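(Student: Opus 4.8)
The plan is to first replace the two–sided operator-norm quantity defining $\text{dist}$ by a single \emph{multiplicative Loewner sandwich}, which reduces all five axioms to elementary manipulations of the positive semidefinite order. If $\mu_1 \ge \dots \ge \mu_d > 0$ are the eigenvalues of $A^{-1/2}BA^{-1/2}$ (i.e.\ the generalized eigenvalues of $(B,A)$), then $B^{-1/2}AB^{-1/2}$ has eigenvalues $\mu_i^{-1}$, so $\|A^{-1/2}BA^{-1/2}-I\| = \max(|\mu_1-1|,|\mu_d-1|)$ and $\|B^{-1/2}AB^{-1/2}-I\| = \max(|\mu_1^{-1}-1|,|\mu_d^{-1}-1|)$; using $\mu+\mu^{-1}\ge 2$ to see which terms dominate, one gets the identity $1+\text{dist}(A,B) = \max(\mu_1,\mu_d^{-1})$. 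Conjugating by $A^{1/2}$ then yields the clean equivalence
\[ \text{dist}(A,B)\le \epsilon \iff \tfrac{1}{1+\epsilon}A \preceq B \preceq (1+\epsilon)A . \]
I would establish this equivalence first; everything else is built on it. (Note $\tfrac{1}{1+\epsilon}\ge 1-\epsilon$, so this sandwich is slightly stronger on the lower side than the naive $(1-\epsilon)A\preceq B\preceq(1+\epsilon)A$.) Nonnegativity, $\text{dist}(A,A)=0$, and symmetry (axioms 1--2) are immediate from the definition.

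For the $\tfrac32$-approximate $1$-restricted triangle inequality (axiom 3): writing $\epsilon_1 := \text{dist}(A,B)\le 1$, $\epsilon_2 := \text{dist}(B,C)\le 1$, chaining the two sandwiches multiplicatively gives $\tfrac{1}{(1+\epsilon_1)(1+\epsilon_2)}A \preceq C \preceq (1+\epsilon_1)(1+\epsilon_2)A$, hence $\text{dist}(A,C)\le (1+\epsilon_1)(1+\epsilon_2)-1 = \epsilon_1+\epsilon_2+\epsilon_1\epsilon_2$. The restriction $\epsilon_1,\epsilon_2\le 1$ gives $\epsilon_1\epsilon_2 \le \min(\epsilon_1,\epsilon_2) \le \tfrac12(\epsilon_1+\epsilon_2)$, so $\text{dist}(A,C)\le \tfrac32(\epsilon_1+\epsilon_2)$; this is exactly where $t=\tfrac32$ comes from. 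For convexity (axiom 4): with $\epsilon_i := \text{dist}(A_i,B)$, each $A_i$ satisfies $\tfrac{1}{1+\epsilon_i}B \preceq A_i \preceq (1+\epsilon_i)B$; averaging with weights $\alpha\in\Delta_k$ and setting $\bar\epsilon := \sum_i\alpha_i\epsilon_i$, the upper bounds give $\sum_i\alpha_i A_i \preceq (1+\bar\epsilon)B$, and the lower bounds give $\sum_i\alpha_i A_i \succeq \bigl(\sum_i \alpha_i\tfrac{1}{1+\epsilon_i}\bigr)B \succeq \tfrac{1}{1+\bar\epsilon}B$, the last step being Jensen's inequality for the convex function $x\mapsto\tfrac{1}{1+x}$. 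By the equivalence this is precisely $\text{dist}(\sum_i\alpha_i A_i,B)\le \bar\epsilon = \sum_i\alpha_i\text{dist}(A_i,B)$.

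The most delicate step is $\phi$-locality with $\phi=1$ (axiom 5). Set $D := \max_{i,j}\text{dist}(A_i,A_j)$, $\bar A := \sum_i\alpha_i A_i$, $\bar A' := \sum_i\alpha_i' A_i$, and $\delta := \|\alpha-\alpha'\|_1$. The key observation is that every $A_i$ is comparable to \emph{any} convex combination of the $A_j$'s: since $\text{dist}(A_i,A_j)\le D$ implies $A_j\succeq \tfrac{1}{1+D}A_i$, averaging over $j$ gives $\bar A'\succeq \tfrac{1}{1+D}A_i$, i.e.\ $A_i\preceq(1+D)\bar A'$, and symmetrically $A_i\preceq(1+D)\bar A$. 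Next, split the indices by the sign of $\alpha_i-\alpha_i'$; since $\alpha,\alpha'\in\Delta_k$, the positive part and the negative part of $\alpha-\alpha'$ each carry total mass $\delta/2$. From $\bar A = \bar A' + \sum_{i:\alpha_i>\alpha_i'}(\alpha_i-\alpha_i')A_i - \sum_{i:\alpha_i<\alpha_i'}(\alpha_i'-\alpha_i)A_i$, discarding the (PSD) negative sum and bounding each $A_i$ by $(1+D)\bar A'$ gives $\bar A \preceq \bigl(1+\tfrac{\delta}{2}(1+D)\bigr)\bar A'$; the symmetric argument with $\bar A$ and $\bar A'$ interchanged gives $\bar A'\preceq \bigl(1+\tfrac{\delta}{2}(1+D)\bigr)\bar A$. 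These two Loewner bounds together are exactly the sandwich certifying $\text{dist}(\bar A,\bar A')\le \tfrac{\delta}{2}(1+D) \le \delta\,(1+D) = \|\alpha-\alpha'\|_1\bigl(1+\max_{i,j}\text{dist}(A_i,A_j)\bigr)$, as required. The only care points are that all matrices are genuinely positive definite (so $D<\infty$ and the square roots and inverses exist) and the bookkeeping of which PSD term is dropped in each of the two directions; no further estimates are needed, so I expect axiom 5 to be the main obstacle only in the sense of requiring this careful splitting, the rest being routine once the equivalence above is in hand.
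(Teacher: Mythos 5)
Your proof is correct, and it is a self-contained derivation of a lemma that the paper itself does not prove but instead cites from Appendix~C of Ashtiani and Liaw~(2022). Your central device~--- rewriting $\text{dist}(A,B)\le\epsilon$ as the multiplicative Loewner sandwich $\tfrac{1}{1+\epsilon}A\preceq B\preceq(1+\epsilon)A$, via the spectral identity $1+\text{dist}(A,B)=\max(\mu_1,\mu_d^{-1})$ for the generalized eigenvalues $\mu_i$ of $(B,A)$~--- is clean and makes each axiom a short PSD-order calculation. I checked each step and found no gaps: the case analysis on the eigenvalues (using $\mu+\mu^{-1}\ge 2$) does give the claimed identity; chaining two sandwiches gives $\text{dist}(A,C)\le\epsilon_1+\epsilon_2+\epsilon_1\epsilon_2\le\tfrac32(\epsilon_1+\epsilon_2)$ once $\epsilon_1,\epsilon_2\le1$, which is precisely the $r=1$ restriction; Jensen applied to the convex map $x\mapsto\tfrac{1}{1+x}$ correctly handles the lower side in convexity; and in the locality argument you correctly use that each signed part of $\alpha-\alpha'$ has $\ell_1$-mass $\delta/2$ and that $A_i\preceq(1+D)\bar A'$ for every $i$, yielding $\text{dist}(\bar A,\bar A')\le\tfrac{\delta}{2}(1+D)$, which is in fact a constant factor stronger than the $\delta(1+D)$ required for $1$-locality. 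The only cosmetic point is that the triangle inequality you prove holds whenever $\epsilon_1,\epsilon_2\le1$; the lemma only asserts the implication under the hypothesis $\text{dist}(Y_1,Y_2),\text{dist}(Y_2,Y_3)\le r=1$, so this matches. Since the paper delegates the proof entirely to an external reference, I cannot line-by-line compare approaches, but your argument is complete, and the sandwich reformulation is likely the same engine used in the cited source.
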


Based on Lemma~\ref{lem:convex-semimetric}, the following distance function forms a semimetric space for positive definite matrices:
\[
    \text{dist}(\Sigma_1, \Sigma_2) = 
\begin{cases}
    \max(\|\Sigma_2^{-\nicefrac{1}{2}}\Sigma_1\Sigma_2^{-\nicefrac{1}{2}} - I_d\|, \|\Sigma_1^{-\nicefrac{1}{2}}\Sigma_2\Sigma_1^{-\nicefrac{1}{2}} - I_d\|)& \text{if } \text{rank}\Sigma_1 = \text{rank}\Sigma_2 = d\\
    \infty              & \text{otherwise}
\end{cases}
\]

\begin{fact} \label{fact:matrices}
    Let $A,B$ be $d \times d$ matrices and suppose that $\|A^{-\nicefrac{1}{2}}BA^{-\nicefrac{1}{2}} - I\| \leq \gamma \leq \nicefrac{1}{2}$. Then $\|B^{-\nicefrac{1}{2}}AB^{-\nicefrac{1}{2}} - I\| \leq 4\gamma$
\end{fact}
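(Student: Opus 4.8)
The plan is to reduce the statement to a one–dimensional (eigenvalue) computation. Throughout I would work in the setting in which the hypothesis is meaningful, i.e.\ with $A,B$ symmetric positive definite, so that $A^{-1/2}$, $B^{-1/2}$ denote the PD square roots and $A^{-1/2}BA^{-1/2}\succ 0$. Set $M := A^{-1/2}BA^{-1/2}$. Then $M$ is symmetric positive definite, and the hypothesis $\|M-I\|\le\gamma$ says precisely that every eigenvalue $\lambda$ of $M$ lies in $[1-\gamma,\,1+\gamma]$ (all positive, since $\gamma<1$).

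Next I would identify the spectrum of the ``reversed'' normalized matrix. Writing $X := B^{-1/2}A^{1/2}$ and $Y := A^{1/2}B^{-1/2}$, one has $XY = B^{-1/2}AB^{-1/2}$ and $YX = A^{1/2}B^{-1}A^{1/2} = M^{-1}$. Since $XY$ and $YX$ always have the same eigenvalues (here both are $d\times d$ and invertible), and since $B^{-1/2}AB^{-1/2}$ is itself symmetric positive definite (so its spectral norm, and that of $B^{-1/2}AB^{-1/2}-I$, is governed by its eigenvalues), it follows that the eigenvalues of $B^{-1/2}AB^{-1/2}$ are exactly $\{1/\lambda : \lambda\in\mathrm{spec}(M)\} \subseteq \bigl[\tfrac{1}{1+\gamma},\,\tfrac{1}{1-\gamma}\bigr]$.

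The conclusion is then a direct scalar estimate:
\[
\|B^{-1/2}AB^{-1/2}-I\| \;=\; \max_{\lambda\in\mathrm{spec}(M)}\Bigl|\tfrac{1}{\lambda}-1\Bigr| \;\le\; \max\Bigl\{\,1-\tfrac{1}{1+\gamma},\ \tfrac{1}{1-\gamma}-1\,\Bigr\} \;=\; \frac{\gamma}{1-\gamma},
\]
and since $\gamma\le\tfrac12$ we get $\tfrac{\gamma}{1-\gamma}\le 2\gamma\le 4\gamma$, which is the claim (in fact with a factor of $2$ to spare).

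I do not expect a genuine obstacle here; the only points needing a little care are (i) justifying that one may take $A,B$ positive definite, or, if the statement is to be read for fully general matrices, fixing the intended branch of the square roots so that $M$ is well-defined and diagonalizable with positive spectrum, and (ii) the bookkeeping in the $XY$/$YX$ eigenvalue identity, which is what lets the spectral-norm bound reduce to the reciprocal-of-scalars computation above.
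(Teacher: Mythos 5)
The paper states this as an unproved fact (no proof is supplied in the main text or the appendix), so there is no "paper proof" to compare against. Your argument is correct and self-contained: the identifications $M = A^{-1/2}BA^{-1/2}$, $M^{-1} = A^{1/2}B^{-1}A^{1/2}$, and the $XY$/$YX$ spectral identity reduce everything to the scalar estimate $\max\bigl\{\,1-\tfrac{1}{1+\gamma},\ \tfrac{1}{1-\gamma}-1\,\bigr\} = \tfrac{\gamma}{1-\gamma} \le 2\gamma$ for $\gamma\le\tfrac12$. Your preliminary remark about positive definiteness is also well placed: the hypothesis $\gamma<1$ forces $M\succ 0$ and hence $B\succ 0$ by congruence, so the square roots and inverses are all legitimate. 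In fact you establish the sharper bound $2\gamma$ (indeed $\tfrac{\gamma}{1-\gamma}$), which comfortably implies the stated $4\gamma$; the authors likely stated the looser constant to leave slack, but your version could replace it without changing anything downstream.
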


\begin{algorithm}[H]
    \caption{Baseline DP Second Moment Estimation}
    \label{alg:base-psm}
     \textbf{Input:} a set of $n$ points $X \subseteq \mathbb{R}^d$, subsamplability parameters $m, \alpha, \beta$,
     error parameter $\xi \in (0,1)$, privacy parameters $\epsilon, \delta$.
    \begin{algorithmic}[1]
     \STATE Randomly split $X$ into $T = \lfloor \nicefrac{n}{m} \rfloor$ subgroups $X_1, \dots, X_T$ of size $m$.
     \FOR{$t \in [T]$} 
        \STATE $\Sigma_t \gets \frac{1}{m}X_tX_t^T$ 
     \ENDFOR
     \FOR{$t \in [T]$}
        \STATE $q_t \gets \frac{1}{T}|\{t' \in [T]:\text{dist}(\Sigma_t, \Sigma_{t'}) \leq \frac{2\alpha}{1-\alpha}\}|$
     \ENDFOR
     \STATE $Q \gets \frac{1}{T}\sum_{t \in [T]}q_t$
     \STATE $Z \sim \text{TLap}(\nicefrac{2}{T}, \epsilon, \delta)$
     \STATE $\Tilde{Q} \gets Q + Z$
     \IF{$\Tilde{Q} < 0.8 + \frac{2}{T\epsilon}\ln(1 + \frac{e^{\epsilon} - 1}{2\delta})$}
        \STATE fail and return $\perp$
     \ENDIF
     \FOR{$t \in [T]$}
        \STATE $w_t = \min(1, 10\max(0, q_t - 0.6))$
     \ENDFOR
     \STATE $\Hat{\Sigma} \gets \sum_{t \in [T]} w_t\Sigma_t / \sum_{t \in [T]} w_t$
     \STATE $N \sim \mathcal{N}(0, 1)_{d \times d}$
     \STATE $\eta \gets \frac{\alpha}{48C(\sqrt{d} + \sqrt{\ln(\nicefrac{4}{\beta})})}$ \COMMENT{$C$ some large constant}
     \STATE \textbf{return} $\Tilde{\Sigma} = \Hat{\Sigma}^{\nicefrac{1}{2}}(I + \eta N)(I + \eta N)^T \Hat{\Sigma}^{\nicefrac{1}{2}}$
    \end{algorithmic}
    \label{alg:baseline}
\end{algorithm}

\begin{lemma}\label{lem:baseline-utility}(Utility Analysis)
    Let $\Sigma = \frac{1}{n}XX^T$ and set $\eta = \frac{\alpha}{48C(\sqrt{d} + \sqrt{\ln(\nicefrac{4}{\xi})})}$ for a sufficiently large constant $C>0$. Then w.p. $\geq 1-\xi$ Algorithm~\ref{alg:base-psm} returns $\Tilde{\Sigma}$ such that $\text{dist}(\Sigma, \Tilde{\Sigma}) \leq 2\alpha$ given that:
    $$n \geq \frac{10m}{\epsilon}\ln(1 + \frac{e^\epsilon - 1}{2\delta}) \quad \text{and} \quad m \geq \frac{2\beta n}{\xi}$$
\end{lemma}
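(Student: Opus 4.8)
The plan is to isolate a single high-probability ``good event'' on the random split under which the noiseless subgroup estimates are all spectrally sandwiched by $\Sigma$, push that through the (otherwise deterministic, given the split) selection logic of the algorithm, and finally absorb the last Gaussian perturbation as a lower-order term.

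\textbf{Most subgroups are good.} Call an index $t$ \emph{good} if $(1-\alpha)\Sigma \preceq \Sigma_t \preceq (1+\alpha)\Sigma$. Each $X_t$ is a uniformly random size-$m$ subset of $X$ (the split is disjoint, which concentrates at least as well as an i.i.d.\ draw of $m$ points), so $(m,\alpha,\beta)$-subsamplability gives $\Pr[t\text{ bad}] \le \beta$ for every $t$, hence $\E[\#\{\text{bad }t\}] \le \beta T \le \beta n/m$. The hypothesis $m \ge 2\beta n/\xi$ makes this at most $\xi/2$, so by Markov's inequality \emph{every} subgroup is good with probability $\ge 1-\xi/2$; condition on this event $\mathcal G$.

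\textbf{Tracing the algorithm under $\mathcal G$.} If $t,t'$ are both good then $\tfrac{1-\alpha}{1+\alpha}\Sigma_{t'} \preceq \Sigma_t \preceq \tfrac{1+\alpha}{1-\alpha}\Sigma_{t'}$, so $\text{dist}(\Sigma_t,\Sigma_{t'}) \le \tfrac{2\alpha}{1-\alpha}$; thus every $q_t = 1$ and $Q = 1$. The variable $Z\sim\mathrm{TLap}(\nicefrac2T,\epsilon,\delta)$ is bounded (essentially) by the additive term $\tfrac{2}{T\epsilon}\ln(1+\tfrac{e^\epsilon-1}{2\delta})$ appearing in the failure threshold, and the hypothesis $n \ge \tfrac{10m}{\epsilon}\ln(1+\tfrac{e^\epsilon-1}{2\delta})$ is calibrated precisely so that $T=\lfloor n/m\rfloor$ is large enough to force $\tilde Q = 1+Z$ above that threshold, i.e.\ the algorithm does not fail; this is the sole role of that hypothesis. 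Since every $q_t = 1$, every weight is $w_t = \min(1,10(1-0.6)) = 1$, so $\hat\Sigma = \tfrac1T\sum_t \Sigma_t$. Averaging the Löwner bounds over the good subgroups (equivalently, convexity of $\text{dist}$ from Lemma~\ref{lem:convex-semimetric}) gives $(1-\alpha)\Sigma \preceq \hat\Sigma \preceq (1+\alpha)\Sigma$.

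\textbf{The Gaussian step and wrap-up.} By the standard operator-norm tail for a $d\times d$ matrix with i.i.d.\ standard Gaussian entries, $\|N\| \le C'(\sqrt d + \sqrt{\ln(4/\xi)})$ with probability $\ge 1-\xi/2$; on this event $\eta\|N\| \le \tfrac{C'\alpha}{48C}$, which a sufficiently large choice of $C$ drives to an arbitrarily small multiple of $\alpha$. Writing $M = (I+\eta N)(I+\eta N)^\top = I + \eta(N+N^\top) + \eta^2 NN^\top$ we get $\|M-I\| \le 2\eta\|N\| + \eta^2\|N\|^2 =: \alpha' \ll \alpha$, hence $(1-\alpha')\hat\Sigma \preceq \tilde\Sigma \preceq (1+\alpha')\hat\Sigma$ and, composing with the previous paragraph, $(1-\alpha)(1-\alpha')\Sigma \preceq \tilde\Sigma \preceq (1+\alpha)(1+\alpha')\Sigma$. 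Reading $\text{dist}$ off this sandwich, $\text{dist}(\Sigma,\tilde\Sigma) \le \max\{(1+\alpha)(1+\alpha')-1,\ \tfrac{1}{(1-\alpha)(1-\alpha')}-1\}$, which for $\alpha \le \tfrac12$ and $\alpha'$ small enough is at most $2\alpha$. A union bound over the complements of $\mathcal G$ and the Gaussian-tail event leaves probability $\ge 1-\xi$, as claimed.

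\textbf{Main obstacle.} The conceptual skeleton is short; the real work is the constant bookkeeping that pins the final bound at exactly $2\alpha$ — in particular verifying that $\alpha'$ is genuinely dominated by the gap $2\alpha - \tfrac{\alpha}{1-\alpha} = \alpha\cdot\tfrac{1-2\alpha}{1-\alpha}$ (which vanishes as $\alpha\to\tfrac12$), and checking the threshold margin in the second step so that the stated $n$-bound really suffices for the no-failure guarantee. A secondary technical point is justifying that the disjoint split inherits the i.i.d.\ subsamplability guarantee, which follows from the standard fact that sampling without replacement is at least as concentrated for spectral functionals as sampling with replacement.
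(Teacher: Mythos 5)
Your architecture is the same as the paper's (good subgroups under subsamplability imply all $q_t=1$, hence $\hat\Sigma$ is a nonnegative-weighted average of the $\Sigma_t$'s and inherits the $1\pm\alpha$ sandwich; then absorb the Gaussian perturbation). The first two blocks are essentially sound; your Markov argument for ``all subgroups good'' is in fact cleaner than the paper's $(1-\beta)^T$ computation since it does not implicitly appeal to independence of the disjoint subgroups.

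However, your final combination step has a genuine gap, not mere bookkeeping, and it is exactly at the point you flag as the ``main obstacle.'' Multiplying the two Löwner sandwiches yields
$\operatorname{dist}(\Sigma,\tilde\Sigma)\le \frac{1}{(1-\alpha)(1-\alpha')}-1$,
and with $\alpha'=0$ this already equals $\frac{\alpha}{1-\alpha}$, which meets $2\alpha$ with equality at $\alpha=\tfrac12$. Since $\alpha'>0$ whenever noise is actually added, the bound strictly exceeds $2\alpha$ once $\alpha$ is near $\tfrac12$, and no choice of the constant $C$ in $\eta$ can repair this because the available slack $\alpha\cdot\frac{1-2\alpha}{1-\alpha}$ vanishes. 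The regime $\alpha=\tfrac12$ is not a corner case here: it is precisely the value the paper plugs in when it invokes this baseline in Section~4. The paper avoids the degeneration by \emph{not} multiplying sandwiches: it bounds $\operatorname{dist}(\hat\Sigma,\tilde\Sigma)\le\alpha/3$ (via Fact~\ref{fact:matrices}, which flips the direction of the normalized deviation at a cost of a factor $4$) and $\operatorname{dist}(\Sigma,\hat\Sigma)\le\alpha$, and then applies the $\nicefrac{3}{2}$-approximate $1$-restricted triangle inequality of the convex semimetric (Lemma~\ref{lem:convex-semimetric}) to get $\frac{3}{2}(\alpha+\alpha/3)=2\alpha$ for all $\alpha\le1$. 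Replacing your last paragraph with this semimetric composition closes the gap; everything else you wrote carries over. A secondary, shared loose end (present in the paper too) is that the algorithm partitions without replacement while Definition~\ref{def:subsamplability} speaks of an i.i.d.\ subsample; your one-line appeal to ``without replacement is at least as concentrated'' is the standard fix and is fine to state, but it should be stated, as you do.
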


\begin{proof}
    Indeed, we have that
    \begin{equation*}
    \begin{split}
    \|\Hat{\Sigma}^{-\nicefrac{1}{2}}\Tilde{\Sigma}\Hat{\Sigma}^{-\nicefrac{1}{2}} - I\| & = \|(I + \eta N)(I + \eta N)^T - I\| \\
     & \leq 2\eta \|N\| + \eta^2 \|NN^T\| \\
     & \leq 2\eta C(\sqrt{d} + \sqrt{\ln(\nicefrac{4}{\xi})}) + \eta^2(C(\sqrt{d} + \sqrt{\ln(\nicefrac{4}{\xi})}))^2
    \end{split}
    \end{equation*}
    where we used the fact that $\|N\| \leq C(\sqrt{d} + \sqrt{\ln(\nicefrac{4}{\xi})})$ w.p. $\geq 1-\nicefrac{\xi}{2}$.
    Applying $\eta$ as defined in the lemma gives that:
    $$\|\Hat{\Sigma}^{-\nicefrac{1}{2}}\Tilde{\Sigma}\Hat{\Sigma}^{-\nicefrac{1}{2}} - I\| \leq \nicefrac{\alpha}{12}$$
    Following from Fact~\ref{fact:matrices} we have that:
    $$\|\Tilde{\Sigma}^{-\nicefrac{1}{2}}\Hat{\Sigma}\Tilde{\Sigma}^{-\nicefrac{1}{2}} - I\| \leq \nicefrac{\alpha}{3}$$
    So we have that $\text{dist}(\Tilde{\Sigma}, \Hat{\Sigma}) \leq \nicefrac{\alpha}{3}$.
    
    \noindent Now we show that $\text{dist}(\Sigma, \Hat{\Sigma}) \leq \alpha$ w.p. $\geq 1 - \nicefrac{\xi}{2}$: \\ Based on the subsamplability assumption, we know that w.p. $\geq (1 - \beta)^T$:
    \[
    \forall t \in [T]:~ \text{dist}(\Sigma_t, \Sigma) \leq \alpha
    \]
    It means that w.p. $\geq (1 - \beta)^T$  all $t \neq t' \in [T]$ satisfy:
    \[
    \left\{
    \begin{aligned}
        \text{dist}(\Sigma_t, \Sigma) \leq \alpha \\
        \text{dist}(\Sigma_{t'}, \Sigma) \leq \alpha
    \end{aligned}
    \right.
    \quad \implies \quad
    \text{dist}(\Sigma_t, \Sigma_{t'}) \leq \frac{2\alpha}{1-\alpha}
    \]
    Hence w.p. $\geq (1 - \beta)^T \geq e^{-\frac{2\beta n}{m}} \geq 1 - \nicefrac{\xi}{2}$ all $t \in [T]$ satisfy:
    \[
    q_t = \frac{1}{T}\left|\left\{t' \in [T]:\text{dist}(\Sigma_t, \Sigma_{t'}) \leq \frac{2\alpha}{1-\alpha}\right\}\right| = 1
    \]
    Finally, $Q = 1 > 0.8 + \frac{2}{T\epsilon}\ln(1 + \frac{e^{\epsilon} - 1}{2\delta})$ w.p. $\geq 1 - \nicefrac{\xi}{2}$ and therefore the algorithm does not fail w.p. $\geq 1 - \xi$.

    Now obviously $\text{dist}(\Sigma, \Hat{\Sigma}) \leq \alpha$ since $\Hat{\Sigma}$ is a weighted average of $\Sigma_t$.
    
    \noindent So we have that $\text{dist}(\Tilde{\Sigma}, \Hat{\Sigma}) \leq \nicefrac{\alpha}{3}$ and $\text{dist}(\Sigma, \Hat{\Sigma}) \leq \alpha$ w.p. $\geq 1 - \xi$. Applying the $\nicefrac{3}{2}$-approximate triangle inequality for the dist function, we get:
    \[ \|\Sigma^{-\nicefrac{1}{2}}\Tilde{\Sigma}\Sigma^{-\nicefrac{1}{2}} - I\| \leq \frac{3}{2}(\alpha + \frac{\alpha}{3}) = 2\alpha \qedhere \]
\end{proof}

\begin{lemma}\label{lem:baseline-privacy}(Privacy Analysis)
    Suppose that:
    $$n \geq 800m\cdot\left(\max\left\{\sqrt{\frac{2d(d + \nicefrac{1}{\eta^2})}{\epsilon}}, \frac{8d\sqrt{\ln(\nicefrac{2}{\delta})}}{\epsilon}, \frac{8\ln(\nicefrac{2}{\delta})}{\epsilon}, \frac{12\sqrt{d\ln(\nicefrac{2}{\delta})}}{\epsilon\eta}\right\}\right)$$
    Then Algorithm~\ref{alg:base-psm} is $(2\epsilon, 4e^{\epsilon}\delta)$-DP.
\end{lemma}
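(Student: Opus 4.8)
The plan is to follow the propose--test--release (PTR) blueprint of~\cite{ashtiani2022private}: view Algorithm~\ref{alg:base-psm} as a \emph{test} sub-mechanism that outputs either $\perp$ or ``proceed'' followed by a \emph{release} sub-mechanism that emits the noised aggregate $\tilde\Sigma$, show that each sub-mechanism is $(\epsilon,\delta)$-DP, and then combine them by a PTR argument (rather than naive composition) so the $\delta$-term grows only linearly. First I would record the sensitivities. Neighbouring inputs $X\sim X'$ differ in a single point, which belongs to exactly one of the $T=\lfloor n/m\rfloor$ \emph{disjoint} groups, so exactly one matrix $\Sigma_t$ changes; hence $q_t$ for that group may change arbitrarily within $[0,1]$, while for every other group the count defining $q_{t'}$ can flip by at most one membership bit, so $|q_{t'}(X)-q_{t'}(X')|\le 1/T$. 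Consequently $|Q(X)-Q(X')|\le 2/T$, and since $w_t=\min(1,10\max(0,q_t-0.6))$ is $10$-Lipschitz in $q_t$, every unchanged weight moves by at most $10/T$ while the single changed weight moves by at most $1$.

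Next I would analyse the test. It compares $Q+Z$ against the threshold $0.8+B$, where $B=\tfrac{2}{T\epsilon}\ln\!\big(1+\tfrac{e^\epsilon-1}{2\delta}\big)$ is exactly the truncation radius of $Z\sim\mathrm{TLap}(2/T,\epsilon,\delta)$; since $Q$ has sensitivity $2/T$, the truncated-Laplace mechanism makes the event $\{Q+Z\ge 0.8+B\}$ $(\epsilon,\delta)$-indistinguishable between $X$ and $X'$. Two deterministic facts I would also record: because $|Z|\le B$, the test can pass only if $Q(X)\ge 0.8$; and the noise $Z$ is independent of the release noise $N$.

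The crux is the release step. Whenever $Q(X)\ge 0.8$, an averaging argument over the pairwise-closeness indicators produces a dominant cluster --- a set of $\Omega(T)$ matrices that are pairwise within $\mathrm{dist}$-distance $\tfrac{2\alpha}{1-\alpha}$, with every group carrying $w_t>0$ lying within $O(\alpha)$ of it (here the $\tfrac{3}{2}$-approximate restricted triangle inequality of Lemma~\ref{lem:convex-semimetric} applies because the relevant distances are $O(\alpha)$). Since $|Q(X)-Q(X')|\le 2/T$, the neighbour $X'$ keeps this cluster up to $O(1)$ groups, so $\sum_t w_t=\Omega(T)$ on both datasets while $\sum_t|w_t(X)-w_t(X')|=O(1)$; convexity and $\phi$-locality ($\phi=1$) of the semimetric then give $\mathrm{dist}(\hat\Sigma_X,\hat\Sigma_{X'})=O(1/T)$. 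It remains to show the ``multiplicative Gaussian'' map $\hat\Sigma\mapsto\hat\Sigma^{1/2}(I+\eta N)(I+\eta N)^{\top}\hat\Sigma^{1/2}$ with $\eta=\tfrac{\alpha}{48C(\sqrt d+\sqrt{\ln(4/\xi)})}$ is $(\epsilon,\delta)$-DP: writing $A=\hat\Sigma^{1/2}$ reduces this to comparing $I+\eta N$ with $M(I+\eta N)M^{\top}$ for $\|M-I\|=O(1/T)$, a standard Gaussian-mechanism estimate whose linear term picks up a $\sqrt d$ Frobenius factor and whose quadratic term $\eta^2NN^{\top}$ contributes the $d+1/\eta^2$ under the square root in the hypothesis. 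The four lower bounds on $T=n/(800m)$ are precisely what make $\eta$ large enough here.

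Finally I would assemble the PTR bound: for neighbours and any measurable $S$, split $\Pr[\mathcal M(X)\in S]$ into its $\perp$-part and its non-$\perp$-part. The $\perp$-part is $(\epsilon,\delta)$-controlled by the test analysis. The non-$\perp$-part is $0$ unless $Q(X)\ge 0.8$, in which case $Q(X')\ge 0.8-2/T$ and the stability bound holds; by independence of $Z$ and $N$ it factors (for fixed $X$) as $\Pr[\mathrm{pass}_X]\cdot\Pr_N[\tilde\Sigma_X\in S]$, and bounding each factor by $e^\epsilon$ times the $X'$-value plus $\delta$ and multiplying out gives at most $e^{2\epsilon}\Pr[\mathcal M(X')\in S\setminus\{\perp\}]+3e^\epsilon\delta$. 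Adding back the $\perp$-part yields $\Pr[\mathcal M(X)\in S]\le e^{2\epsilon}\Pr[\mathcal M(X')\in S]+4e^\epsilon\delta$, i.e.\ $(2\epsilon,4e^\epsilon\delta)$-DP. I expect the main obstacle to be the third step --- converting ``$Q\ge 0.8$'' into an honest stability bound for the re-weighted average via the semimetric machinery, and then matching the resulting sensitivity to the Gaussian-mechanism requirement so that the stated dimension-dependent bounds on $n$ come out exactly; the sensitivity bookkeeping and the composition are routine by comparison.
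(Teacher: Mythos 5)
Your proposal and the paper's proof arrive at the same conclusion by the same underlying blueprint, but at very different levels of detail. The paper's entire proof is a one-line citation: ``By basic composition of Truncated Laplace Mechanism and Lemma~3.6 of~\cite{ashtiani2022private}.'' In other words, the paper treats the release step (the weighted average of the per-block $\Sigma_t$'s followed by the multiplicative Gaussian perturbation) as a black box covered by Lemma~3.6 of Ashtiani--Liaw, which already bakes in the propose--test--release machinery and the semimetric sensitivity argument, and then composes it with the $(\epsilon,\delta)$-DP truncated-Laplace threshold test. You instead open the black box and re-derive its content from scratch: the $2/T$ sensitivity of $Q$, the $10/T$ per-coordinate Lipschitzness of the weights $w_t$, the ``dominant cluster'' observation that $Q\geq 0.8$ forces $\Omega(T)$ groups to have $w_t=1$, the $\phi$-locality and convexity of the semimetric giving $\mathrm{dist}(\hat\Sigma_X,\hat\Sigma_{X'})=O(1/T)$, the stability of the multiplicative Gaussian map, and the PTR accounting that produces the $4e^\epsilon\delta$ term. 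All of these are indeed the right ingredients and are the same moves that Lemma~3.6 of~\cite{ashtiani2022private} makes internally, so the route is not genuinely different -- you are re-proving the cited lemma rather than citing it.

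Two small cautions. First, you observe correctly that a literal ``basic composition'' of two $(\epsilon,\delta)$-DP mechanisms would give $(2\epsilon,2\delta)$-DP, not $(2\epsilon,4e^\epsilon\delta)$-DP; the $e^\epsilon\delta$ factor signals that the release step is only private \emph{conditioned on passing the test}, which is exactly the PTR argument you carry out. In the paper this is absorbed into the statement of Lemma~3.6 of~\cite{ashtiani2022private} (whose $\delta$-parameter already carries the $e^\epsilon$ factor), after which the outer composition really is basic. Second, you defer the verification that the four explicit lower bounds on $n$ (equivalently on $T=n/m$) suffice to make the multiplicative-Gaussian release $(\epsilon,\delta)$-DP with the prescribed $\eta$; this constant-chasing is precisely the content of Lemma~3.6 of~\cite{ashtiani2022private}, and without it the proof is not self-contained. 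You flag this yourself, so it is not an oversight, but a full proof would need to close this gap rather than gesture at it. As a sanity check on the bookkeeping you do carry out: your bounds $|Q(X)-Q(X')|\leq 2/T$, $\sum_t|w_t(X)-w_t(X')|\leq 1+10(T-1)/T=O(1)$, and $\sum_t w_t\geq T/3$ whenever $Q\geq 0.8$ are all correct.
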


\begin{proof}
    By basic composition of Truncated Laplace Mechanism and Lemma 3.6 of \cite{ashtiani2022private}.
\end{proof}

\noindent Both Lemma~\ref{lem:baseline-utility} and Lemma~\ref{lem:baseline-privacy} together imply the following theorem:

\begin{theorem}
Let $\xi$, $\epsilon$, $\delta$ be parameters, and let $X \subseteq \mathbb{R}^d$ be a $(m, \alpha, \beta)$-subsamplable set of $n$ points. Then, for Algorithm~\ref{alg:base-psm}, the following properties hold:
\begin{enumerate}
    \item Algorithm~\ref{alg:base-psm} satisfies $(2\epsilon, 4e^{\epsilon}\delta)$-Differential Privacy.
    \item The algorithm returns $\tilde{\Sigma}$ such that $dist(\Sigma, \tilde{\Sigma}) \leq 2\alpha$, where $\Sigma = \frac{1}{n}XX^\top$.
\end{enumerate}
These guarantees hold under the following conditions:
\begin{enumerate}
    \item The dataset size satisfies:
    $$n \geq 800m \cdot \left(\max\left\{
    \sqrt{\frac{2d(d + \nicefrac{1}{\eta^2})}{\epsilon}}, 
    \frac{8d\sqrt{\ln(\nicefrac{2}{\delta})}}{\epsilon}, 
    \frac{8\ln(\nicefrac{2}{\delta})}{\epsilon}, 
    \frac{12\sqrt{d\ln(\nicefrac{2}{\delta})}}{\epsilon\eta}, 
    \frac{\ln\left(1 + \frac{e^\epsilon - 1}{2\delta}\right)}{80\epsilon}
    \right\}\right).$$
    where $\eta = \frac{\alpha}{48C(\sqrt{d} + \sqrt{\ln(\nicefrac{4}{\xi})})}$ for a sufficiently large constant $C>0$.
    \item The subsamplability parameters satisfy $m \geq \frac{2\beta n}{\xi}.$
\end{enumerate}
\end{theorem}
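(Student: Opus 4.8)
The plan is to derive this statement directly from the utility analysis (Lemma~\ref{lem:baseline-utility}) and the privacy analysis (Lemma~\ref{lem:baseline-privacy}), so the only real work is checking that the hypotheses of both lemmas are subsumed by the two conditions stated in the theorem. I would split the argument into the privacy claim and the utility claim, exactly mirroring the two lemmas, and then match the constants.

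For Item~1 (privacy), I would observe that the lower bound on $n$ in the theorem is a maximum over five quantities, the first four of which are precisely the quantities appearing in the hypothesis of Lemma~\ref{lem:baseline-privacy} (all carrying the common factor $800m$), with the same $\eta = \frac{\alpha}{48C(\sqrt d + \sqrt{\ln(4/\xi)})}$. Hence $n$ exceeds the threshold required by Lemma~\ref{lem:baseline-privacy}, and that lemma immediately yields that Algorithm~\ref{alg:base-psm} is $(2\epsilon, 4e^{\epsilon}\delta)$-DP. I would also note that this half of the proof uses no subsamplability at all: the split-into-$T$-groups step, the stability-score computation aggregated with a truncated Laplace noise, and the final Gaussian-multiplicative perturbation all have a data-independent privacy budget, so the guarantee holds for any $L_2$-bounded input.

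For Item~2 (utility), Lemma~\ref{lem:baseline-utility} needs two conditions: $n \geq \frac{10m}{\epsilon}\ln\!\left(1+\frac{e^\epsilon-1}{2\delta}\right)$ and $m \geq \frac{2\beta n}{\xi}$. The latter is verbatim Item~2 of the theorem's conditions. For the former, I would note that the fifth term inside the theorem's maximum, multiplied by $800m$, equals $800m\cdot\frac{\ln(1+\frac{e^\epsilon-1}{2\delta})}{80\epsilon} = \frac{10m}{\epsilon}\ln\!\left(1+\frac{e^\epsilon-1}{2\delta}\right)$, so the theorem's size bound already implies it. Invoking Lemma~\ref{lem:baseline-utility} with the prescribed $\eta$ then gives that with probability $\geq 1-\xi$ the output satisfies $\|\Sigma^{-1/2}\tilde\Sigma\Sigma^{-1/2}-I\|\leq 2\alpha$, i.e.\ $\mathrm{dist}(\Sigma,\tilde\Sigma)\leq 2\alpha$ (the distance being finite since $\tilde\Sigma$ is a congruence of a positive-definite weighted average of the $\Sigma_t$, hence full rank).

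There is no genuine obstacle here: all the content lives in the two lemmas, and the theorem is just their conjunction after aligning constants. The only mild point to watch is the differing bookkeeping of the parameters across the two lemmas — the utility lemma's failure probability is $\xi$ and enters $\eta$ through $\ln(4/\xi)$, while the privacy lemma degrades $\delta$ to $4e^{\epsilon}\delta$ — so when assembling the combined statement one should fix $\eta = \frac{\alpha}{48C(\sqrt d + \sqrt{\ln(4/\xi)})}$ consistently in both invocations, treat $(\epsilon,\delta)$ as the nominal inputs, and report the realized guarantee as $(2\epsilon, 4e^{\epsilon}\delta)$. Once that is settled the proof reduces to a single line: apply Lemma~\ref{lem:baseline-privacy} for Item~1 and Lemma~\ref{lem:baseline-utility} for Item~2.
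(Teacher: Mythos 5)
Your proof is correct and mirrors the paper's own one-line argument, which simply states that Lemma~\ref{lem:baseline-utility} and Lemma~\ref{lem:baseline-privacy} together imply the theorem. Your more explicit verification that the five-term maximum decomposes into the privacy lemma's four conditions plus the utility lemma's single condition (via $800/80 = 10$) is exactly the bookkeeping the paper leaves implicit.
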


\section{Finding Initial Parameters}
\label{apx_sec:Initial_parameters}

Our recursive algorithm requires as input two parameters that characterize the ``aspect ratio'' of the input, namely ~-- $R_{\max}$, 
the maximum distance of any point from the origin, and $\lambda_{\min}$, the minimum eigenvalue of the input. These two parameters give us the initial bounds, as they imply that $\lambda_{\min} I \preceq \Sigma \preceq R_{\max}^2 I$.
We detail here the algorithms that allow us to retrieve these parameters. Finding $R_{\max}$ is fairly simple, as it requires us only to apply off-the-shelf algorithms that find an enclosing ball of the $n$ input points (\cite{Nissim_2016, nissim2017clusteringalgorithmscentralizedlocal, mahpud2022differentiallyprivatelineartimefptas}). Finding the minimal eigenvalue is fairly simple as well, and we use a subsample-and-aggregate framework. Details follow.

\paragraph{Finding a Covering Radius of The Data}
\label{subsubsec:finding_R_max}


\begin{definition}
     A \emph{1-cluster problem} $(X^d, n, t)$ consists of a $d$-dimensional domain $X^d$ and parameters $n \geq t$. We say that algorithm $\mathcal{M}$ solves $(X^d, n, t)$ with parameters $(\Delta, \omega, \beta)$ if for every input database $S \in (X^d)^n$ it outputs, with probability at least $1 - \beta$, a center $c$ and a radius $r$ such that: (i) the ball of radius $r$ around $c$ contains at least $t - \Delta$ points from $S$; and (ii) $r \leq w \cdot r_{\text{opt}}$, where $r_{\text{opt}}$ is the radius of the smallest ball in $X^d$ containing at least $t$ points from $S$.
\end{definition}

\begin{theorem}[\cite{nissim2017clusteringalgorithmscentralizedlocal}]
    Let $n, t, \beta, \epsilon, \delta$ be s.t.
    \[
    t \geq O\left(\frac{n^{0.1}\cdot\sqrt{d}}{\epsilon}\log\left(\frac{1}{\beta}\right)\log\left(\frac{nd}{\beta\delta}\right)\sqrt{\log\left(\frac{1}{\beta\delta}\right)}\cdot 9^{\log^*(2|X|\sqrt{d})}\right)
    \]
    There exists an $(\epsilon, \delta)$-differentially private algorithm that solves the $1$-cluster problem $X^d, n, t$ with parameters $(\Delta, \omega)$ and error probability $\beta$, where $\omega = O(1)$ and
    \[
    \Delta = O\left(\frac{n^{0.1}}{\epsilon}\log\left(\frac{1}{\beta\delta}\right)\log\left(\frac{1}{\beta}\right)\cdot 9^{\log^*(2|X|\sqrt{d})}\right)
    \]
\end{theorem}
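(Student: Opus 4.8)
The statement is quoted verbatim from \cite{nissim2017clusteringalgorithmscentralizedlocal}, so ``proving'' it means reconstructing the Nissim--Stemmer private $1$-clustering pipeline. The plan is to decompose the task into (i) privately guessing a good radius $r=\Theta(r_{\rm opt})$, and (ii) privately locating a center $c$ such that $B(c,O(r))$ still contains $\ge t-\Delta$ of the input points, and then to argue overall $(\epsilon,\delta)$-privacy by composition of the DP subroutines used in the two parts and utility by a union bound over all of them.

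\textbf{Good radius.} First I would note that $g(r):=\max_{c} |S\cap B(c,r)|$ is monotone non-decreasing in $r$ and has global sensitivity $1$, since inserting or deleting one point changes every ball-count by at most one. The only radii worth testing are the $O(\log(|X|\sqrt d))$ dyadic scales $r_0\cdot 2^i$ between the grid resolution and the domain diameter $\approx |X|\sqrt d$. Running a sparse-vector / AboveThreshold mechanism on the increasing query sequence $g(r_0),g(2r_0),\dots$ with threshold $\approx t$ returns the smallest such $r$ with $g(r)\ge t-\Delta/2$, incurring count error $O(\tfrac1\epsilon\log(\log(|X|\sqrt d)/\beta))$ and privacy cost $(\epsilon,\delta)$. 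This step is the cheap one; it does not by itself contribute the $\log^*$ or $n^{0.1}$ factors.

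\textbf{Good center.} With $r$ fixed, selecting a center directly from the $\approx|X|^d$ grid points via the exponential mechanism would cost $\Omega(d\log|X|)$ samples, too much. Instead I would reduce the $d$-dimensional localization to $d$ one-dimensional \emph{heavy-interval} problems: for each axis $j$, privately find an interval $I_j\subseteq\{1,\dots,|X|\}$ of length $O(r)$ that contains the $j$-th coordinates of most points of some heavy radius-$r$ ball; the box $\prod_j I_j$ then has side $O(r)$, still contains $\ge t-\Delta$ points, and a final private refinement inside it (a stable/private averaging of the localized points, or one more reduced-domain recursion) produces a center with $\omega=O(1)$ --- the noise scale of that refinement is what injects the $\sqrt d$ into the bound on $t$. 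The one-dimensional heavy-interval routine is itself recursive on the domain size: partition $\{1,\dots,|X|\}$ into blocks, apply the recursive ``interior-point'' primitive (the tool underlying private threshold/interval learning) to the multiset of block indices to privately pick a heavy block, and recurse inside it. Since the interior-point primitive over a domain of size $N$ costs $2^{O(\log^*N)}$ and each level shrinks $N$ to $O(\log N)$, there are $O(\log^*(|X|\sqrt d))$ levels, and the sample cost multiplies up to the $9^{\log^*(2|X|\sqrt d)}$ factor; the union bound over levels and the Laplace/Gaussian tails inside each level contribute the $\log(1/\beta)$, $\log(nd/\beta\delta)$ and $\sqrt{\log(1/\beta\delta)}$ factors. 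Running the whole pipeline with a polynomial slack (a subsample of size $\approx n^{0.9}$, equivalently inflating the heaviness requirement by $n^{0.1}$) decouples the $d\cdot\log^*$ sub-invocations and makes each succeed with high probability; this is the $n^{0.1}$ term, and it is tunable to $n^\gamma$ for any constant $\gamma>0$. Composing the sparse-vector step, the $d$ axis subroutines, and the $O(\log^*)$ recursion levels with parameters scaled down appropriately yields $(\epsilon,\delta)$-DP.

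\textbf{Main obstacle.} The delicate part is the good-center step: threading the interior-point primitive correctly through the recursion-on-domain-size so that the blow-up is only $2^{O(\log^*)}$ per axis, and simultaneously reconciling the ``box'' localization produced by the per-axis intervals with the ``ball'' objective --- a box of side $O(r)$ has diameter $O(r\sqrt d)$, so keeping $\omega=O(1)$ forces the extra refinement step and is exactly why $\sqrt d$ shows up in the stated bound on $t$. Everything else (monotonicity and sensitivity of $g$, the tail estimates, and the composition bookkeeping) is routine.
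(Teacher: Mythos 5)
The paper does not prove this theorem; it is cited verbatim from \cite{nissim2017clusteringalgorithmscentralizedlocal} and used as a black box to obtain the covering radius $R_{\max}$ in Appendix~B. So the comparison here is between your reconstruction and the source paper's argument, not anything proved in the present manuscript.

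Your skeleton of GoodRadius followed by GoodCenter is correct, and your GoodRadius sketch (monotone, sensitivity-$1$ count query over $O(\log(|X|\sqrt d))$ dyadic scales, answered via sparse-vector) matches the source. The gap is in GoodCenter. You propose to reduce $d$-dimensional localization directly to $d$ \emph{independent} one-dimensional heavy-interval problems, one per axis, and then intersect. That step fails without a coordination mechanism: a heavy interval found on axis $j$ and a heavy interval found on axis $j'$ need not be projections of the \emph{same} cluster, so the box $\prod_j I_j$ can be empty. For a concrete failure, place $t$ points within distance $r$ of $r\cdot(1,1,\dots,1)$ and another $t$ within distance $r$ of $r\cdot(-1,-1,\dots,-1)$. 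On every axis there are two equally heavy length-$O(r)$ intervals, one around $+r$ and one around $-r$; independent per-axis private selections will, with constant probability, return a mixed sign pattern, and the resulting box contains no points at all, let alone $t-\Delta$ of them. The missing ingredient in the actual Nissim--Stemmer construction is a locality-sensitive hashing step executed \emph{before} any per-coordinate work: with an LSH family calibrated to radius $r$, one privately identifies a heavy hash bucket via a stability-based histogram (this is exactly where the interior-point / private-threshold primitive and hence the $2^{O(\log^{*}|X|)}$ factor enter), and only the points landing in that bucket --- which are guaranteed to be mutually close --- are passed on to the per-coordinate refinement. LSH is also what produces the $n^{0.1}$ slack in its honest form: the hashing succeeds only with probability polynomially bounded away from one, and the algorithm amplifies by subsampling, trading success probability against the $n^{0.1}$ loss in $t$ and $\Delta$, which is indeed tunable as you say. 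Your remaining observations (the $\sqrt d$ appearing through the noise scale of the final averaging step, and the side-$O(r)$ box having diameter $O(r\sqrt d)$ so an extra private refinement is needed to get $\omega=O(1)$) are correct, but the refinement you gesture at is itself a nontrivial private averaging subroutine that the source spells out and your sketch does not.
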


In words, there exists an efficient $(\epsilon, \delta)$-differentially private algorithm that (ignoring logarithmic factors) is capable of identifying a ball of radius $O(r_{opt})$ containing $t - \tilde O(\frac{n^{0.1}}{\epsilon})$ points, provided that $t \geq \tilde O(n^{0.1}\cdot\nicefrac{\sqrt{d}}{\epsilon})$.


\paragraph{Finding Minimal Eigenvalue}
\label{subsubsec:finding_lambda_min}


The algorithm described in \cite{kamath2022privatecomputationallyefficientestimatorunbounded} (Section 3) privately estimates all eigenvalues of the second moment matrix of the data. However, for the purpose of this study, we focus solely on identifying the minimum eigenvalue while maintaining the privacy guarantees provided by the algorithm. To adapt the algorithm, we modify its structure to prioritize the computation of the minimum eigenvalue directly, rather than estimating the full spectrum of eigenvalues. This simplification not only reduces computational overhead but also aligns with the specific objectives of our work. Below, we detail the adjusted methodology and highlight the changes made to the original theorem.

\begin{algorithm}[H]
    \caption{DP Minimum Eigenvalue Estimator}
    \label{alg:min-eigenvalue}
     \textbf{Input:} a set of $n$ points $X \subseteq \mathbb{R}^d$, subsamplability parameters $m, \alpha, \beta$, error parameter $\xi \in (0,1)$, privacy parameters $\epsilon, \delta$.
    \begin{algorithmic}[1]
     \STATE Randomly split $X$ into $T = \lfloor \nicefrac{n}{m} \rfloor$ subgroups $X_1, \dots, X_T$ of size $m$.
     \FOR{$t \in [T]$} 
        \STATE Let $\lambda_{min}^t$ be the minimum eigenvalue of $\frac{1}{m}X_tX_t^T$.
     \ENDFOR
     \STATE $\Omega \gets \{\dots, [(1-\alpha)^2, 1-\alpha), [1-\alpha, 1), [1, \frac{1}{1-\alpha}), [\frac{1}{1-\alpha}, \frac{1}{(1-\alpha)^2}), \dots\} \cup \{[0,0]\}$.
     \STATE Divide $[0, \infty)$ into $\Omega$.
     \STATE Run $(\epsilon, \delta)$-DP histogram on all $\lambda_{min}^t$ over $\Omega$.
     \IF{no bucket is returned}
        \STATE \textbf{return}  $\perp$.
     \ENDIF 
     \STATE Let $[\ell, u]$ be a non-empty bucket returned and set $\tilde\lambda_{min} \gets \ell$.
     \STATE \textbf{return} $\tilde\lambda_{min}$
    \end{algorithmic}
\end{algorithm}

\begin{theorem}{((Differentially Private EigenvalueEstimator) from \cite{kamath2022privatecomputationallyefficientestimatorunbounded})}
    For every $\epsilon, \delta, \xi > 0$, the following properties hold for Algorithm~\ref{alg:min-eigenvalue}:
    \begin{enumerate}
        \item The algorithm is $(\epsilon, \delta)$-differentially private.
        \item The algorithm runs in time $\text{poly}(\nicefrac{n}{m}, \ln(\nicefrac{1}{\epsilon\xi}))$
        \item if:
        \[
        n \geq O\left(\frac{m\ln(\nicefrac{1}{\delta\xi})}{\epsilon}\right) \quad \text{and} \quad m \geq \frac{2\beta n}{\ln{(\frac{1}{1 - \nicefrac{\xi}{2}})}}
        \]
        then it outputs $\tilde\lambda_{min}$ such that with probability at least $1 - \xi$, $\tilde\lambda_{min} \in [(1-\alpha)\lambda_{min}, (1+\alpha)\lambda_{min}]$ where $\lambda_{min}$ is the minimum eigenvalue of $\frac{1}{n}XX^T$.
    \end{enumerate}
\end{theorem}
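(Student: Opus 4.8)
The plan is to dispatch the three items in turn, with essentially all the work in the utility part. The single structural observation that drives everything is that the random split in Step~1 is a \emph{partition}: each point $X_i$ belongs to exactly one block $X_t$, so the map $X \mapsto (\lambda_{\min}^1,\dots,\lambda_{\min}^T)$ is block‑Lipschitz — replacing one point of $X$ changes a single coordinate $\lambda_{\min}^t$, hence moves a single item of the histogram from one bucket of $\Omega$ to (at most) one other bucket. Privacy is then immediate: Algorithm~\ref{alg:min-eigenvalue} merely post‑processes the output of a standard stability‑based $(\epsilon,\delta)$‑DP histogram run on the multiset $\{\lambda_{\min}^t\}_{t\in[T]}$ over the countable partition $\Omega$, and neighboring datasets $X\sim X'$ induce neighboring inputs to that histogram mechanism. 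Selecting a returned bucket and emitting its left endpoint (or $\perp$) is post‑processing, so the whole algorithm is $(\epsilon,\delta)$‑DP. For the runtime, the $T=\lfloor n/m\rfloor$ minimum‑eigenvalue computations cost $T\cdot\mathrm{poly}(m,d)$, and the stability‑based histogram only ever touches the $\le T$ occupied buckets, running in $\mathrm{poly}(T,\ln(\nicefrac{1}{\epsilon\xi}))=\mathrm{poly}(\nicefrac{n}{m},\ln(\nicefrac{1}{\epsilon\xi}))$.

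For utility, write $\lambda_{\min}=\lambda_{\min}(\Sigma)$ with $\Sigma=\frac1n XX^T$. First I would argue concentration of each block's estimate: each $X_t$ is (marginally) a uniformly random size‑$m$ subsample of $X$, so $(m,\alpha,\beta)$‑subsamplability (Definition~\ref{def:subsamplability}) gives, with probability $\ge 1-\beta$, that $(1-\alpha)\Sigma\preceq \frac1m X_tX_t^T\preceq(1+\alpha)\Sigma$; by monotonicity of eigenvalues under the PSD order this forces $\lambda_{\min}^t\in[(1-\alpha)\lambda_{\min},(1+\alpha)\lambda_{\min}]=:I^\star$. Using $m\ge \frac{2\beta n}{\ln(1/(1-\xi/2))}$ together with $\ln\frac1{1-\beta}\le 2\beta$ (valid since $\beta\le\tfrac12$), one checks $T\le \log_{1-\beta}(1-\xi/2)$, so a union bound over the $T$ blocks yields that with probability $\ge(1-\beta)^T\ge 1-\xi/2$ \emph{every} $\lambda_{\min}^t$ lies in $I^\star$.

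Conditioned on that event, the next step is to show the private histogram surfaces a bucket whose endpoint is a good estimate. The interval $I^\star$ has multiplicative width $\frac{1+\alpha}{1-\alpha}<\frac1{(1-\alpha)^2}$, so it meets at most three consecutive buckets of the geometric partition $\Omega$ (whose boundaries are the integer powers of $1-\alpha$); hence some single bucket collects true count $\ge T/3$. A stability‑based $(\epsilon,\delta)$‑DP histogram returns, with probability $\ge 1-\xi/2$, every bucket with true count above a threshold of order $\frac1\epsilon\ln(\nicefrac1{\delta\xi})$, and the hypothesis $n\ge O\!\big(\frac{m\ln(1/\delta\xi)}{\epsilon}\big)$ makes $T/3$ exceed that threshold. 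Thus the algorithm does not return $\perp$, and the returned bucket $[\ell,u)$ contains some $\lambda_{\min}^{t}\in I^\star$, so $\tilde\lambda_{\min}=\ell\le\lambda_{\min}^{t}\le(1+\alpha)\lambda_{\min}$, while $u>\lambda_{\min}^{t}\ge(1-\alpha)\lambda_{\min}$ and $\ell=(1-\alpha)u$ locate $\ell$ in the claimed multiplicative window (one either places the bucket boundaries to land on $(1\pm\alpha)\lambda_{\min}$, or absorbs the extra $(1-\alpha)$ factor into a mild reparametrization of $\alpha$, exactly as in~\cite{kamath2022privatecomputationallyefficientestimatorunbounded}). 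A final union bound over the two bad events gives success probability $\ge 1-\xi$, completing item~3.

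\textbf{Main obstacle.} Privacy and runtime are routine once the block structure is noted. The delicate step is the coupling in the last paragraph: one must line up the geometric bucket width of $\Omega$, the $(1\pm\alpha)$ spread of the per‑block estimates, and the selection threshold of the stability‑based histogram so that simultaneously (i) a provably heavy bucket exists, (ii) it is actually reported by the private histogram under the stated $n$, and (iii) its left endpoint falls in the desired multiplicative interval rather than merely an $O(1)$‑factor neighborhood — together with the bookkeeping needed to push $(1-\alpha)^2$ back up to $(1-\alpha)$.
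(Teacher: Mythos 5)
Your proof is correct and follows essentially the same route as the paper's: privacy and runtime by reduction to the stability-based private histogram, and utility by applying subsamplability to each of the $T$ blocks, union-bounding so that all per-block minimum eigenvalues fall in $[(1-\alpha)\lambda_{\min},(1+\alpha)\lambda_{\min}]$ w.p.\ $\geq 1-\xi/2$, and then noting this interval hits only $O(1)$ consecutive buckets of $\Omega$ so that the histogram surfaces a heavy bucket. The two places where you are in fact more careful than the paper's sketch are worth flagging: the interval can meet up to \emph{three} consecutive buckets (the paper says two, which needs a slight adjustment or the mild extra assumption that $\alpha$ is bounded away from $0$ so the spread is under one bucket width, which is false here), and the returned left endpoint $\ell$ is only guaranteed in $[(1-\alpha)^2\lambda_{\min},(1+\alpha)\lambda_{\min}]$ without the reparametrization you mention — both are the kind of bookkeeping the paper absorbs into the constants of the cited lemma from~\cite{kamath2022privatecomputationallyefficientestimatorunbounded}.
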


\begin{proof}
    Privacy and running time is proven by the theorem of stability-based private histograms (See Lemma 2.6 in \cite{kamath2022privatecomputationallyefficientestimatorunbounded}). Now, we move on to the accuracy guarantees. By subsamplability, with probability at least $1 - \nicefrac{\xi}{2}$, the non-private estimates of $\lambda_{min}$ must be within a factor of $(1 \pm \alpha)$ due to our subsample complexity. Therefore, at most two consecutive buckets would be filled with $\lambda_{min}^t$’s. Due to our sample complexity and private histograms utility, those buckets are released with probability at least $(1 - \nicefrac{\xi}{2})$, which proves our theorem.
\end{proof}

\section{Missing Proofs}
\label{apx_sec:missing_proofs}

\begin{lemma}\label{apx_lem:eigenvalues}(Lemma~\ref{lem:eigenvalues} restated.)
Given $X = \{X_1,...,X_n\} \subset \mathbb{R}^d$, $C > 0, c > 0, 0 < \eta < 1, 0 < \psi < 1$, and $\kappa \geq 1$ s.t. $I \preceq \Sigma \preceq \kappa I$ where $\Sigma = \frac{1}{n}X X^T$. Let $V \gets \text{Span}(\{v_i: \lambda_i \geq \psi \kappa)$ and $\Pi \gets \eta \Pi_V + \Pi_{V^\perp}$. Given that:
$n \geq O\left(m\sqrt{\frac{dT}{\rho}} \ln(\nicefrac{T}{\xi})\right)$
Then:
\[ (1 - \frac{1}{\eta^2\psi - c} \cdot \frac{1}{\kappa})I \preceq \Pi\Sigma\Pi \preceq (\eta^2 + \psi + 2c)\kappa I \]
In particular, if $\kappa > C$ for some constant $C$ then:
\[ I \preceq \frac{1}{(1 - \frac{1}{\eta^2\psi - c})}\Pi\Sigma\Pi \preceq \frac{\eta^2 + \psi + 2c}{(1 - \frac{1}{C(\eta^2\psi - c)})}\kappa I \]
\end{lemma}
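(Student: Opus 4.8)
The plan is to bound $\Pi\Sigma\Pi$ from above and below by first controlling the noisy matrix $\tilde\Sigma = \Sigma + N$ via the event $\mathcal{E}$ that $\|N\| \le c\kappa$. Under the bound on $n$ stated in the lemma, the noise scale $\sigma = \frac{4R^2\sqrt{T}}{n\sqrt{2\rho}}$ is small enough that Fact~\ref{fact:gaussian-bound} gives $\|N\| \le c\kappa$ with probability $\ge 1-\xi$ (this is exactly the $\sqrt d$-dependence in the bound on $n$, since $\|N\|\approx \sigma\sqrt d$ and we need $\sigma\sqrt d \lesssim c\kappa \approx \kappa/m$). Condition on this event for the rest of the argument. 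Then $\|\tilde\Sigma - \Sigma\|\le c\kappa$, so by Weyl's inequality every eigenvalue of $\tilde\Sigma$ is within $c\kappa$ of the corresponding eigenvalue of $\Sigma$; this is what lets us pass between the subspace $V$ defined by the \emph{noisy} thresholding and the true spectrum of $\Sigma$.

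For the upper bound: decompose any unit vector as $v = v_V + v_{V^\perp}$ with $v_V \in V$, $v_{V^\perp}\in V^\perp$. Then $\Pi v = \eta v_V + v_{V^\perp}$, so $v^T\Pi\Sigma\Pi v = (\eta v_V + v_{V^\perp})^T\Sigma(\eta v_V + v_{V^\perp})$. The $v_V$-component contributes at most $\eta^2\|v_V\|^2\cdot\lambda_{\max}(\Sigma)\le \eta^2\kappa$. For the $v_{V^\perp}$ component: by definition of $V$, every eigenvalue of $\tilde\Sigma$ on $V^\perp$ is $< \psi\kappa$, so by Weyl the corresponding Rayleigh quotient of $\Sigma$ on $V^\perp$ is $< \psi\kappa + c\kappa$; hence that term contributes $\le (\psi+c)\kappa$. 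The cross term is handled by Cauchy–Schwarz / the fact that $\Sigma\succeq 0$ lets us write $v^T\Sigma w \le \sqrt{v^T\Sigma v}\sqrt{w^T\Sigma w}$, and combined carefully this yields the bound $\le (\eta^2 + \psi + 2c)\kappa$. For the lower bound, the key point is that $\Pi$ only shrinks the $V$-directions by a factor $\eta$, so $\Pi\Sigma\Pi \succeq \eta^2 \Pi_{?}\Sigma\Pi_{?}$ is too lossy; instead one argues $v^T\Pi\Sigma\Pi v \ge 1 - (\text{loss from shrinking } V)$. Since $\Sigma\succeq I$, we have $v^T\Pi\Sigma\Pi v \ge \|\Pi v\|^2 = \eta^2\|v_V\|^2 + \|v_{V^\perp}\|^2 \ge 1 - (1-\eta^2)\|v_V\|^2$, and one bounds $\|v_V\|$ using the fact that $V$ is spanned by large eigenvectors of $\tilde\Sigma$ (eigenvalue $\ge \psi\kappa$, hence $\ge (\psi-c)\kappa$ for $\Sigma$ by Weyl) together with $\Sigma\preceq\kappa I$: roughly, any unit vector $v$ with large $\|v_V\|$ would force $v^T\Sigma v$ to be large, but we need the $\eta^2$ correction — the cleaner route is to expand $\Pi\Sigma\Pi = \Sigma - (1-\eta^2)(\Pi_V\Sigma + \Sigma\Pi_V) + (1-\eta^2)^2\Pi_V\Sigma\Pi_V$ and bound the middle term using $\|\Pi_V\Sigma\Pi_{V^\perp}\|$ being small (off-diagonal block of $\Sigma$ in the $\tilde\Sigma$-eigenbasis, controlled by $\|N\|\le c\kappa$ and the spectral gap), giving the stated $1 - \frac{1}{(\eta^2\psi-c)\kappa}$.

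The final ``in particular'' statement is then immediate arithmetic: if $\kappa > C$ then $\frac{1}{(\eta^2\psi-c)\kappa} < \frac{1}{C(\eta^2\psi-c)}$, so dividing the lower bound through by $1 - \frac{1}{\eta^2\psi-c}$ (which is positive under the parameter regime, cf. the removed remark requiring $\eta^2\psi - c > 0$) normalizes the least eigenvalue to $\ge 1$, and the upper bound rescales correspondingly.

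\textbf{Main obstacle.} The delicate part is the lower bound: one must show that shrinking the $V$-directions by $\eta$ does not push the least eigenvalue below $1 - O(1/\kappa)$, and this requires carefully controlling the off-diagonal block $\Pi_V\Sigma\Pi_{V^\perp}$ — i.e., that eigenvectors of the \emph{noisy} $\tilde\Sigma$ nearly diagonalize $\Sigma$ on the spectral gap around $\psi\kappa$. This is where the spectral gap between $\psi\kappa$ and $\|N\|\le c\kappa$ (hence the requirement $\psi > c$, which holds since $\psi = \frac1{10m} > \frac1{80m} = c$) enters, via a Davis–Kahan / perturbation argument, and it is the only place the $\frac{1}{\eta^2\psi - c}$ denominator arises.
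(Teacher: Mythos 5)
The paper's argument and yours diverge in two ways, and in the lower bound your route has a gap.

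\textbf{Upper bound.} The paper works with $\Pi\tilde\Sigma\Pi$ directly, observing that $V$ is an invariant subspace of $\tilde\Sigma$, so the cross term $\Pi_V\tilde\Sigma\Pi_{V^\perp}$ is \emph{exactly zero}; this gives $\Pi\tilde\Sigma\Pi=\eta^2\Pi_V\tilde\Sigma\Pi_V+\Pi_{V^\perp}\tilde\Sigma\Pi_{V^\perp}\preceq(\eta^2(\kappa+c)+\psi\kappa)I$, and then $\|\Pi\Sigma\Pi\|\leq\|\Pi\tilde\Sigma\Pi\|+\|N\|$. You instead decompose the quadratic form for $\Sigma$ and bound the cross term with Cauchy--Schwarz on the $\Sigma$-form. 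The naive version of that, $|v_V^T\Sigma v_{V^\perp}|\le\sqrt{v_V^T\Sigma v_V}\sqrt{v_{V^\perp}^T\Sigma v_{V^\perp}}\le\kappa\sqrt{\psi+c}\,\|v_V\|\|v_{V^\perp}\|$, yields a contribution of order $\eta\kappa\sqrt{\psi+c}\sim\kappa/\sqrt{m}$, which dominates the allowed slack $(\psi+2c)\kappa\sim\kappa/m$. One can rescue this with a weighted AM--GM split $2\eta|v_V^T\Sigma v_{V^\perp}|\le\eta(\tfrac1t v_V^T\Sigma v_V+t\,v_{V^\perp}^T\Sigma v_{V^\perp})$ at the specific weight $t=\eta/(\psi+c)$, but "combined carefully" is doing real work that you haven't shown; the paper's route through $\tilde\Sigma$ avoids the issue entirely.

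\textbf{Lower bound.} Here the gap is more serious. The paper's proof is a \emph{case split} on $s:=\|\Pi_V u\|$. If $s^2<\tfrac{1}{(\eta^2\psi-c)\kappa}$, then $u^T\Pi\Sigma\Pi u\ge\|\Pi u\|^2\ge\|\Pi_{V^\perp}u\|^2=1-s^2$ suffices. If $s^2\ge\tfrac{1}{(\eta^2\psi-c)\kappa}$, then $u^T\Pi\tilde\Sigma\Pi u\ge\eta^2 u^T\Pi_V\tilde\Sigma\Pi_V u\ge\eta^2\psi\kappa s^2$ already dwarfs the noise term. You start on the first case but then abandon it for the expansion $\Pi\Sigma\Pi=\eta^2\Pi_V\Sigma\Pi_V+\eta(\Pi_V\Sigma\Pi_{V^\perp}+\Pi_{V^\perp}\Sigma\Pi_V)+\Pi_{V^\perp}\Sigma\Pi_{V^\perp}$ together with off-diagonal control. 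That route does not close: writing $u=a+b$, one gets $u^T\Pi\Sigma\Pi u=u^T\Sigma u-(1-\eta^2)a^T\Sigma a-2(1-\eta)a^T\Sigma b$, and for intermediate $\|a\|$ the cross term $-2(1-\eta)a^T\Sigma b$ alone can be as negative as $\Theta(c\kappa\|a\|\|b\|)$, which is $\Theta(\kappa\|a\|/m)$ --- far larger than the target slack $O(1/\kappa)$. The deficit is only recovered by the diagonal term $\eta^2 a^T\Sigma a\ge\eta^2(\psi-c)\kappa\|a\|^2$, and for that to dominate you must know that $\|a\|$ is at least $\Theta(1/\sqrt{\kappa})$ --- i.e.\ you need the case split you abandoned. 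Also note a small algebra slip: $\Pi=I-(1-\eta)\Pi_V$, so the coefficients in the expansion are $(1-\eta)$ and $(1-\eta)^2$, not $(1-\eta^2)$.

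\textbf{On Davis--Kahan.} You should not need a spectral-gap perturbation argument at all. Since $V$ is an eigenspace of $\tilde\Sigma$, we have $\Pi_V\tilde\Sigma\Pi_{V^\perp}=0$, so $\Pi_V\Sigma\Pi_{V^\perp}=-\Pi_V N\Pi_{V^\perp}$ and $\|\Pi_V\Sigma\Pi_{V^\perp}\|\le\|N\|\le c\kappa$ follows immediately with no Davis--Kahan. Invoking the gap $\psi>c$ here is a red herring; it appears in the lemma only through the requirement that $\eta^2\psi-c>0$.

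\textbf{The final ``in particular'' step.} As printed, the lemma's normalizing factor $1-\tfrac{1}{\eta^2\psi-c}$ is negative in the paper's parameter regime ($\eta^2\psi-c=\tfrac{1}{80m}<1$); it should read $1-\tfrac{1}{C(\eta^2\psi-c)}$ in both places, which for $\kappa>C$ is a valid upper bound on the coefficient $1-\tfrac{1}{(\eta^2\psi-c)\kappa}$. Your parenthetical that this factor is ``positive under the parameter regime, cf.\ the removed remark requiring $\eta^2\psi-c>0$'' does not follow --- positivity of $\eta^2\psi-c$ does not give positivity of $1-\tfrac{1}{\eta^2\psi-c}$.
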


\begin{proof}
    First we prove the upper bound.
        Note that $\|\Pi\Sigma\Pi\|_2 = \|\Pi(\Tilde{\Sigma} - N)\Pi\| \leq \|\Pi\Tilde{\Sigma}\Pi\| + \| N \|$. So we bound the two terms separately. 
        Using Fact~\ref{fact:gaussian-bound} with $A = O(\frac{1}{d}\ln(\nicefrac{1}{\xi}))$ for sufficiently large $n$ we get $\| N \|_2 \leq c\kappa$ w.p. $1 - \nicefrac{\xi}{2}$.
        Additionally $\|\Pi\Tilde{\Sigma}\Pi\|_2 \leq \eta^2 \|\Pi_V \Tilde{\Sigma}\Pi_V\| + \|\Pi_{V^\perp}\Tilde{\Sigma}\Pi_{V^\perp}\| \leq \eta^2 (\kappa + c) + \psi \kappa \leq (\eta^2 + \psi + c)\kappa$.
        So overall,
        \[ \|\Pi\Sigma\Pi\|_2 \leq (\eta^2 + \psi + 2c)\kappa \]

    Next we prove the lower bound.
        Let $u \in \mathcal{S}^{d-1}$. Our lower bound requires we show that $u^T\Pi \Sigma \Pi u \geq (1 - \frac{1}{\eta^2\psi - c} \cdot \frac{1}{\kappa}) $.
        We consider two cases:

        \begin{itemize}
            \item Case 1: \quad $\|\Pi_V u\|^2 < \frac{1}{\eta^2\psi - c} \cdot \frac{1}{\kappa}$.
            \\Since $\|\Pi_{V^\perp} u\|^2 + \|\Pi_V u\|^2 = 1$ we have $\|\Pi_{V^\perp}u\|^2 > (1 - \frac{1}{\eta^2\psi - c} \cdot \frac{1}{\kappa})$, hence, using the fact that $\Sigma \succeq I$ we have that
            \[u^T\Pi \Sigma \Pi u \geq \|\Pi u\|^2 \geq \|\Pi_{V^\perp} u\|^2 > (1 - \frac{1}{\eta^2\psi - c} \cdot \frac{1}{\kappa})\]
            \item Case 2: \quad $\|\Pi_V u\|^2 \geq \frac{1}{\eta^2\psi - c} \cdot \frac{1}{\kappa}$.\\
            Note that $u^T\Pi \Sigma \Pi u = u^T\Pi (\Tilde{\Sigma} - N) \Pi u = u^T\Pi \Tilde{\Sigma} \Pi u - u^T\Pi N \Pi u$. So we bound each term separately. We know that
            \[u^T\Pi \Tilde{\Sigma} \Pi u \geq u^T\Pi_V \Tilde{\Sigma} \Pi_V u \geq \eta^2\psi\kappa \|\Pi_V u\|^2\]
            Additionally, based on the bound on the spectral norm of $N$ we have that  ~ $u^T\Pi N \Pi u \leq c \kappa \|\Pi u\|^2 \leq c \kappa \|\Pi_V u\|^2$. 
            So overall:
            \[ u^T\Pi \Sigma \Pi u \geq (\eta^2\psi - c)\kappa \|\Pi_V u\|^2 \geq 1 \qedhere \]
        \end{itemize}
        
    \end{proof}

\begin{claim}\label{appendix:proof_bad-points-fraction}(Claim~\ref{clm:bad-points-fraction} restated.)
    Analogously to Definition~\ref{def:tail}, fix any $m'\geq m$ and define $P_{\text{tail}}(m') = \{x \in X:~ \exists u \in \R^d:~ (x^Tu)^2 > m'(1+\alpha)\frac 1 n \sum_i (x_i^T u)^2\}$.
    Then it holds that 
    \[ \Pr_{x \in_R X}[x \in P_{\text{tail}}(m')] \leq \frac{\beta + \beta^2}{m'}. \]
\end{claim}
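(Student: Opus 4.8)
The plan is to reduce the statement to the upper spectral bound in the subsamplability definition: I will show that whenever a random i.i.d.\ subsample of size $m'$ happens to contain a point of $P_{\text{tail}}(m')$, the event $\hat\Sigma \preceq (1+\alpha)\Sigma$ must fail, and then convert the probability bound on that failure event back into a bound on the fraction $p := \Pr_{x\in_R X}[x\in P_{\text{tail}}(m')]$.

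First I would set up the comparison: draw $\hat X_1,\dots,\hat X_{m'}$ i.i.d.\ from $X$ as in Definition~\ref{def:subsamplability} (legitimate since $m'\ge m$), and let $\hat\Sigma = \tfrac1{m'}\sum_{j}\hat X_j\hat X_j^\top$. Suppose $\hat X_{j_0} = x$ for some $x\in P_{\text{tail}}(m')$ and let $u$ be a direction witnessing membership, so $\langle x,u\rangle^2 > m'(1+\alpha)\cdot\tfrac1n\sum_i\langle x_i,u\rangle^2 = m'(1+\alpha)\,u^\top\Sigma u$. Since every term $\langle\hat X_j,u\rangle^2$ is nonnegative, $u^\top\hat\Sigma u \ge \tfrac1{m'}\langle x,u\rangle^2 > (1+\alpha)\,u^\top\Sigma u$, so $\hat\Sigma\not\preceq(1+\alpha)\Sigma$. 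Hence $\{\,\text{the subsample meets }P_{\text{tail}}(m')\,\}\subseteq\{\hat\Sigma\not\preceq(1+\alpha)\Sigma\}$, and $(m,\alpha,\beta)$-subsamplability gives $\Pr[\text{the subsample meets }P_{\text{tail}}(m')]\le\beta$.

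Then I would finish by a counting/tail estimate: because the $m'$ draws are i.i.d., $\Pr[\text{subsample meets }P_{\text{tail}}(m')] = 1-(1-p)^{m'}$, so $(1-p)^{m'}\ge 1-\beta$. Using $(1-p)^{m'}\le e^{-m'p}$ this yields $m'p\le -\ln(1-\beta)$, and the elementary inequality $-\ln(1-\beta)\le\beta+\beta^2$ (valid for $\beta$ in the regime considered, e.g.\ $\beta\le\tfrac12$, by expanding $-\ln(1-\beta)=\sum_{k\ge1}\beta^k/k$ and bounding the tail $\sum_{k\ge3}\beta^k/k\le\beta^3/(3(1-\beta))\le\beta^2/2$) gives $p\le(\beta+\beta^2)/m'$, as claimed.

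The main obstacle is the reduction step: one must be careful that the witness direction $u$ depends only on $x$ and that the inequality defining $P_{\text{tail}}(m')$ compares against the \emph{full-data} matrix $\Sigma$ — which is precisely the matrix controlled by subsamplability — so that a single tail point in the subsample genuinely certifies a violation of $\hat\Sigma\preceq(1+\alpha)\Sigma$. Everything else is routine: the probabilistic step is exact for i.i.d.\ draws (and if one instead drew without replacement, the event containment still holds, only making the bound easier), and the last step is a one-line calculus estimate once the range of $\beta$ is pinned down.
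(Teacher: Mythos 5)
Your proposal is correct and follows essentially the same route as the paper's proof: showing that a single tail point in a subsample forces a violation of the upper spectral bound $\hat\Sigma\preceq(1+\alpha)\Sigma$, invoking subsamplability to deduce $\Pr[\text{subsample meets }P_{\text{tail}}(m')]\leq\beta$, and then converting $(1-p)^{m'}\geq 1-\beta$ into $p\leq(\beta+\beta^2)/m'$ via the elementary inequalities $1-p\leq e^{-p}$ and $-\ln(1-\beta)\leq\beta+\beta^2$. The only cosmetic difference is that you spell out the calculus estimate and the implicit restriction $\beta\leq\tfrac12$, which the paper leaves to the reader under ``known inequalities.''
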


\begin{proof}
    Let $x \in X$ be a datapoint and $\hat{X} = \{\hat{X}_1, \dots, \hat{X}_{m'}\}$ be a subsample of $m'$ points i.i.d from $X$. From subsamplability we know that: 
    $ \Pr[ (1-\alpha)\Sigma \preceq \hat \Sigma \preceq (1+\alpha)\Sigma  ] \geq 1-\beta $
    where $\Sigma = \frac{1}{n}\sum\limits_{x\in X}x x^T$ and $\hat \Sigma = \frac {1} {m'} \sum\limits_{i \in [m']} \hat X_i \hat X_i^T$. In other words:
    \[ \Pr[\forall u \in \R^d:~ (1-\alpha)u^T \Sigma u \leq u^T \hat{\Sigma} u \leq (1+\alpha)u^T \Sigma u] \geq 1 - \beta. \]
    Clearly, if even a single $x_0\in P_{\text{tail}}(m')$ belongs to $\hat X$ then we'd have that for some direction $u$ we have
    \[  u^T \hat{\Sigma} u  = \frac 1 {m'} \sum_{i}  (\hat X_i^T u)^2 \geq \frac 1 {m'} (x_0^T u)^2 > (1+\alpha)u^T \Sigma u \] contradicting subsamplability. It follows that w.p. $\geq 1-\beta$ no point in $\hat X$ belongs to $P_{\text{tail}}(m')$. Thus, if we denote $p = \Pr\limits_{x \in_R X}[x \in P_{\text{tail}}(m')]$ then we get that
    $
        1-\beta \leq (1-p)^{m'}
    $.
    Using known inequalities we get $e^{-pm'} \geq (1-p)^{m'} \geq 1-\beta \geq e^{-\beta-\beta^2}$ therefore $p \leq \frac{\beta+\beta^2}{m'}$.
\end{proof}

\begin{lemma}\label{appendix:proof_eigenvalue-without-tail}(Lemma~\ref{lem:eigenvalue-without-tail} restated.)
    Let $X$ be a $(m,\alpha,\beta)$-subsamplable with  $\beta \leq \frac{\alpha}{4(1+\alpha)\log(\frac{R^2}{(1+\alpha)m})}$. Let $P = X \setminus P_{tail}$. Then:
    \[ \forall u \in \R^d:~ \frac{1}{n}\sum\limits_{x \in P} \innerproduct{x}{u}^2 \geq (1-\alpha)\frac{1}{n}\sum\limits_{x \in X} \innerproduct{x}{u}^2\]
\end{lemma}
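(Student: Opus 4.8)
Since the desired inequality is homogeneous of degree two in $u$, it suffices to fix a unit vector $u$ and, writing $S := u^\top \Sigma u = \frac 1 n \sum_{x\in X}\langle x,u\rangle^2$, to establish $\frac 1 n \sum_{x\in P_{\text{tail}}}\langle x,u\rangle^2 \le \alpha S$. Recall that throughout we normalize so that $I\preceq\Sigma$, hence $S \ge 1$, and that $\langle x,u\rangle^2 \le \|x\|^2 \le R^2$ for every $x\in X$. The plan is to rewrite the left-hand side through the layer-cake formula
\[
  \frac 1 n \sum_{x\in P_{\text{tail}}}\langle x,u\rangle^2 \;=\; \frac 1 n \int_0^{R^2} \bigl|\{x\in P_{\text{tail}}:~\langle x,u\rangle^2 > s\}\bigr|\,ds
\]
(the integrand vanishes for $s\ge R^2$) and then to bound the level-set count $N(s):=|\{x\in P_{\text{tail}}:~\langle x,u\rangle^2 > s\}|$ in two regimes, applying Claim~\ref{clm:bad-points-fraction} with two different choices of $m'$.

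For small $s$ I would use the crude global bound $N(s)\le |P_{\text{tail}}| = |P_{\text{tail}}(m)| \le \frac{\beta+\beta^2}{m}\,n$ coming from Claim~\ref{clm:bad-points-fraction} with $m'=m$. For $s \ge s_0 := (1+\alpha)\,m\,S$, I instead set $m' := s/((1+\alpha)S) \ge m$ and observe that any $x$ with $\langle x,u\rangle^2 > s = (1+\alpha)\,m'\cdot\frac 1 n\sum_{x'\in X}\langle x',u\rangle^2$ lies in $P_{\text{tail}}(m')$, witnessed by the direction $u$; hence $N(s)\le |P_{\text{tail}}(m')| \le \frac{\beta+\beta^2}{m'}\,n = \frac{(\beta+\beta^2)(1+\alpha) S}{s}\,n$. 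Splitting the integral at $s_0$ and using the first bound on $[0,s_0]$, the second on $[s_0,R^2]$, then gives
\begin{align*}
  \frac 1 n \sum_{x\in P_{\text{tail}}}\langle x,u\rangle^2
  &\le s_0\cdot\frac{\beta+\beta^2}{m} \;+\; (\beta+\beta^2)(1+\alpha)\,S\int_{s_0}^{R^2}\frac{ds}{s} \\
  &= (\beta+\beta^2)(1+\alpha)\,S\left(1 + \ln\frac{R^2}{(1+\alpha)\,m\,S}\right).
\end{align*}

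Finally I would close the argument by plugging in the hypothesis on $\beta$: since $S\ge 1$ we have $\ln\frac{R^2}{(1+\alpha)mS}\le \log\frac{R^2}{(1+\alpha)m}$, and this logarithm is $\ge 1$ in every nontrivial run of the algorithm (otherwise $T\le 0$ and there is no recursion), so the parenthesized factor is at most $2\log\frac{R^2}{(1+\alpha)m}$; combining this with $\beta+\beta^2\le 2\beta$ makes the right-hand side at most $4\beta(1+\alpha)\log\!\big(\tfrac{R^2}{(1+\alpha)m}\big)\,S \le \alpha S$ by the assumed bound on $\beta$. The step that needs the most care — and the reason a naive argument fails — is precisely the passage from ``$P_{\text{tail}}$ is small'' to ``$P_{\text{tail}}$ contributes little in the fixed direction $u$'': a point of $P_{\text{tail}}$ may be heavy in some \emph{other} direction while carrying only a modest $\langle x,u\rangle^2$, so bounding $\frac 1 n\sum_{x\in P_{\text{tail}}}\langle x,u\rangle^2$ by $\frac 1 n|P_{\text{tail}}|\cdot\max_x\langle x,u\rangle^2$ would lose a factor of $m$. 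The two-regime split circumvents this: the bulk of $P_{\text{tail}}$ is absorbed by the cardinality bound at scale $m$, the genuinely $u$-heavy points are controlled by Claim~\ref{clm:bad-points-fraction} at the larger scale $m'=s/((1+\alpha)S)$, and integrating $ds/s$ across the intermediate scales reproduces exactly the $\log(R^2/((1+\alpha)m))$ factor appearing in the assumption on $\beta$. (Equivalently, one can bucket the tail points dyadically by the size of $\langle x,u\rangle^2$ and sum the resulting geometric series, which is the phrasing used in the proof sketch.)
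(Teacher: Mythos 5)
Your proof is correct and follows essentially the same approach as the paper's: both control $\frac 1 n\sum_{x\in P_{\text{tail}}}\langle x,u\rangle^2$ by applying Claim~\ref{clm:bad-points-fraction} at geometrically increasing scales $m'$, and your layer-cake integral with $m' = s/((1+\alpha)S)$ is the continuous analogue of the paper's dyadic bucketing $B_i = [2^i(1+\alpha)m\lambda, 2^{i+1}(1+\alpha)m\lambda)$ with $m' = 2^i m$; in both cases the $\int ds/s$ (resp.\ the geometric telescoping) produces exactly the $\log\!\big(R^2/((1+\alpha)m)\big)$ factor that the hypothesis on $\beta$ is designed to absorb. One point where your write-up is actually tighter than the paper's: you explicitly account, via the $\int_0^{s_0}$ term together with the cardinality bound $|P_{\text{tail}}|\le\frac{\beta+\beta^2}{m}n$, for tail points whose projection onto the \emph{fixed} direction $u$ is small --- such points belong to $P_{\text{tail}}$ because they are heavy in some \emph{other} direction --- whereas the paper's bucketing only covers the range $[(1+\alpha)m\lambda, R^2]$ and silently drops this (bounded, same-order) contribution. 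The final constant-chasing (using $S\ge 1$, $\beta+\beta^2\le 2\beta$, and $1+\ln(\cdot)\le 2\log(\cdot)$ when the logarithm is at least $1$) matches the role the stated bound on $\beta$ plays in the paper.
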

\vspace{-5mm}

\begin{proof}
Fix direction $u$, and denote $\lambda^P = \frac{1}{n}\sum\limits_{x \in P} \innerproduct{x}{u}^2$ and $\lambda = \frac{1}{n}\sum\limits_{x \in X} \innerproduct{x}{u}^2$. Our goal is to prove that $\lambda - \lambda^P \leq \alpha\lambda$. 

Assume that $P_\text{tail}$ holds a $p$-fraction of $X$, and denote also $\lambda^{\text{tail}} = \frac{1}{|P_{\text{tail}}|}\sum\limits_{x \in P_{\text{tail}}} \innerproduct{x}{u}^2$. Hence:
    \[
    \lambda = p\lambda^{\text{tail}} + \lambda^P \implies \lambda - \lambda^P = p\lambda^{\text{tail}}
    \]
    which implies that our goal is to prove that $p\lambda^{\text{tail}} \leq \alpha \lambda$.
    
    Now split the interval $[(1 + \alpha)m\lambda, R^2]$, the interval of $P_{\text{tail}}$, into buckets $B_0, B_1, B_2, ..., B_k$ where
    \[
     B_i = \big[2^i(1 + \alpha)m\lambda, 2^{i+1}(1 + \alpha)m\lambda\big)
    \] and $k = \log(\frac{R^2}{(1+\alpha)m\lambda}) \leq \log(\frac{R^2}{(1+\alpha)m})$, and clearly we have $\Pr\limits_{x \in X}[\innerproduct{x}{u}^2>m(1+\alpha)\lambda] = \Pr\limits_{x \in X}[x \in \bigcup\limits_{i=0}^{k} B_i]$.
    Recall that Claim~\ref{clm:bad-points-fraction} implies that for any $i$ we can set $m' = 2^i m$ and get:
    \[
    \Pr\limits_{x \in X}[x \in B_i] \leq \frac{\beta + \beta^2}{2^i m}
    \]
    Now we can bound $p\lambda^{P_\text{tail}}$ using $B_i$: 
    \begin{equation*}
    \begin{split}
    p\lambda^{\text{tail}} & \leq \sum\limits_{i=0}^{k} \Pr\limits_{x \in X}[x \in B_i] \cdot 2^{i+1}(1 + \alpha)m\lambda \\
    & \leq \sum\limits_{i=0}^{k} \frac{\beta + \beta^2}{2^i m} \cdot 2^{i+1}(1 + \alpha)m\lambda
     \leq 2\beta\cdot2k(1+\alpha)\lambda
    \end{split}
    \end{equation*}
    which is upper bounded by $\lambda$  for $\beta \leq \frac{\alpha}{4(1+\alpha)k}$.
\end{proof}

\begin{lemma}\label{appendix:proof_shrinking-only-red}(Lemma~\ref{lem:shrinking-only-red} restated.)
At each iteration $t$ of Algorithm~\ref{alg:rec-psme}, only points belonging to $P_{\text{tail}}$ are subjected to shrinking, given that $\alpha < \nicefrac{1}{2}$, $\psi = \frac{1}{10m}$ and $c = \frac{1}{80m}$.
\end{lemma}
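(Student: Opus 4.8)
The plan is to prove the statement by induction on the iteration index $t$, carrying the invariant that through iterations $1,\dots,t-1$ of Algorithm~\ref{alg:rec-psme} the shrinking operator $S$ (Step~10) has left every point of $X\setminus P_{\text{tail}}$ untouched; the base case $t=1$ is vacuous. Granting the invariant, the first $t-1$ iterations act on each non-tail point $x$ by a single \emph{fixed} linear map $M_t:=\left(\tfrac{8}{7}\right)^{(t-1)/2}\Pi_{t-1}\cdots\Pi_1$, so the current iterate $X^t$ contains $\{M_t x:\ x\in X\setminus P_{\text{tail}}\}$ (the remaining points being shrunk images of $P_{\text{tail}}$ points, which only contribute a positive semidefinite term). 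Hence $\Sigma_t:=\Sigma(X^t)\succeq M_t\,\Sigma(X\setminus P_{\text{tail}})\,M_t^{\top}\succeq (1-\alpha)\,M_t\Sigma M_t^{\top}$, where the second bound is Lemma~\ref{lem:eigenvalue-without-tail} conjugated by $M_t$ and $\Sigma=\tfrac1n XX^{\top}$. Feeding the direction $M_t^{\top}w$ into the defining inequality of $P_{\text{tail}}$ (Definition~\ref{def:tail}) and combining with this spectral lower bound gives, for every direction $w$ and every $x\in X\setminus P_{\text{tail}}$,
\[
\langle M_t x,\,w\rangle^2=\langle x,\,M_t^{\top}w\rangle^2\le m(1+\alpha)\,w^{\top}M_t\Sigma M_t^{\top}w\le \frac{m(1+\alpha)}{1-\alpha}\,w^{\top}\Sigma_t w,
\]
i.e.\ the image of a non-tail point is again ``tail-free'' with respect to $\Sigma_t$, up to a $\tfrac{1}{1-\alpha}$ loss in the constant.

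Next I would analyze the shrinking step at iteration $t$ in isolation. A point $z\in X^t$ is shrunk exactly when $\|\sqrt{\tfrac{8}{7}}\,\Pi_t z\|^2>\tfrac37 R_t^2$, equivalently $\|\Pi_t z\|^2>\tfrac38 R_t^2$. Writing $z=\Pi_{V_t}z+\Pi_{V_t^\perp}z$ and using $\Pi_t=\tfrac12\Pi_{V_t}+\Pi_{V_t^\perp}$ gives $\|\Pi_t z\|^2=\tfrac14\|\Pi_{V_t}z\|^2+\|\Pi_{V_t^\perp}z\|^2$; since every point of $X^t$ has squared norm at most $R_t^2=\kappa_t$, the shrinking condition forces $\|\Pi_{V_t^\perp}z\|^2$ to exceed a constant fraction of $\kappa_t$, so a shrunk point necessarily carries heavy mass in the \emph{low}-eigenvalue subspace $V_t^\perp$. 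But $V_t^\perp$ is the span of the eigenvectors of $\tilde\Sigma_t=\Sigma_t+N$ with eigenvalue below $\psi\kappa_t$, and on the event $\mathcal{E}$ we have $\|N\|\le c\kappa_t$, so Weyl's inequality yields $w^{\top}\Sigma_t w<(\psi+c)\kappa_t$ for every unit $w\in V_t^\perp$. Applying the displayed bound of the first paragraph with $w$ taken along $\Pi_{V_t^\perp}z$, a non-tail image $z=M_t x$ satisfies $\|\Pi_{V_t^\perp}z\|^2\le \tfrac{m(1+\alpha)}{1-\alpha}(\psi+c)\kappa_t$; plugging in the prescribed $\psi=\tfrac1{10m}$, $c=\tfrac1{80m}$ and $\alpha<\tfrac12$, one checks that this is strictly smaller than the fraction of $\kappa_t$ that shrinking demands, a contradiction. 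Therefore no point of $X\setminus P_{\text{tail}}$ is shrunk at iteration $t$; its image passes into $X^{t+1}$ unscaled, which reinstates the invariant and closes the induction.

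The crux — and the one genuinely delicate point — is the numerical comparison at the end of the second paragraph: it requires propagating, without too much slack, the factor $m(1+\alpha)$ from the tail definition, the factor $(1-\alpha)^{-1}$ from Lemma~\ref{lem:eigenvalue-without-tail}, the perturbation factor $(\psi+c)$ from Weyl, and the geometric constants $\tfrac14$ (from $\eta=\tfrac12$), $\tfrac87$ (the rescaling) and $\tfrac37$ (the shrinking radius), and then verifying their product lies strictly below $1$; this is exactly where the hypotheses $\alpha<\tfrac12$, $\psi=\tfrac1{10m}$, $c=\tfrac1{80m}$ are spent, and to make the margin comfortable one may prefer to re-run the bucketing argument behind Lemma~\ref{lem:eigenvalue-without-tail} directly at scale $\kappa_t$, which improves the iterate-level tail constant from $\tfrac{m(1+\alpha)}{1-\alpha}$ back to $m(1+\alpha)$. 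A minor secondary point is the legitimacy of conjugating Lemma~\ref{lem:eigenvalue-without-tail} by $M_t$, i.e.\ that $A\succeq B$ implies $M_tAM_t^{\top}\succeq M_tBM_t^{\top}$, which is immediate.
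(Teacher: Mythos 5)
Your outline is essentially the paper's proof, phrased contrapositively: the paper starts from a shrunk point $x=\Lambda^t x_i$ and shows $x_i\in P_{\text{tail}}$, while you start from $x\notin P_{\text{tail}}$ and argue its image cannot be shrunk. The chain of inequalities is the same in both directions: express the unshrunk points of $X^t$ as $M_t(X\setminus P_{\text{tail}})$, use the definition of $P_{\text{tail}}$ in the conjugated direction $M_t^\top w$, apply Lemma~\ref{lem:eigenvalue-without-tail} to pass from $\Sigma$ to $\Sigma_t$, and use the spectral perturbation bound to control $w^\top\Sigma_t w$ for $w\in V_t^\perp$. So the approach is the same.

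Where the proposal has a gap is exactly the place you flag as ``delicate'' --- but the gap is real, not just a matter of slack. The shrinking rule in Step~10 fires when $\tfrac{8}{7}\|\Pi_t z\|^2>\tfrac37\kappa_t$, i.e.\ $\tfrac14\|\Pi_{V_t}z\|^2+\|\Pi_{V_t^\perp}z\|^2>\tfrac38\kappa_t$; together with $\|z\|^2\le\kappa_t$ this only forces $\|\Pi_{V_t^\perp}z\|^2>\tfrac16\kappa_t$, not the larger constant you would need. On the other side, your upper bound for a non-tail image is $\|\Pi_{V_t^\perp}z\|^2\le\tfrac{m(1+\alpha)}{1-\alpha}(\psi+c)\kappa_t=\tfrac{9(1+\alpha)}{80(1-\alpha)}\kappa_t$, which for $\alpha$ approaching $\tfrac12$ is $\tfrac{27}{80}\kappa_t\approx 0.34\,\kappa_t>\tfrac16\kappa_t$. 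So ``one checks that this is strictly smaller than the fraction $\kappa_t$ that shrinking demands'' is false for the allowed range of $\alpha$, and the contradiction you invoke does not materialize. Your proposed repair (re-running the bucketing of Lemma~\ref{lem:eigenvalue-without-tail} at scale $\kappa_t$ to replace $\tfrac{1}{1-\alpha}$ by $1$) gives $\tfrac{9(1+\alpha)}{80}\kappa_t\le\tfrac{27}{160}\kappa_t\approx 0.169\,\kappa_t$, still marginally above $\tfrac16\kappa_t\approx 0.167\,\kappa_t$, so that patch alone does not close it either.

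For what it is worth, the paper's own write-up leans on the assertion that shrinking forces $x^\top\Pi_{V^\perp}x>\tfrac37\kappa$, which (for the reason above) is a \emph{sufficient} condition for shrinking but is stated and used as a \emph{necessary} one; what is actually implied is only $x^\top\Pi_{V^\perp}x>\tfrac16\kappa$. So the numerical tension you noticed is genuine and is present in the reference proof as well. To actually close the argument one needs either a smaller admissible $\alpha$, or different constants $\psi,c$ (or the $3/7$, $8/7$ rescalings) so that $\tfrac{(1+\alpha)}{1-\alpha}\,m(\psi+c)<\tfrac16$ holds on the whole allowed range --- a choice-of-constants fix rather than a new idea, but one that the proposal currently asserts without verifying.
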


\begin{proof}
The proof works by induction on the iterations of the algorithm. Clearly, at $t=0$, before the algorithm begins, no points were subjected to shrinking so the argument is vacuously correct.

Let $t \leq T$ and let $x \in X^t$ denote a point that undergoes shrinking \emph{for the first time} at iteration $t$, with $X^t$ denoting the input of the $t$-th time we apply Algorithm~\ref{alg:rec-psme}. 
Our goal is to show that $x$ stems from a point in $P_{\text{tail}}$. Since $x$ hasn't been shrunk prior to iteration $t$, then there exists some $x_i$ in the original input such that $x=  \Lambda^t x_i$ for the linear transformation $\Lambda^t = \Pi^t \cdot \Pi^{t-1} \cdot ... \cdot \Pi^1$. Our goal is to show that $x_i \in P_{\text{tail}}$.

We assume $x$ is shrunk at iteration $t$. This shrinking happens since  $x^T \Pi_{V^\perp} x > \tfrac 3 7 R^2$. With $\alpha \leq \frac{1}{2}$ we have $\frac{3}{7} \geq \frac{1+\alpha}{4}$, then it follows that $x$ satisfies:
\[
x^T \Pi_{V^\perp} x > \frac{1}{4}(1+\alpha)R^2 = \frac{1}{4}(1+\alpha)\kappa,
\]
For the given parameters $\psi$ and $c$, observe that:
\[
x^T \Pi_{V^\perp} x >\frac{1}{4}(1+\alpha)\kappa >2m(1+\alpha)(\psi+c)\kappa + c\kappa.
\]
Recall that $V^\perp$ denotes the subspace spanned by all eigenvectors of $\tilde{\Sigma} = \Sigma + N$ corresponding to eigenvalues $\leq \psi\kappa$ and that $\|N\| \leq c\kappa$. 
Now denote $U^\perp$ the subspace spanned by all eigenvectors of $\Sigma$ corresponding to eigenvalues $\leq (\psi+c)\kappa$. By Weyl's theorem it holds that
\[  x^T \Pi_{U^\perp} x \geq x^T \Pi_{V^\perp} x - c\kappa \geq 2m(1+\alpha)(\psi+c)\kappa \]
Thus we infer the existence of $u \in U^\perp$ (unit length vector in the direction $\Pi_{U^{\perp}}x$) such that:
\[
(x^T u)^2 > 2m(1+\alpha)(\psi+c)\kappa
.\]
But as $U^{\perp}$ is spanned by all eigenvalues $\leq (\psi+c) \kappa$ of $\Sigma$ then it holds that $\frac 1 n \sum_{x\in X^t} (x^T u)^2 \leq (\psi+c) \kappa$, so 
\begin{equation}(x^T u)^2 > 2m(1+\alpha) \cdot \frac 1 n \sum_{x\in X^t} (x^T u)^2 \label{eq:x_is_large}\end{equation}

Now recall that $x$ undergoes shrinking for the first time at iteration $t$. That means that there exists some $x_i$ in the original input such that $x=  \Lambda^t x_i$ for the linear transformation $\Lambda^t = \Pi^t \cdot \Pi^{t-1} \cdot ... \cdot \Pi^1$. Moreover, by the induction hypothesis all points that were shrunk upto iteration $t$ are from $P_{\text{tail}}$. So for any point  $z\in X^t \setminus P_{\text{tail}}$ it holds that $z = \Lambda^t y$ for the corresponding $y$ in the original input $X$. We get that
\resizebox{0.5\textwidth}{!}{
$\frac 1 n \sum\limits_{x\in X^t} (x^T u)^2 \geq  \frac 1 n \sum\limits_{x\in X^t\setminus P_{\text{tail}}} (x^T u)^2 =  \frac 1 n \sum\limits_{y\in X\setminus P_{\text{tail}}} (y^T \Lambda^T u)^2$.
}

We now apply Lemma~\ref{lem:eigenvalue-without-tail} to infer that
\begin{align*}
\frac 1 n \sum\limits_{y\in X\setminus P_{\text{tail}}} (y^T \Lambda^T u)^2 &\geq (1-\alpha) \frac 1 n \sum\limits_{y\in X} (y^T \Lambda^T u)^2 \geq \frac 1 {2n} \sum\limits_{y\in X} (y^T \Lambda^T u)^2
\end{align*}
 seeing as $\alpha \leq\nicefrac 1 2$. Plugging this into Equation~\eqref{eq:x_is_large} we 
\[  \innerproduct{x_i}{\Lambda^T u}^2 =  (x^T u)^2  > m(1+\alpha) \cdot \frac 1 {n} \sum_{y\in X} \innerproduct{y} {\Lambda^T u}^2 \] which by definition proves that $x_i$ belongs to $P_{\text{tail}}$.
\end{proof}

\begin{corollary}\label{appendix:proof_always_least_eigenvalue_of_1}(Corollary~\ref{cor:always_least_eigenvalue_of_1} restated.)
    In all iterations of the algorithm it holds that $\Sigma \succeq I$, namely, that the least eigenvalue of the second moment matrix of the input is $\geq 1$.
\end{corollary}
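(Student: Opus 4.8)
The plan is to prove the claim by induction on the recursion depth $t$ of Algorithm~\ref{alg:rec-psme}, showing that the input $X^t$ at the $t$-th call satisfies $I \preceq \Sigma(X^t)$. It is convenient to carry along the whole invariant $I\preceq\Sigma(X^t)\preceq\kappa_t I$ with $\kappa_t=(3/7)^{t-1}\kappa_1$; the upper half is essentially free, since Corollary~\ref{cor:eigenvalues} gives it after the linear step and the shrinking map $S$ only decreases each point's norm, hence can only decrease $\Sigma$ in the Loewner order. The base case is the very first call, whose input is $\sqrt{1/(1-\alpha)}\,X$ with second moment $\tfrac{1}{1-\alpha}\Sigma\succeq\tfrac{1}{1-\alpha}I\succeq I$ by the hypothesis $\Sigma\succeq I$.

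For the inductive step, suppose $\Sigma(X^t)\succeq I$. The iteration first applies $x\mapsto\sqrt{8/7}\,\Pi x$; by Corollary~\ref{cor:eigenvalues}, and using $\kappa_t>C=640m$ at every non-terminal iteration, $\Sigma(\sqrt{8/7}\,\Pi X^t)=\tfrac87\Pi\Sigma(X^t)\Pi\succeq I$. Write $Y^t=\sqrt{8/7}\,\Pi X^t$, so $X^{t+1}=S(Y^t)$. The key input is Lemma~\ref{lem:shrinking-only-red}: the shrinking $S$ modifies only the images $\Lambda^{t+1}y$ of the original tail points $y\in P_{\text{tail}}$ (here $\Lambda^{t+1}$ is the composition of all linear maps applied so far), passing the images of $X\setminus P_{\text{tail}}$ through unchanged. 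Hence, discarding all shrunk points, for every unit $u$ with $v:=(\Lambda^{t+1})^\top u$,
\[ u^\top\Sigma(X^{t+1})u\;\ge\;\frac1n\sum_{y\in X\setminus P_{\text{tail}}}\langle\Lambda^{t+1}y,u\rangle^2\;=\;v^\top\Big(\tfrac1n\!\!\sum_{y\in X\setminus P_{\text{tail}}}\!\!yy^\top\Big)v\;\ge\;(1-\alpha)\,v^\top\Sigma(X)\,v, \]
where the last inequality is Lemma~\ref{lem:eigenvalue-without-tail} applied to the original subsamplable set $X$ --- crucially as a Loewner inequality, so the deficit is the same in every direction. Now $v^\top\Sigma(X)v=u^\top\Sigma(\Lambda^{t+1}X)u$, and since $Y^t$ is obtained from $\Lambda^{t+1}X$ only by shrinking points, $\Sigma(\Lambda^{t+1}X)\succeq\Sigma(Y^t)\succeq I$; so $u^\top\Sigma(X^{t+1})u\ge(1-\alpha)$, and --- after absorbing the $(1-\alpha)$ into the $\tfrac{1}{1-\alpha}$ slack that the initial rescaling together with the strict inequality $\tfrac87(1-\tfrac{80m}{\kappa})\ge 1$ for $\kappa>C$ in Corollary~\ref{cor:eigenvalues} provides --- one closes the induction with $\Sigma(X^{t+1})\succeq I$. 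The upper bound $\Sigma(X^{t+1})\preceq\kappa_{t+1}I$ follows from Corollary~\ref{cor:eigenvalues} for $Y^t$ and the fact that $S$ only contracts.

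The main obstacle is precisely this last step: the naive bounds each bleed a constant factor per iteration --- discarding the shrunk tail images costs $(1-\alpha)$, while keeping them (each shrunk point still has squared norm $\tfrac37\kappa_t\gg1$) costs a factor $\ge\tfrac38$ --- and over the $T=O(\log(R/m))$ iterations this would drive the least eigenvalue to $o(1)$. The delicate point, and where I would spend the most care, is to verify that the loss does not compound: one must use that Lemma~\ref{lem:eigenvalue-without-tail} is a \emph{one-time}, direction-uniform statement about the original $X$ (not a fresh $(1-\alpha)$ loss at each level), that one cannot shortcut via a norm bound on $\Lambda^{t+1}$ (it is a genuine contraction on the large-eigenvalue directions --- the argument must route through $\Sigma(Y^t)\succeq I$, which already pays for that contraction), and that the slack $\tfrac87(1-\tfrac{80m}{\kappa})\ge 1$ available for $\kappa>C$, amplified across iterations, is exactly enough to offset the one-time $(1-\alpha)$ together with the $\tfrac1{1-\alpha}$ head start --- including on the last iterations where $\kappa$ is nearest to $C$ and the slack is tightest.
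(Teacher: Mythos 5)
Your overall plan (induction on the recursion depth, invoking Lemma~\ref{lem:shrinking-only-red}, Lemma~\ref{lem:eigenvalue-without-tail}, and Corollary~\ref{cor:eigenvalues}) is the right one, and you correctly identify all the ingredients. But there is a genuine gap in the inductive step, and you are honest that you see it: your chain of inequalities lands on $u^\top\Sigma(X^{t+1})u \ge (1-\alpha)$, not $\ge 1$, so the invariant $\Sigma(X^t)\succeq I$ does not close. The hand-waving at the end --- ``absorbing the $(1-\alpha)$ into the slack'' --- does not work. The slack from $\tfrac87\bigl(1-\tfrac{80m}{\kappa}\bigr)$ vanishes as $\kappa\to C^+$, and the $\tfrac{1}{1-\alpha}$ head start is a one-time gift, not a per-iteration one; with $\alpha$ as large as $\tfrac12$ you would lose a factor $\tfrac12$ per level and the least eigenvalue would collapse over the $T=O(\log(R/m))$ iterations, exactly as you fear.

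The fix, which is what the paper does, is to change the induction invariant: instead of $\Sigma(X^t)\succeq I$, carry $\Sigma_P(X^t) := \tfrac1n\sum_{x\in X^t\setminus P_{\text{tail}}} xx^\top \succeq I$. This is strictly stronger (it implies $\Sigma(X^t)\succeq I$ since $\Sigma(X^t)\succeq\Sigma_P(X^t)$), but it propagates \emph{without any loss}: by Lemma~\ref{lem:shrinking-only-red} the non-tail points are never touched by $S$, so $\Sigma_P(X^{t+1}) = \tfrac87\,\Pi\,\Sigma_P(X^t)\,\Pi$ exactly, and one applies Corollary~\ref{cor:eigenvalues} to $\Sigma_P(X^t)$ to conclude $\Sigma_P(X^{t+1})\succeq I$. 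The $(1-\alpha)$ from Lemma~\ref{lem:eigenvalue-without-tail} is paid precisely once, at $t=0$, to establish $\Sigma_P(X)\succeq(1-\alpha)I$, and the $\tfrac1{1-\alpha}$ rescaling in Algorithm~\ref{alg:dpsme} cancels it, giving $\Sigma_P(X^0)\succeq I$. In your route you instead bound $\Sigma_P(X^{t+1})$ from below by $(1-\alpha)\Sigma(\Lambda^{t+1}X)\succeq(1-\alpha)\Sigma(Y^t)\succeq(1-\alpha)I$, detouring through the full second moment; that detour is exactly what re-introduces a fresh $(1-\alpha)$ at every level. The moral is the one you articulated but did not execute: the ``one-time'' nature of Lemma~\ref{lem:eigenvalue-without-tail} can only be exploited by never again comparing the non-tail covariance to the full one inside the recursion --- keep the whole induction on $\Sigma_P$.
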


\begin{proof}
    Again, we prove this by induction of $t$, the iteration of Algorithm~\ref{alg:rec-psme}. In fact, denoting $X^t$ as the input of of Algorithm~\ref{alg:rec-psme} at iteration $t$, then we argue that the least eigenvalue of the matrix $\frac 1 n \sum\limits_{x\in X^t \setminus P_{\text{tail}}} x x^T$ is at least $1$.

    Consider $t=0$, prior to the execution of Algorithm~\ref{alg:rec-psme} even once. Apply Lemma~\ref{lem:eigenvalue-without-tail} with $u$ being the direction of the least eigenvalue of $\Sigma$, and we get that
    $
    \frac 1 n \sum\limits_{x\in X \setminus P_{\text{tail}}} x x^T  \geq (1-\alpha) \cdot 1
    $.
    Observe that Algorithm~\ref{alg:dpsme} invokes Algorithm~\ref{alg:rec-psme} on the input multiplied a $\frac 1 {1-\alpha}$-factor, and so it holds that $\frac 1 n \sum\limits_{x\in X^0 \setminus P_{\text{tail}}} x x^T\geq 1$ for $X^0$, the very first input on which Algorithm~\ref{alg:rec-psme} is run.

    Consider now any intermediate $t$, where we assume that input satisfies $\frac 1 n \sum\limits_{x\in X^t \setminus P_{\text{tail}}} x x^T\geq 1$. We can apply Lemma~\ref{lem:eigenvalues} and Corollary~\ref{cor:eigenvalues} solely to the points in $X^t \setminus P_{\text{tail}}$ and have that $\frac 1 n \sum\limits_{x\in X^t \setminus P_{\text{tail}}} \frac 8 7  x \Pi x^T\geq 1$. Since Lemma~\ref{lem:shrinking-only-red} asserts no point in $X^t \setminus P_{\text{tail}}$ is shrunk then we get that the required also holds at the invocation of the next iteration.
\end{proof}

\begin{claim}
    \label{appendix:proof_concentration_any_dist}(Claim~\ref{clm:concentation_any_dist} restated.)
    \ClaimGeneralDist
\end{claim}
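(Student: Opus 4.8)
The plan is to run the standard matrix Bernstein argument (Fact~\ref{fact:Matrx-Bernestein-Inequality}) on the whitened samples. First I would set $y_i = \Sigma^{-1/2} x_i$ for the $m$ i.i.d.\ draws $x_i \sim \mathcal{D}$, and observe that $\E[y_i y_i^\top] = \Sigma^{-1/2}\Sigma\Sigma^{-1/2} = I$ and that $\Sigma^{-1/2}\hat\Sigma\Sigma^{-1/2} - I = \tfrac1m \sum_{i=1}^m Z_i$, where $Z_i := y_i y_i^\top - I$ are i.i.d., symmetric, and mean zero. Proving the claim thus reduces to showing $\bigl\| \sum_i Z_i \bigr\| \le \alpha m$ with probability $\ge 1-\beta$.

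Next I would compute the two quantities that Fact~\ref{fact:Matrx-Bernestein-Inequality} needs. For the almost-sure bound: since $0 \preceq y_i y_i^\top \preceq \|y_i\|^2 I \preceq M_1^2 I$, we get $\|Z_i\| \le 1 + \|y_i\|^2 \le 1 + M_1^2$ a.s. For the variance proxy: using $(y y^\top)^2 = \|y\|^2\, y y^\top$ one obtains the identity $\E[Z_i^2] = \E[(y^\top y)\, y y^\top] - I$, and since $Z_i$ is symmetric, $Z_i Z_i^\top = Z_i^2$. The matrix $\E[(y^\top y) y y^\top]$ is PSD with norm at most $M_2$, and moreover $M_2 \ge \|\E[(y^\top y) y y^\top]\| \ge \tfrac1d\,\mathrm{tr}\,\E[(y^\top y)y y^\top] = \tfrac1d\E[\|y\|^4] \ge \tfrac1d(\E\|y\|^2)^2 = d \ge 1$, so subtracting $I$ only drags eigenvalues down toward $-1$ and cannot push the norm above $M_2$; hence $\|\E[Z_i^2]\| \le M_2$. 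Because the $Z_i$ are independent and mean zero, $\E[(\sum_i Z_i)(\sum_i Z_i)^\top] = \sum_i \E[Z_i^2]$, so $\sigma^2 := \|\E[ZZ^\top]\| \le m M_2$ with $Z := \sum_i Z_i$.

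Now I would invoke Fact~\ref{fact:Matrx-Bernestein-Inequality} with $t = \alpha m$, $R = 1+M_1^2$ and $\sigma^2 = m M_2$, which gives
\[
\Pr\!\left[ \left\| \tfrac1m \textstyle\sum_i Z_i \right\| > \alpha \right]
\;\le\; 2d \exp\!\left( \frac{-\alpha^2 m /2}{\,M_2 + (1+M_1^2)\alpha/3\,} \right).
\]
Splitting the denominator into its variance part $M_2$ and its range part $(1+M_1^2)\alpha/3$, and forcing $m$ so that each term individually dominates $\ln(\nicefrac{4d}{\beta})$ — which is precisely what the $\max$ in the statement encodes — drives the right-hand side to at most $\beta$, yielding the stated sample complexity up to the absolute constants.

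The argument is essentially bookkeeping; the only step that is not purely mechanical is the variance computation, namely deriving $\E[Z_i^2] = \E[(y^\top y)y y^\top] - I$ and verifying that its operator norm is controlled by the assumed $M_2$ rather than by $M_2 + 1$ (using $M_2 \ge d$). Once the claim is in hand, it feeds directly into the sample-complexity bound~\eqref{eq:sample_complexity}, and if a guarantee on the distribution's $\Sigma$ (as opposed to the sample $\hat\Sigma$) is wanted downstream one can post-compose with Fact~\ref{fact:matrices}.
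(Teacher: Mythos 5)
Your proof is correct and takes essentially the same approach as the paper: whiten the samples to get $y_i = \Sigma^{-1/2}x_i$, write $\Sigma^{-1/2}\hat\Sigma\Sigma^{-1/2} - I$ as a sum of mean-zero self-adjoint $Z_i$'s, compute the range and variance proxies, and invoke matrix Bernstein. The only cosmetic difference is that the paper bounds $\|\E[Z_i Z_i^\top]\|$ by observing $0 \preceq \E[Z_i^2] \preceq \E[(y^\top y)yy^\top]$ directly, whereas you use the (correct but unnecessary) detour through $M_2 \ge d \ge 1$ to show that subtracting $I$ cannot increase the norm beyond $M_2$.
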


\begin{proof}
    Denote our sample of drawn points as $x_1, x_2, ..., x_m$.
    Define $\forall i:~ y_i = \Sigma^{-\nicefrac{1}{2}}x_i$ so that $\mathbb{E}[y_iy_i^T] = I$. This transforms the problem to bounding:
    $
    \| \hat\Sigma_y - I \|_2 \leq \alpha
    $
    where $\hat\Sigma_y = \frac{1}{m}\sum\limits_{i=1}^m y_iy_i^T$.
    Now $\| y_iy_i^T \|_2 = \|y_i\|_2^2 \leq M_1^2$.
    
    Next, define the random deviation $Z$ of the estimator $\hat\Sigma_y$ from the true covariance matrix $I$:
    \[
    Z = \hat\Sigma_y - I = \sum\limits_{i=1}^m Z_i \quad \text{where}~Z_i = \frac{1}{m}(y_iy_i^T - I)
    \]
    The random matrices $Z_i$ are independent, identically distributed, and centered. To apply Fact~\ref{fact:Matrx-Bernestein-Inequality}, we need to find a uniform bound $R$ for the summands, and we need to control the matrix variance statistic $\sigma^2$. First, let us develop a uniform bound on the spectral norm of each summand. We can calculate that:
    \[
    \|Z_i\| = \frac{1}{m}\|y_iy_i^T - I\| \leq \frac{1}{m}(\|y_iy_i^T\| + \|I\|) \leq \frac{M_1+1}{m}
    \]
    Second, we need to bound the matrix variance statistic $\sigma^2$ defined in \ref{fact:Matrx-Bernestein-Inequality}, with $
    \sigma^2 = \|\E[ZZ^T]\| = \|\sum\limits_{i=1}^m \E[Z_iZ_i^T]\|
    $.
    We need to determine the variance of each summand. By direct calculation:
    \begin{align*}
    \E[Z_i Z_i^T] & = \frac{1}{m^2} \E[(y_iy_i^T - I)(y_iy_i^T - I)^T]
     = \frac{1}{m^2} (\E[y_iy_i^Ty_iy_i^T] - I) \preceq \frac{1}{m^2} \E[y_iy_i^Ty_iy_i^T]
    \end{align*}
    \newcommand{\tr}{\text{tr}}

    Then we have:
    \begin{align*}
    \sigma^2 &= \|\sum\limits_{i=1}^m \E[Z_iZ_i^T]\| \leq \|\sum\limits_{i=1}^m  \frac{1}{m^2} \E[y_iy_i^Ty_iy_i^T]\| \leq \frac 1 {m} \|\E[(y_i^Ty_i)y_iy_i^T]\| \leq \frac{M_2}{m}
    \end{align*}
    We now invoke the matrix Bernstein inequality, Fact~\ref{fact:Matrx-Bernestein-Inequality}: 
    \[\Pr[\|\hat\Sigma_y - I\|_2 > \alpha] \leq 2d\cdot\exp\left(-\frac{m\alpha^2/2}{M_2 +  {\alpha(1+M_1^2)/3}}\right) 
    \]
    which is upper bounded by $\beta$ given 
    that $m \geq m(\alpha,\beta) = \max\left\{ \frac{2M_2}{\alpha^2}, \frac{2(1+M_1^2)}{3\alpha} \right\}\cdot \ln(\nicefrac{4d}{\beta})$.
    \end{proof}

\section{More Applications}
\label{sec_apx:more_application}

\subsection{Application: The Uniform Distribution Over Some Convex Ellipsoid}
\label{apx_sec:ellipsoid_dist}
Fix a PSD matrix $0 \preceq A \preceq I$. Consider the uniform distribution $\mathcal{D}$ over the surface of some convex ellipsoid $\mathcal{K} = \{x \in \R^d \mid x^T A^{-1} x = 1\}$. Our goal in this section is to argue that our algorithm is able to approximate the 2nd moment matrix $\Sigma_{\mathcal{D}}$. To that end, we want to determine the size $m$ of a subsample drawn from $\mathcal{D}$, such that with probability at least $1 - \beta$:
$
\| \Sigma_{\mathcal{D}}^{-\nicefrac{1}{2}} \hat\Sigma_{\mathcal{D}} \Sigma_{\mathcal{D}}^{-\nicefrac{1}{2}} - I \|_2 \leq \alpha
$
where $\Sigma_{\mathcal{D}} = \frac{1}{d}A$ is the second moment of $\mathcal{D}$.

To utilize Claim~\ref{clm:concentation_any_dist}, it is necessary to compute the bounds $M_1$ and $M_2$. First, note that if $x$ is drawn from the surface of $\mathcal{K}$ then $\|\Sigma_\D^{-1/2}x\| = \sqrt d \|y\|$ for unit-length vector $y$, implying $M_1 = \sqrt d$. Second, consider $y = \Sigma_\D^{-1/2}x$ and observe that:  
\begin{align*}
\E[yy^T yy^T] &= \E[\Sigma_{\mathcal{D}}^{-\nicefrac{1}{2}} xx^T \Sigma_{\mathcal{D}}^{-1} xx^T \Sigma_{\mathcal{D}}^{-\nicefrac{1}{2}}] = d^2 \cdot \E[A^{-\nicefrac{1}{2}} xx^T A^{-1}xx^T A^{-\nicefrac{1}{2}}] \cr &\overset{(\ast)}{=} d^2 \cdot \E[A^{-\nicefrac{1}{2}} xx^T A^{-\nicefrac{1}{2}}] = d^2 \cdot A^{-\nicefrac{1}{2}} \E[xx^T] A^{-\nicefrac{1}{2}} \overset{(\ast\ast)}{=} dI
\end{align*}
where inequality \( (\ast) \) follows from the fact that \( x^T A^{-1} x = 1 \) for all \( i \) and equality $(\ast\ast)$ follows since $\E[x_ix_i^T] = \frac 1 dA$.

Hence we have $\|\E[y_iy_i^T y_iy_i^T]\| \leq d=M_2$, and we conclude that $\mathcal{D}$ is $(O\left(\frac{d}{\alpha^2}\cdot \ln(\nicefrac{4d}{\beta})\right), \alpha, \beta)$-subsamplable.


Recall that we $(1\pm\gamma)$-approximate the 2nd-moment matrix \emph{of the input} w.p.$\geq 1 -\xi$. Thus, we need the input itself to be a $(1\pm\gamma)$-approximation of the 2nd-moment matrix of the distribution. This means our algorithm requires
\[  m(\gamma,\xi) +  O\left(\frac{m(\alpha,\beta)}{\gamma}\sqrt{\frac d \rho \cdot \log(\frac {1}{\lambda_{\min}})}\log(\nicefrac{\log(\frac {1}{\lambda_{\min}})} \xi) \right)\]
for $\alpha = \nicefrac{1}{2}$ and $\beta = O(\frac 1 {\log(1/\lambda_{\min})})$ in order to return a $(1\pm O(\gamma))$-approximation of the 2nd moment of the distribution w.p. $\geq 1- O(\xi)$.

Plugging the $m(\alpha, \beta)$ of $\mathcal{D}$ we conclude that in order to return a $(1\pm O(\gamma))$-approximation of the 2nd moment of $\mathcal{D}$ w.p. $\geq 1- O(\xi)$ our sample complexity ought to be

\[  m(\gamma,\xi) +  O\left(\frac{m(\frac 1 2,\frac{1}{\log(1/\lambda_\text{min})})}{\gamma}\sqrt{\frac d \rho \cdot \log(1/\lambda_\text{min})}\cdot\log(\frac{\log(1/\lambda_\text{min})} \xi) \right) = 
\tilde O\left( \frac{d}{\gamma^2} + \frac{d^{3/2}}{\gamma\sqrt\rho}    \right) \]

While, for comparison, our baseline algorithm requires
\[
\tilde O\left(\frac{d\cdot m(\gamma, \xi)} {\epsilon\gamma}
   \right) = \tilde O\left(\frac{d^2}{\epsilon\gamma^3}\right)
\]
in order to return a $(1\pm O(\gamma))$-approximation of the 2nd moment of the distribution w.p. $\geq 1- O(\xi)$.

\subsection{Examples of Heavy-Tailed Distributions}
\label{subsec:pareto}
    
The above discussion holds for a general distribution. Next we demonstrate our algorithm’s performance a few heavy-tailed distributions. However, we also emphasize that many more applications are possible, since the subsamplability assumption is broad and encompasses a wide range of input distributions.

\paragraph{The Truncated Pareto Distribution.}
Throughout we use the following distribution truncated Pareto distribution, denoted $\mathcal{P}_6$, that is supported on the interval $[1,B]$ for some $B>1$ (say $B=10$), and whose PDF is $\propto x^{-6}$. Formally, its PDF is \[
f(x) =
\begin{cases}
\frac{5B^5}{B^5-1}x^{-6} & \text{if } 1 \leq x \leq B \\
0 & \text{if } x < 1 \text{ or } x > B
\end{cases}
\]
so that $f$ integrates to $1$. Simple calculations show that $\mu_6 \overset{\rm def}= \E\limits_{\lambda\sim {\cal P}_6}[\lambda] = \frac 5 4 \cdot \frac{B(B^4-1)}{B^5-1}$, that $\sigma_6^2 \overset{\rm def}= \E\limits_{\lambda\sim {\cal P}_6}[\lambda^2] = \frac 5 3 \cdot \frac{B^2(B^3-1)}{B^5-1}$ and that
$\E\limits_{\lambda\sim {\cal P}_6}[\lambda^4] = 5 \cdot \frac{B^4(B-1)}{B^5-1}$.

We consider here two distributions composed of a $\lambda\sim {\cal P}_6$ and $v\in_R \mathcal{S}^{d-1}$. The first is $\lambda v$, namely a vector with direction distributed uniformly over the unit sphere and magnitude distributed according to the ${\cal P}_6$ distribution; and the second is $\lambda \circ v$, namely a $(d+1)$-dimensional vector with first coordinate drawn from ${\cal P}_6$, concatenated with a uniformly chosen vector from the unit sphere on the remaining $d$ coordinates.

\paragraph{The $\lambda v$ Distribution.} 
Define the random variable $x = \lambda v$, where $v$ is uniformly distributed from the unit sphere, and $\lambda\sim \mathcal{P}_6$. 
Let $\mathcal{D}$ be the distribution of $x$. Our goal is to argue that our algorithm is able to approximate the 2nd moment matrix $\Sigma_{\mathcal{D}}$ and outperforms the baseline(s). To that end, we want to determine the size $m(\alpha,\beta)$ of a subsample drawn from $\mathcal{D}$ so that with probability at least $1 - \beta$:
$
\| \Sigma_{\mathcal{D}}^{-\nicefrac{1}{2}} \hat\Sigma_{\mathcal{D}} \Sigma_{\mathcal{D}}^{-\nicefrac{1}{2}} - I \|_2 \leq \alpha
$.

To utilize Claim~\ref{clm:concentation_any_dist}, 
we compute $\Sigma_{\mathcal{D}} = \E[\lambda v (\lambda v)^T] = \E[\lambda^2] \E[vv^T] = \frac{\sigma_6^2}{d}I
$,
since $\E[vv^T] = \frac{1}{d}I$.
It follows that $y=\Sigma_{\cal D}^{-1/2} x = \frac{\sqrt d}{\sigma_6}\lambda v$, so we can bound $\|y\|=M_1\overset{\rm def}=\frac{B\sqrt d}{\sigma_6}$.
Lastly, it is necessary to compute \( M_2 \). To this end, we evaluate the expectation:  
\begin{align*}
\E[\Sigma_{\mathcal{D}}^{-\nicefrac{1}{2}} x_ix_i^T \Sigma_{\mathcal{D}}^{-1} x_ix_i^T \Sigma_{\mathcal{D}}^{-\nicefrac{1}{2}}] &= \E[(\frac{B^5-1}{B^2(B^3-1)})^2 \cdot (\frac{3d}{5})^2 \cdot \lambda v(\lambda v)^T \cdot \lambda v(\lambda v)^T] \cr &= \frac{9d^2}{25}(\frac{B^5-1}{B^2(B^3-1)})^2 \cdot \E[\lambda^4] \cdot \E[vv^Tvv^T] \overset{(\ast)}{=} \frac{9d^2}{25}(\frac{B^5-1}{B^2(B^3-1)})^2 \cdot \E[\lambda^4] \cdot \E[vv^T] \cr &\overset{(\ast\ast)}= \frac{9d^2}{25}(\frac{B^5-1}{B^2(B^3-1)})^2 \cdot \frac{5B^4(B-1)}{B^5 - 1} \cdot \frac{1}{d}I \leq 2dI
\end{align*}
where \( (\ast) \) follows from $v^Tv = 1$ and $(\ast\ast)$ holds since $\E[\lambda^4] =  \frac{5B^4(B-1)}{B^5 - 1}$ and $\E[vv^T] = \frac{1}{d}I$.
Hence we have $\|\E[y_iy_i^T y_iy_i^T]\| \leq 2d = M_2$.
We now plug-in our values of $B$, $\lambda_{\min}$ and $M$ and infer this distribution is $m(\alpha,\beta)$-sampleable for  
$m = O(\max\{\frac{d}{\alpha^2}, \frac{dB^2}{\alpha}\}\ln(\nicefrac d \beta))$. Plugging this into~\eqref{eq:sample_complexity} we conclude that in order to return a $(1\pm O(\gamma))$-approximation of the 2nd moment of $\mathcal{D}$ w.p. $\geq 1- O(\xi)$ our sample complexity ought to be

\[  m(\gamma,\xi) +  O\left(\frac{m(\frac 1 2,\frac{1}{\log(Bd)})}{\gamma}\sqrt{\frac d \rho \cdot \log(Bd)}\cdot\log(\frac{\log(Bd)} \xi) \right) = 
\tilde O\left( \max\{ \frac{d}{\gamma^2}, \frac{dB^2}{\gamma}\} + \frac{dB^2}{\gamma}\sqrt{\frac{d}\rho}    \right) \]

While, for comparison, our baseline algorithm requires
\[
\tilde O\left(\frac{d\cdot m(\gamma, \xi)} {\epsilon\gamma}
   \right) = \tilde O\left(\frac d {\epsilon}\max\{ \frac{d}{\gamma^2},\frac{B^2d}{\gamma}\}\right)
\]
in order to return a $(1\pm O(\gamma))$-approximation of the 2nd moment of the distribution w.p. $\geq 1- O(\xi)$.

\remove{
We conclude this section by evaluating another potential baseline~--- the algorithm of~\cite{brown2023fastsampleefficientaffineinvariantprivate}. Their estimator achieves a $(1\pm O(\gamma))$-approximation of the second moment matrix (under the Mahalanobis norm) assuming the input dataset has $\lambda_0$-bounded leverage scores, i.e. $\forall x:~ x^T(\frac{1}{n}XX^T)^{-1} x \leq \lambda_0$, using $\tilde O(\frac{d}{\gamma^2} + \frac{\lambda_0\sqrt{d}}{\gamma\epsilon})$ samples.

We thus compute a bound on the leverage scores of samples drawn from this distribution.  
Specifically, for any sample $x = \lambda v$, we have:
\[
x^T\left(\frac{1}{n}XX^T\right)^{-1} x = \lambda^2 v^T \left(\frac{1}{n}XX^T\right)^{-1} v \approx \lambda^2 v^T \Sigma_{\mathcal{D}}^{-1} v = C \cdot \lambda^2 d
\]
for some constant $C > 0$.

\osnote{There's an issue here: if $B>n^t$ it also affects our sample complexity. Or, alternatively, the bound on the leverage scores is clearly $B^2d$ so we end up getting the same sample complexity...}

We now analyze the probability that a sample exhibits a high leverage score by evaluating the tail probability:
\[
\Pr[\lambda^2 > n^t] = \int_{n^t}^{B} \frac{B^5}{B^5-1} \cdot 5z^{-6} \,dz = \left[-\frac{B^5}{B^5-1} \cdot z^{-5} \right]_{n^t}^{B} \geq \frac{B^5}{B^5-1} \cdot n^{-5t} \geq n^{-5t}
\]
It follows that the expected number of samples with $\lambda^2 > n^t$ is at least $n^{1 - 5t}$. For instance, setting $t = 0.18$, we conclude that at least $n^{0.1}$ samples exhibit leverage scores exceeding $C \cdot n^{0.18} d$. As a result, the bound $\lambda_0$ assumed in their analysis must be at least $C \cdot n^{0.18} d$ on such samples. Substituting this into the sample complexity expression, their algorithm would require
\[
\tilde{O}\left( \frac{d}{\gamma^2} + \frac{d^{\frac{3}{2 \cdot 0.82}}}{(\gamma \epsilon)^{\frac{1}{0.82}}} \right)
\]
samples to obtain a $(1 \pm O(\gamma))$-approximation under these conditions. This highlights that the reliance on uniformly bounded leverage scores imposes significant limitations when applied to heavy-tailed or outlier-sensitive distributions such as Pareto, where such bounds may not naturally hold.
}

\paragraph{The $\lambda\circ v$ Distribution.} 
Consider the $(d+1)$-dimensional distribution $\mathcal{D}$ where the first coordinate is drawn from the truncated Pareto distribution $\mathcal{P}_6$ and the remaining coordinates are drawn uniformly over the unit sphere $\mathcal{S}^{d-1}$: I.e.~ $x = \begin{bmatrix} \lambda \\ v \end{bmatrix} \sim \mathcal{D}$.
Our goal in this section is to argue that our algorithm is able to approximate the 2nd moment matrix $\Sigma_{\mathcal{D}}$. To that end, we want to determine the size $m$ of a subsample drawn from $\mathcal{D}$, such that with probability at least $1 - \beta$:
$
\| \Sigma_{\mathcal{D}}^{-\nicefrac{1}{2}} \hat\Sigma_{\mathcal{D}} \Sigma_{\mathcal{D}}^{-\nicefrac{1}{2}} - I \|_2 \leq \alpha
$.

First it is easy to see that the $L_2$-norm of any $x\sim D$ is at most $B+1$. Next, we compute $\Sigma_{\mathcal{D}}$:
\[
\Sigma_{\mathcal{D}} = \E[\begin{bmatrix} \lambda \\ u \end{bmatrix} \begin{bmatrix} \lambda & u^T \end{bmatrix}] = \E[\begin{bmatrix}
\lambda^2 & \lambda u^T \\
\lambda u & u u^T
\end{bmatrix}] = \begin{bmatrix}
\sigma_6^2 & \overline{0}^T \\
\overline{0} & \frac{1}{d}I
\end{bmatrix}
\] 
It follows that the vector $y = \Sigma^{-\nicefrac 1 2}_{\mathcal{D}}x = \begin{bmatrix}
\sigma_6^{-1} & \overline{0}^T \\
\overline{0} & \sqrt{d}I
\end{bmatrix}\begin{bmatrix}
    \lambda \\ u
\end{bmatrix}=\begin{bmatrix}
    \sigma_6^{-1} \lambda \\
    \sqrt{d} u
\end{bmatrix}$, so its norm is upper bounded by $M_1 = \sqrt{d+\sigma_6^{-2}B^2}$.
So now we can evaluate the expectation:  
\begin{align*}
\E[yy^T yy^T] &= \E[\|y\|^2 yy^T] = \E[ (\sigma_6^{-2}\lambda^2 + d )  \begin{bmatrix}
    \sigma_6^{-2}\lambda^2 & \sigma_6^{-1}\sqrt d \lambda u^T \\
    \sigma_6^{-1}\sqrt d \lambda u & duu^T
\end{bmatrix}  ]
\cr &= \begin{bmatrix}
    \sigma_6^{-4}\E[ \lambda^4 ] +d\sigma_6^{-2}\E[\lambda^2] & 0 \\
    0 & (\sigma_6^{-2}\E[\lambda^2] + d)I_d
\end{bmatrix} = \begin{bmatrix}
    \sigma_6^{-4} \E[\lambda^4] + d & 0 \\
    0 & (d+1)I_d
\end{bmatrix} \cr&= d I_{d+1} + \begin{bmatrix}
    \sigma_6^{-4}\E[ \lambda^4 ] & 0 \\
    0 & I_d
\end{bmatrix}
\end{align*}
seeing as $\sigma_6^{-4}\E[ \lambda^4 ] \approx \frac 9 5 = O(1)$ we can infer that $M_2 = \|\E[yy^Tyy^T] \|=O(d)$.

We now plug-in our values of $B$, $\lambda_{\min}$ and $M$ and infer this distribution is $m(\alpha,\beta)$-samplable for  
$m = O(\max\{\frac{d}{\alpha^2}, \frac{d+B^2}{\alpha}\}\ln(\nicefrac d \beta))$. Plugging this into~\eqref{eq:sample_complexity} we conclude that in order to return a $(1\pm O(\gamma))$-approximation of the 2nd moment of $\mathcal{D}$ w.p. $\geq 1- O(\xi)$ our sample complexity ought to be
\[  m(\gamma,\xi) +  O\left(\frac{m(\frac 1 2,\frac{1}{\log(Bd)})}{\gamma}\sqrt{\frac d \rho \cdot \log(Bd)}\cdot\log(\frac{\log(Bd)} \xi) \right) = 
\tilde O\left( \max\{ \frac{d}{\gamma^2}, \frac{d+B^2}{\gamma}\} + \frac{d+B^2}{\gamma}\sqrt{\frac{d}\rho}    \right) \]
The analysis suggests we can even set $B = O(\sqrt{d})$  and get a sample complexity of $\tilde O\left(  \frac{d}{\gamma^2} + \frac{d^{3/2}}{\gamma\sqrt{\rho}}    \right)$.

While, for comparison, our baseline algorithm requires
\[
\tilde O\left(\frac{d\cdot m(\gamma, \xi)} {\epsilon\gamma}
   \right) = \tilde O\left(\frac d {\epsilon}\max\{ \frac{d}{\gamma^3},\frac{B^2+d}{\gamma^2}\}\right) \overset {\textrm{if $B\leq \sqrt{d}$}}= \tilde O(\frac{d^2}{\gamma^3\epsilon})
\]
in order to return a $(1\pm O(\gamma))$-approximation of the 2nd moment of the distribution w.p. $\geq 1- O(\xi)$.

\remove{
We conclude this section by evaluating another potential baseline~--- the algorithm of~\cite{brown2023fastsampleefficientaffineinvariantprivate}. Their estimator achieves a $(1\pm O(\gamma))$-approximation of the second moment matrix (under the Mahalanobis norm) assuming the input dataset has $\lambda_0$-bounded leverage scores, i.e. $\forall x:~ x^T(\frac{1}{n}XX^T)^{-1} x \leq \lambda_0$, using $\tilde O(\frac{d}{\gamma^2} + \frac{\lambda_0\sqrt{d}}{\gamma\epsilon})$ samples.

We thus compute a bound on the leverage scores of samples drawn from this distribution.  
Specifically, for any sample $x = \lambda v$, we have:
\[
x^T\left(\frac{1}{n}XX^T\right)^{-1} x = \begin{bmatrix}\lambda & v^T\end{bmatrix} \left(\frac{1}{n}XX^T\right)^{-1} \begin{bmatrix} \lambda \\ v \end{bmatrix}\approx  \begin{bmatrix}\lambda & v^T\end{bmatrix} \Sigma_{\mathcal{D}}^{-1}\begin{bmatrix} \lambda \\ v \end{bmatrix}  = \sigma_6^{-2}\lambda^2 + d
\]

\osnote{There's an issue here: if $B>n^t$ it also affects our sample complexity. Or, alternatively, the bound on the leverage scores is clearly $B^2d$ so we end up getting the same sample complexity...}

We now analyze the probability that a sample exhibits a high leverage score by evaluating the tail probability:
\[
\Pr[\lambda^2 > n^t] = \int_{n^t}^{B} \frac{B^5}{B^5-1} \cdot 5z^{-6} \,dz = \left[-\frac{B^5}{B^5-1} \cdot z^{-5} \right]_{n^t}^{B} \geq \frac{B^5}{B^5-1} \cdot n^{-5t} \geq n^{-5t}
\]
It follows that the expected number of samples with $\lambda^2 > n^t$ is at least $n^{1 - 5t}$. For instance, setting $t = 0.18$, we conclude that at least $n^{0.1}$ samples exhibit leverage scores exceeding $C \cdot n^{0.18} d$. As a result, the bound $\lambda_0$ assumed in their analysis must be at least $C \cdot n^{0.18} d$ on such samples. Substituting this into the sample complexity expression, their algorithm would require
\[
\tilde{O}\left( \frac{d}{\gamma^2} + \frac{d^{\frac{3}{2 \cdot 0.82}}}{(\gamma \epsilon)^{\frac{1}{0.82}}} \right)
\]
samples to obtain a $(1 \pm O(\gamma))$-approximation under these conditions. This highlights that the reliance on uniformly bounded leverage scores imposes significant limitations when applied to heavy-tailed or outlier-sensitive distributions such as Pareto, where such bounds may not naturally hold.
}

\remove{
\subsection{Application: Gaussian with Outliers}
\label{apx_sec:gaussian-with-outliers_dist}
Fix a PSD matrix $0\preceq \Sigma_N \preceq I$. Consider the distribution $\mathcal{D} = (1-\eta)\cdot\mathcal{N}(0, \Sigma_{\mathcal{N}}) + \eta\cdot\mathcal{U}$ where $\mathcal{U}$ is the uniform distribution over the unit sphere and $\eta\in (0,1)$ is a parameter. Our goal in this section is to argue that if $\eta$ is not too big, then our algorithm is able to approximate the 2nd moment matrix $\Sigma_{\mathcal{N}}$. To that end, we want to determine the size $m$ of a subsample drawn from $\mathcal{D}$, such that with probability at least $1 - \beta$:
$
\| \Sigma_{\mathcal{D}}^{-\nicefrac{1}{2}} \hat\Sigma_{\mathcal{D}} \Sigma_{\mathcal{D}}^{-\nicefrac{1}{2}} - I \|_2 \leq \alpha
$
where $\Sigma_{\mathcal{D}}$ is the second moment of $\mathcal{D}$:
\[
\Sigma_{\mathcal{D}} = (1-\eta)\Sigma_{\mathcal{N}} + \eta\Sigma_{\mathcal{U}} = (1-\eta)\Sigma_{\mathcal{N}} + \frac \eta d I
\]
where $\Sigma_{\mathcal{U}} = \frac{1}{d} I$ is the covariance of the uniform distribution on the unit sphere.

We first show that a $m$-size subsample drawn from $\mathcal{D}$ is indeed w.h.p.~drawn from $\mathcal{N}$. Denote $\hat\Sigma_{\mathcal{D}}$ and $\hat\Sigma_{\mathcal{N}}$ the empirical second moment matrix of a subsample from $\mathcal{D}$ and $\mathcal{N}$ respectively, then, if $\eta$ is sufficiently small we get:
\[
\Pr[\hat\Sigma_{\mathcal{D}} = \hat\Sigma_{\mathcal{N}}] = (1-\eta)^m \geq e^{-m\eta^2 - m\eta} \overset{\eta \leq \frac{\beta}{2m}}{\geq} e^{-\beta} \geq 1 - \beta
\]
Thus we need to show that $\| \Sigma_{\mathcal{D}}^{-\nicefrac{1}{2}} \hat\Sigma_{\mathcal{N}} \Sigma_{\mathcal{D}}^{-\nicefrac{1}{2}} - I \|_2 \leq \alpha$. Indeed it is sufficient to show that $\text{dist}(\hat\Sigma_\mathcal{N}, \Sigma_\mathcal{D}) \leq \alpha$, using the \emph{dist} function introduced in the baseline (see Section~\ref{subsec:base-SMDP} and Appendix~\ref{apx_sec:base-SMDP}):
\[ \text{dist}(M_1,M_2) = \max_{i\neq j\in \{1,2\}}\| M_i^{-1/2} M_j M_i^{-1/2}- I\|   \]
We show it by proving both $\text{dist}(\hat\Sigma_\mathcal{N}, \Sigma_\mathcal{N}) \leq \nicefrac{\alpha}{3}$ and $\text{dist}(\Sigma_\mathcal{N}, \Sigma_\mathcal{D}) \leq \nicefrac{\alpha}{3}$. Once we proved both, we can use the $(\nicefrac{3}{2})$-approximate $1$-restricted triangle inequality of the dist function (see Lemma~\ref{lem:convex-semimetric}) to get $\text{dist}(\hat\Sigma_\mathcal{N}, \Sigma_\mathcal{D}) \leq \alpha$.

First, we use the following fact:
\begin{fact}\label{fact:Gaussian-Cov-Concentration}
    Let $X_1, \dots, X_n \sim \mathcal{N}(0, I)$ i.i.d. and let
    $
    \hat\Sigma = \frac{1}{n} \sum\limits_{i=1}^n X_iX_i^T
    $.
    Then for every $\beta > 0$, with probability $\geq 1 - O(\beta)$:
    
    \resizebox{0.5\textwidth}{!}{
    $
    \left(1 - O\left(\sqrt{\frac{d + \log(\nicefrac{1}{\beta})}{n}}\right)\right)\cdot I \preceq \hat\Sigma_Y \preceq \left(1 + O\left(\sqrt{\frac{d + \log(\nicefrac{1}{\beta})}{n}}\right)\right)\cdot I
    $
    }
\end{fact}

\noindent Using the pure Gaussian case in the fact above, we know that w.p. $\geq 1 - O(\beta)$:
\[
\| \Sigma_{\mathcal{N}}^{-\nicefrac{1}{2}} \hat\Sigma_{\mathcal{N}} \Sigma_{\mathcal{N}}^{-\nicefrac{1}{2}} - I \|_2 \leq \sqrt{\frac{d + \log(\nicefrac{1}{\beta})}{m}} \overset{(\ast)}{\leq} \nicefrac{\alpha}{12}
\]
where $(\ast)$ holds for $m \geq O(\frac{d + \log(\nicefrac{1}{\beta})}{\alpha^2})$. Using Fact~\ref{fact:matrices} we can instantly conclude that $\text{dist}(\hat\Sigma_\mathcal{N}, \Sigma_\mathcal{N}) \leq \nicefrac{\alpha}{3}$.

Second, we show that $\| \Sigma_{\mathcal{N}}^{-\nicefrac{1}{2}} \Sigma_{\mathcal{D}} \Sigma_{\mathcal{N}}^{-\nicefrac{1}{2}} - I \|_2 \leq \nicefrac{\alpha}{12}$:
\begin{align*}
    &\| \Sigma_{\mathcal{N}}^{-\nicefrac{1}{2}} \Sigma_{\mathcal{D}} \Sigma_{\mathcal{N}}^{-\nicefrac{1}{2}} - I \|_2  
    \cr 
    & ~~~= \| \Sigma_{\mathcal{N}}^{-\nicefrac{1}{2}} ((1-\eta)\Sigma_{\mathcal{N}} + \eta\Sigma_{\mathcal{U}}) \Sigma_{\mathcal{N}}^{-\nicefrac{1}{2}} - I \|_2
    \cr &~~~ = \| \Sigma_{\mathcal{N}}^{-\nicefrac{1}{2}} ((1-\eta)\Sigma_{\mathcal{N}} + \frac{\eta}{d}I) \Sigma_{\mathcal{N}}^{-\nicefrac{1}{2}} - I \|_2
    \cr &~~~ = \| (1-\eta)I + \frac{\eta}{d}\Sigma_\mathcal{N}^{-1} - I\|_2 = \| \frac{\eta}{d}\Sigma_\mathcal{N}^{-1} - \eta I\|_2
    \cr &~~~ = \frac{\eta}{d\lambda_{min}} - \eta \overset{(\ast\ast)}{\leq} \nicefrac{\alpha}{12}
\end{align*}

where $(\ast\ast)$ holds for $\eta \leq \frac{d\alpha\lambda_{min}}{12}$ and assuming $\frac{\eta}{d\lambda_{min}}$ is the dominant factor.
Again, using Fact~\ref{fact:matrices} we can instantly conclude that $\text{dist}(\Sigma_\mathcal{N}, \Sigma_\mathcal{D}) \leq \nicefrac{\alpha}{3}$.

Finally, we can use the $(\nicefrac{3}{2})$-approximate $1$-restricted triangle inequality of the dist function and get:
$
\text{dist}(\hat\Sigma_\mathcal{N}, \Sigma_\mathcal{D}) \leq \frac{3}{2}(\text{dist}(\hat\Sigma_\mathcal{N}, \Sigma_\mathcal{N}) + \text{dist}(\Sigma_\mathcal{N}, \Sigma_\mathcal{D})) \leq \frac{3}{2}(\nicefrac{\alpha}{3} + \nicefrac{\alpha}{3}) = \alpha
$
which implies that $\| \Sigma_{\mathcal{D}}^{-\nicefrac{1}{2}} \hat\Sigma_{\mathcal{N}} \Sigma_{\mathcal{D}}^{-\nicefrac{1}{2}} - I \|_2 \leq \alpha$.

We conclude that $\mathcal{D}$ is $(O\left(\frac{d + \log(\nicefrac{1}{\beta})}{\alpha^2}\right), \alpha, O(\beta))$-subsamplable for $\eta \leq \min\{\frac{d\alpha\lambda_{min}}{12}, \frac{\beta}{2m}\}$.

Plugging the $m(\alpha, \beta)$ of $\mathcal{D}$ we conclude that in order to return a $(1\pm O(\gamma))$-approximation of the 2nd moment of $\mathcal{D}$ w.p. $\geq 1- O(\xi)$, provided $\eta \leq \min\{\frac{d\alpha\lambda_{min}}{12}, \frac{\beta}{2m}\}$, our sample complexity ought to be

\[  m(\gamma,\xi) +  O\left(\frac{m(\frac 1 2,\frac{1}{\log(1/\lambda_\text{min})})}{\gamma}\sqrt{\frac d \rho \cdot \log(1/\lambda_\text{min})}\cdot\log(\frac{\log(1/\lambda_\text{min})} \xi) \right) = 
\tilde O\left( \frac{d}{\gamma^2} + \frac{d}{\gamma}\sqrt{\frac{d}\rho}    \right) \]

While, for comparison, our baseline algorithm requires
\[
\tilde O\left(\frac{d\cdot m(\gamma, \xi)} {\epsilon\gamma}
   \right) = \tilde O\left(\frac{d^2}{\epsilon\gamma^3}\right)
\]
in order to return a $(1\pm O(\gamma))$-approximation of the 2nd moment of the distribution w.p. $\geq 1- O(\xi)$.
}

\end{document}